\newtheorem*{lemma_empt}{Lemma} 
\newtheorem*{theorem_empt}{Theorem} 
\definecolor{algcolor}{RGB}{150,80,0}
\definecolor{mydarkgreen}{RGB}{0,160,0}
\definecolor{mydarkred}{RGB}{170,20,20}
\definecolor{mydarkblue}{RGB}{20,20,170}
\newcommand{\mygreen}{\color{mydarkgreen}}
\newcommand{\myred}{\color{mydarkred}}
\newcommand{\myblue}{\color{mydarkblue}}
\definecolor{bgcolor}{rgb}{0.8,1,1}
\definecolor{bgcolor2}{rgb}{0.8,1,0.8}
\newcommand{\cstep}{{\myred\gamma}} 
\newcommand{\cstepsquared}{{\myred\gamma^2}} 
\newcommand{\cstepcubed}{{\myred\gamma^3}} 
\newcommand{\sstep}{{\mygreen\eta}} 
\newcommand{\sstepsquared}{{\mygreen\eta^2}} 
\newcommand{\algname}[1]{{\color{algcolor}\sf #1}\xspace}
\newcommand{\squeeze}{\textstyle} 
\newcommand{\ko}[1]{\todo[inline]{{\textbf{Konstantin:} \emph{#1}}}}
\begin{document}

%

%

\twocolumn[

\aistatstitle{Server-Side Stepsizes and Sampling Without Replacement \\ Provably Help in Federated Optimization}

\aistatsauthor{Grigory Malinovsky \And Konstantin Mishchenko \And  Peter Richt\'arik }

\aistatsaddress{ KAUST \And  KAUST and Inria Sierra \And KAUST } ]

\begin{abstract}
We present a theoretical study of server-side optimization in federated learning. Our results are the first to show that the widely popular heuristic of scaling the client updates with an extra parameter is very useful in the context of Federated Averaging (FedAvg) with local passes over the client data. Each local pass is performed without replacement using Random Reshuffling, which is a key reason we can show improved complexities. In particular, we prove that whenever the local stepsizes are small, and the update direction is given by FedAvg in conjunction with Random Reshuffling over all clients, one can take a big leap in the obtained direction and improve rates for convex, strongly convex, and non-convex objectives. In particular, in non-convex regime we get an enhancement of the  rate of convergence from $\mathcal{O}\left(\varepsilon^{-3}\right)$ to $\mathcal{O}\left(\varepsilon^{-2}\right)$. This result is new even for Random Reshuffling performed on a single node. In contrast, if the local stepsizes are large, we prove that the noise of client sampling can be controlled by using a small server-side stepsize. To the best of our knowledge, this is the first time that local steps provably help to overcome the communication bottleneck. Together, our results on the advantage of large and small server-side stepsizes give a formal justification for the practice of adaptive server-side optimization in federated learning. Moreover, we consider a variant of our algorithm that supports partial client participation, which makes the method more practical.
\end{abstract}

\section{Introduction}

The unprecedented industrial success of modern machine learning techniques, tools and  models can to a large degree be attributed to the abundance of data available for training. Indeed, the most popular and best performing  deep learning models rely on a very large number of parameters, and in order to generalize well, need to be trained using optimization algorithms over very large training datasets. Other things equal, the more data we have, the better. A key driving force behind the proliferation of such data is the massive digitization of society of the last few decades. People have access to increasingly more elaborate personal and home smart devices capable of generating, capturing and processing data such as text, images and videos. Similarly, in the sphere of governments and  corporations, much of what used to be done through a physical exchange (e.g., via paper/fax/letter) is now performed in a digital form, generating treasure troves of potentially useful data. For example, hospitals  collect, store and make us of a variety of patient data, ranging from routine bodily functions to PET scans and genome sequencing.

\subsection{Federated learning}

The traditional way of learning from this data is to collect it in a single (and often proprietary) data center, where it is subsequently  processed  using  modern machine learning algorithms. However, due to several considerations which keep gaining in importance, such as energy efficiency and privacy, it is often desirable to avoid centralized training altogether, and instead perform the training without the data ever leaving the clients' secure sites. Introduced in 2016 by \citet{konecny2016federated, konevcny2016federated_1,  mcmahan2017communication}, this is precisely the promise and subject of study of {\em federated learning (FL)}.
In other words, federated learning means efficient machine learning over data stored in a distributed fashion across a network of heterogeneous clients (e.g., mobile phones, smart devices, companies) that captured and own the data, using these clients' machines/devices not only as data sources, but also as computers that  contribute to the training. 

\subsection{Problem formulation}

We consider the  standard optimization formulation of federated learning
\begin{equation}\label{eq:main}
\squeeze	\min \limits_{x \in \mathbb{R}^{d}} \left[f(x)\eqdef\frac{1}{M} \sum\limits_{m=1}^{M} f_{m}(x)\right],
\end{equation}
where $M$ is the total number of clients, $x\in \R^d$ represents the parameters of the model we wish to train, and $f_m:\R^d \to \R$ is the loss of model $x$ on the training data owned by client $m \in [M]\eqdef \{1,2,\dots,M\}$. Typically, $M$ is very large. 

Since the training dataset on each client is necessarily finite, we assume that $f_m$ has the finite-sum structure
\begin{equation}\label{eq:local_sum}
\squeeze f_m(x) \eqdef \frac{1}{n}\sum\limits_{i=1}^{n}f^i_{m}(x),
\end{equation}
where $f^i_m:\R^d \to \R$ is the loss of model $x$ on training example $i \in [n]\eqdef \{1,2,\dots,n\}$ stored on client $m$. We assume that the functions $f^i_m$ are differentiable, and consider the strongly convex, convex and non-convex regimes. 

\subsection{Ingredients of successful federated learning methods}

Practical considerations of federated learning systems and vast experimental evidence accrued over the last few years point to several design constraints and algorithmic ingredients which have proved useful in the context of federated learning methods for solving \eqref{eq:main}-\eqref{eq:local_sum}. We now very briefly outline some of them. More details can be found in the appendix where we review related work.

{\bf Partial participation.} In federated learning, training is performed through several communication rounds in each of which an orchestrating server chooses a {\em cohort} of clients that will be participating in the training process in that round. This practice is known as {\em partial participation}, and is necessary due to practical considerations and limitations, such as limited server capacity, and limited client availability \citep{kairouz2019advances}. However, partial participation can be useful also due to the diminishing returns one gets as the number of participating clients grows \citep{Cohort2021}.
Partial participation is a necessity  in the cross-device regime where the training is  performed over a very large number of clients (i.e., $M$ is very large) most  of which will only participate in the entire training procedure at most once. Sampling of clients to form a cohort can be done adaptively so as to choose the most informative  clients~\citep{chen2020optimal}.

{\bf Local training.} At the beginning of each communication round, each client in the cohort is provided with the latest model by the orchestrating server, which is used as a starting point for {\em local training}. Local training refers to the common practice in FL of  performing several steps of a suitably chosen local optimization procedure, such as one of the many variants of \algname{SGD}, using its own local training data. Perhaps the simplest approach is to perform a single local \algname{GD} iteration. If the model updates are simply just aggregated by the server, then the resulting method can be seen as \algname{Minibatch SGD}, where the minibatches correspond to the cohorts. However, it is typically more efficient to perform {\em multiple} local steps \citep{mcmahan2017communication}, and to use local optimizers that rely on {\em incremental}  data processing, such as \algname{SGD}.

{\bf Data shuffling.}
 Typically, the local training dataset is processed once or several times in an incremental fashion; that is, one data point (or one small minibatch) at a time. However, experimental evidence shows that processing the local data
{\em without replacement} can lead to substantially better results than processing the data {\em with replacement}. In particular, processing the local training data in an order dictated by a random permutation---a technique known as  Random Reshuffling (\algname{RR})---is often set as default in modern deep learning and federated learning software ~\citep{Bottou2009CuriouslyFC, bengio2012practical,sun20}. 
This is in sharp contrast with the {\em with-replacement} sampling of data employed by \algname{SGD}. With-replacement sampling ensures that the gradient updates are unbiased, and this simplified the analysis. For this reason, \algname{SGD} is significantly better understood in theory than its better performing but much more poorly understood  cousin \algname{RR}. However, recent results of \citet{MKR2020rr}, and extensions due to \citet{mishchenko2021proximal} and \citet{yun2021minibatch} to distributed training, show that \algname{RR} can have clear theoretical advantages over \algname{SGD}.

{\bf Server stepsizes.} Once local training is finished, the clients in the cohort send their models or model updates to the orchestrating server, which typically aggregates them via averaging. This information is then used to perform {\em server side} optimization. The simplest approach is to do nothing; that is, to  treat the aggregated models as the next global model that is broadcast to the new cohort in the next communication round. However, empirical evidence suggests that it is better to aggregate {\em model updates}, and treat them as gradient-type information which can be injected into a suitably chosen server side optimization routine~\citep{karimireddy2020scaffold}. For example, the server may run one step of \algname{GD} using the aggregated model update as a proxy for the gradient which is not available,  with its own server-side stepsize.  

\begin{table*}[t!]
\begin{center}
  \begin{threeparttable}
    \centering
    {
	\caption{Conceptual comparison of results for FedAvg from prior work with our results. }
  \label{tab:compare_with_others}
	\centering 
	\begin{tabular}{cccccccc}\toprule
		 \makecell{Partial\\ participation}  & \makecell{Local \\ training} & \makecell{Data\\ shuffling} & \makecell{\textbf{Large} server\\ stepsizes help} & \makecell{\textbf{Small} server\\ stepsizes help} &   Reference \\
		\midrule
	\cmark & 	\cmark & \xmark  & \cmark & \xmark &  \cite{karimireddy2020scaffold} \\
		\cmark &	\cmark & \xmark  & \cmark & \xmark &  \cite{woodworth2020minibatch} \\
		\xmark  &	\cmark & \xmark  & \xmark & \xmark  &  \cite{koloskova2020unified} \\
		\xmark &	\cmark & \xmark  & \xmark & \xmark  &  \cite{khaled2020tighter} \\
		\xmark & \cmark & \cmark  & \xmark & \xmark  &  \cite{mishchenko2021proximal}\\
\midrule		
		\cmark & \cmark & \cmark  & \cmark & \cmark  &  \textbf{This paper} \\
		\bottomrule
    \end{tabular}
    }
  \end{threeparttable}
  \end{center}
\end{table*}

{\bf Further useful tricks.} Additional tricks that are often employed in the context of federated learning include the use of compressed communication \citep{Alistarh-EF2018, gorbunov2021marina}, drift reduction \citep{karimireddy2020scaffold,LSGDunified2020}, error compensation~\citep{stich2019, EF21}, server side momentum~\citep{hsu2019measuring}, and adaptive stepsize selection~\citep{reddi2020adaptive}. These techniques are beyond the scope of this paper.

\section{Summary of Contributions}\label{sec:contributions}

Despite the fact that {\em partial participation}, {\em local training}, {\em data shuffling} and {\em server stepsizes} have all been empirically found  to be very useful building blocks of FL methods, most of these techniques are not very well understood in theory even in isolation. Informally speaking, and at the risk of oversimplifying the current state of affairs, we know virtually nothing about {\em server stepsizes}, very little about  {\em data shuffling}, relatively much more about 
{\em local training}, and quite a bit, but still ``not enough'', about {\em partial participation}.

\begin{quote}  The key focus of this paper is to make a substantial advance in the current theoretical understanding of  {\em server stepsizes} in the context of {\em realistic} federated learning.  
\end{quote}

In order to theoretically understand the server stepsize phenomenon in a realistic context of techniques commonly used in FL, we study this phenomenon {\em together} with data shuffling, local training and partial participation. While this makes the analysis substantially harder and different from all\footnote{Except for the recent work of \citet{mishchenko2021proximal} which we used as an inspiration.} existing analyses of \algname{FedAvg}, we believe it is important to do so as this will highlight the {\em  interplay} between these algorithmic techniques and their {\em combined} impact on training.

A brief visual summary of this in the context of selected existing methods is provided in \Cref{tab:compare_with_others}.  We summarize our contributions as follows:

$\bullet$ {\bf New algorithm.} We design a new algorithm, for which we coin the name \algname{Nastya} (Algorithm~\ref{alg:pp-jumping}; see Section~\ref{sec:algorithm}), which combines all the of the aforementioned practical tricks and techniques in a single method: 
{\em partial participation}, {\em local training}, {\em data shuffling} and, most importantly,  {\em server stepsizes}.
In our method, in each communication round $t$, the cohort is chosen as a random subset $S_t$ of the set  $\{1,2,\dots,M\}$ of clients of cardinality $1\leq C \leq M$, chosen uniformly from all subsets of cardinality $C$. Each device performs local training  via a  single pass of incremental \algname{GD} with {\em client stepsize} $\cstep > 0$ over the local training data points in an order dictated by a {\em random permutation}. We allow for two options: i) either the random permutation for all clients is sampled just once and used in all communication rounds ({\em Shuffle-Once} option), or ii) the random permutation is sampled afresh at the start of each communication round ({\em Random-Reshuffling} option). At the end of local training, the updated models are communicated back to the server, which uses these updates to form a {\em gradient estimator}, and applies one step of \algname{GD} using a server stepsize $\sstep > 0$ with this estimator in lieu of the true gradient. The new model is then broadcast to a new cohort in the next communication round, and the process is repeated.

\begin{table*}[t]
	\begin{center}
		\begin{threeparttable}
			\centering
			{\footnotesize
				\caption{Comparison of convergence results for FedAvg from prior work with our results. }
				\label{tab:compare_with_others}
				\centering 
				\begin{tabular}{ccccccc}\toprule
					Method & \makecell{Strongly convex\tnote{(2)}}  & \makecell{Non-convex } &   Reference \\
					\midrule
					\algname{SCAFFOLD} \tnote{(1)}& 	$\mathcal{\tilde{O}}\left(\frac{\sigma^{2}}{\mu M n \epsilon}+\frac{1}{\mu}\right)$   & $\mathcal{O}\left(\frac{\sigma^{2}}{M n \epsilon^{2}}+\frac{1}{\epsilon}\right)$ &  \cite{karimireddy2020scaffold} \\
					\algname{Local SGD} \tnote{(1)} &	$\mathcal{\tilde{O}}\left(\frac{L}{\mu}+\frac{\sigma^{2}}{M \mu \varepsilon}+\sqrt{\frac{L n\left(\sigma^{2}+n \zeta^{2}\right)}{\mu^{2} \varepsilon}}\right)$ \tnote{(3)}    & \xmark  &  \cite{woodworth2020minibatch} \\
					\algname{Local SGD}&	$\tilde{\mathcal{O}}\left(\frac{\sigma_*^{2}}{M \mu \epsilon}+\frac{\sqrt{L}(n \zeta+\sqrt{n} \sigma)}{\mu \sqrt{\epsilon}}+\kappa n\right) $ \tnote{(3)}    & $\mathcal{O}\left(\frac{L \sigma_*^{2}}{M \epsilon^{2}}+\frac{L(n \zeta+\sqrt{n} \sigma)}{\epsilon^{3 / 2}}+\frac{L n}{\epsilon}\right)$\tnote{(3)}    &  \cite{koloskova2020unified} \\
					\algname{FedRR} &	$\mathcal{\tilde{O}}\left(\frac{L}{\mu}+\frac{\sqrt{\kappa n}\left( \sigma_{*}+\sqrt{n}\zeta\right)}{\mu \sqrt{\varepsilon}}\right)$ \tnote{(3)}     & \xmark  &  \cite{mishchenko2021proximal}\\
					\midrule					
					\algname{Nastya} &	$\mathcal{\tilde{O}}\left(\frac{Ln}{\mu}\right)$ & $\mathcal{O}\left(\frac{Ln}{\varepsilon}\right)$     &  \textbf{This paper} \\
					\bottomrule
				\end{tabular}
			}
			\begin{tablenotes}
				{\scriptsize
					\item [(1)] The analysis is done under the bounded variance assumption: $g_{i}(x):=\nabla f_{i}\left(x; \zeta_{i}\right)$ is unbiased stochastic gradient of $f_{i}$ with bounded variance
					$
					\mathbb{E}_{\zeta_{i}}\left[\left\|g_{i}(x)-\nabla f_{i}(x)\right\|^{2}\right] \leq \sigma^{2}, \text { for any } i, x.
					$\\			        
					\item [(2)] The $\tilde{O}$ notation omits $\log \frac{1}{\varepsilon}$ factors \\
					\item[(3)]Here we use $\zeta^2 \eqdef \frac{1}{M}\sum_{m=1}^{M}\|\nabla f_m(x_*)\|^2$.\\
				}
			\end{tablenotes}
		\end{threeparttable}
	\end{center}
\end{table*}

$\bullet$ {\bf Complexity analysis.} We provide strong complexity analysis of our new algorithm for strongly convex (Theorem~\ref{thm:PP-SC}), convex (Theorem~\ref{thm:PP-C}) and non-convex (Theorem~\ref{thm:PP-NC}) functions; see Table~\ref{tab:our_results}. This is the first theory for a variant of \algname{FedAvg} that combines the benefits of partial participation, data shuffling, local training and, most importantly, also {\em server stepsizes}. Most importantly, with a couple exceptions only~\citep{karimireddy2020scaffold,woodworth2020minibatch}, there are no prior theoretical works analyzing the effect of server stepsizes in FL. The methods in the aforementioned works use local training and partial participation, but do not use data shuffling, and are significantly different from ours. 

$\bullet$ {\bf Small client stepsizes, large server stepsizes, and no need for drift reduction.} In particular, Theorems~\ref{thm:PP-SC}, \ref{thm:PP-C} and \ref{thm:PP-NC}, covering the strongly convex, convex and non-convex regimes, respectively,  suggest that the server can use the {\em large} $\cO(1/L)$ stepsize, where $L$ is the Lipschitz constant of the gradient of $f$. In the strongly convex and convex regimes, based on our theory, it is optimal for the client stepsize $\cstep$ to be {\em small}, which completely eliminates the second of the three terms in the complexity bounds (see the third column of Table~\ref{tab:our_results})  which controls the price one pays due to {\em data heterogeneity}. Indeed, our theory allows for the client stepsize $\cstep$ to be small while the server stepsize $\sstep$ can be large (see the second column of  Table~\ref{tab:our_results}).

Note that  in all three regimes, and thanks to the fact that we employ a {\em data shuffling} strategy, this second term depends on the square $\cstepsquared$ of the client stepsize, which means that we can make this term small without making the client stepsizes infinitesimal. So, thanks to \algname{Nastya}'s use of data shuffling strategies, it does {\em not} require any explicit drift reduction technique such as \algname{SCAFFOLD} to handle data heterogeneity~\citep{karimireddy2020scaffold}. 

$\bullet$ {\bf Small server stepsizes can be beneficial.} To the best of our knowledge, no prior theoretical work suggests that it might be beneficial to use {\em small} server stepsizes. Our results (see Theorem~\ref{th:small_alpha}) suggest that this can be the case  when  each $f_m^i$ is strongly convex and smooth, and when the strong convexity parameter is very small.

$\bullet$ {\bf Experimental validation of our theoretical predictions.} We provide experimental examination of \algname{Nastya} and compare it with selected benchmarks. Our goal is not to perform large scale experiments and claim empirical superiority because the algorithmic ingredients embedded in \algname{Nastya} already {\em are} being used in practical FL methods precisely because they have already been empirically found to be useful. This allows us to focus on simple experiments which test the theoretical predictions of our theory. 

Our experimental results confirm our theory, and illustrate the behavior of the methods we test in various settings. Moreover, we go beyond the theory and conduct additional experiments with the adaptive stepsize strategy introduced by~\citet{malitsky2019adaptive}. Inspired by \citet{reddi2020adaptive}, we additionally utilize several server-side optimization subroutines on top of  the local updates.

\begin{figure*}[h]
	\centering
	\begin{tabular}{cc}
		$\!\!\!\!$\includegraphics[scale=0.12]{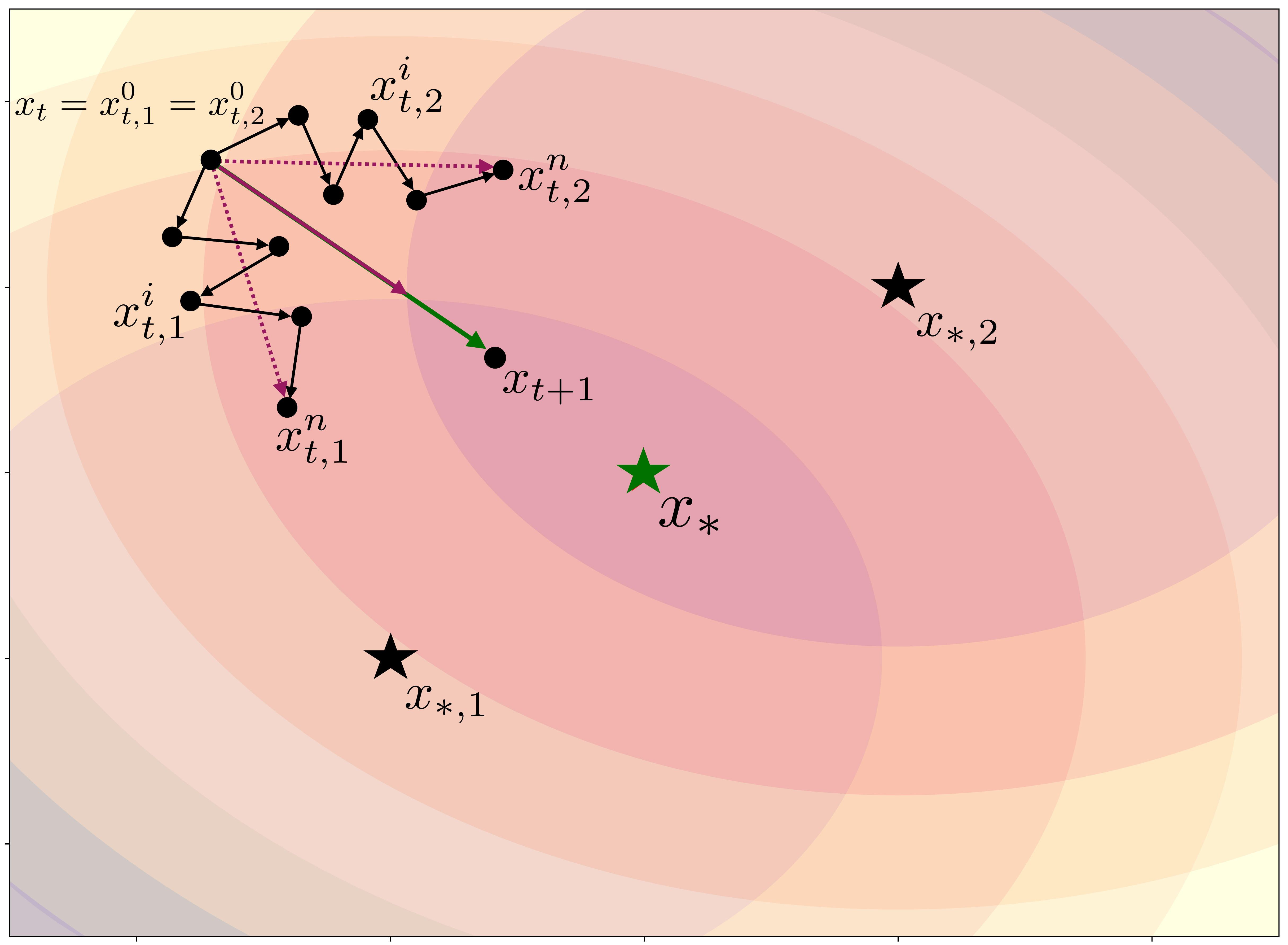}&
		$\!\!\!\!$\includegraphics[scale=0.12]{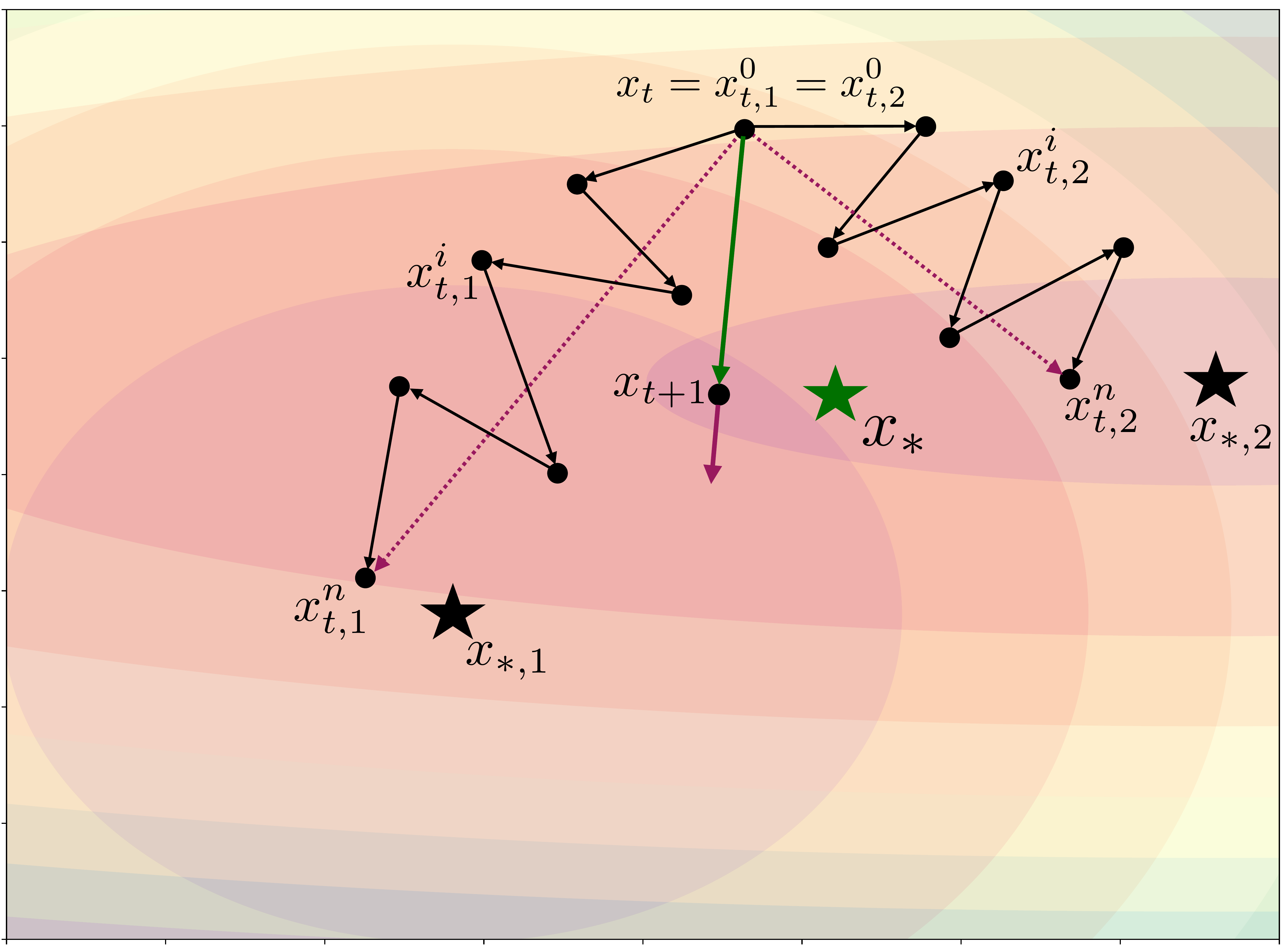}\\
		(a)&(b)
	\end{tabular}	
	\caption{Illustration of the dependence between server and client stepsizes on a simple example with $M=2$ clients. $x_{*,1}$ and $x_{*,2}$ are the minimizers of the local functions $f_{1}$ and $f_2$, respectively, and $x_*$ is the minimizer of the global function $f = \frac{1}{2}f_1 + \frac{1}{2}f_2$. {\bf (a)} In the case of small client stepsizes $\cstep$, the average of local steps is not large, but at the same time the variance is small and the direction is close to direction of the full gradient, which allows us to go further towards this direction by employing a large server stepsize $\sstep$. {\bf (b)} In the case of large client stepsizes $\cstep$, each client step contributes to the global step, but the variance grows as well, so it is useful to use smaller server stepsize $\sstep$ to reduce this variance. These intuitions are confirmed by our theory. }
	\label{fig:image4}
\end{figure*}

\section{Preliminaries}\label{sec:prelim}

In this section we introduce several key concepts that will help us to formulate our theoretical results.

\subsection{Convexity and smoothness}

In all our theoretical results we rely on smoothness, and in some we require convexity or strong convexity.

\begin{definition}[$L$-smoothness]
Function $\phi\colon \R^d \rightarrow \R$ is $L$-smooth if it has $L$-Lipschitz continuous gradient for some $L>0$  
\begin{align}
	\squeeze 
	\label{eq:lipsc}
	\|\nabla \phi(x) - \nabla \phi(y)\| \leq L\|x-y\| \quad \forall x, y \in \mathbb{R}^{d}.
\end{align}
\end{definition}

\begin{definition}[Convexity and strong convexity]
Function $\phi\colon \R^d \rightarrow \R$ is convex if $\forall x, y \in \mathbb{R}^{d}$
\begin{align}
	\squeeze 
		\label{eq:convex}
	\phi(y) \geq \phi(x)+\langle\nabla \phi(x), y-x\rangle,
\end{align}
and $\mu$-strongly convex if $\forall x, y \in \mathbb{R}^{d}$
\begin{align}
	\label{eq:mu-convex}
	\squeeze 
	\phi(y) \geq \phi(x)+\langle\nabla \phi(x), y-x\rangle+\frac{\mu}{2}\|y-x\|^{2}.
\end{align}
\end{definition}

In our analysis we use the following assumption.
\begin{assumption}\label{assump: L-smooth_1}
	The objective $f$ and the individual losses $f^1_{m}, \ldots, f^n_{m}$ are all $L$-smooth. Further, for all $i$ and $m\in \{1,2,\dots,M\}$ and $i\in \{1,2,\dots,n\}$, (i) $f_*  \eqdef \inf_{x} f(x) > -\infty $, 
	(ii) $f_{*,m} \eqdef \inf_{x} f_m(x)> -\infty$, and
(iii) $f^i_{*,m} \eqdef \inf_{x} f^i_m(x) > -\infty$.
If $f^i_{m}$ is convex, we further assume the existence of minimizers $x_{*} = \arg\min_{x\in \mathbb{R}^{d}} f(x)$ and $x^i_{*,m} =\arg \min_{x\in \mathbb{R}^{d}} f^i_m(x) $.
\end{assumption}

\subsection{Measures of data heterogeneity}

While our theory does not require any {\em assumptions} on data homogeneity,  our {\em results} will reflect the degree to which the data are heterogeneous, and are better for data that are ``more'' homogeneous. In particular, in the strongly convex and convex regimes we rely on the following notions.

\begin{definition}[Variance at the optimum]
	\label{def:variance}
 The variance of the gradients $\{\nabla f_m\}_{m=1}^M$ at $x_*$  is defined as 
\[
	\squeeze 
	\sigma_{*}^{2} \stackrel{\text { def }}{=} \frac{1}{M} \sum\limits_{m=1}^{M}\left\|\nabla f_m\left(x_{*}\right)\right\|^{2},
\]
where $x_{*}$ is a minimizer of $f$. The  variance of the gradients $\{\nabla f^i_m\}_{i=1}^n$ at $x_*$ is 
\[
	\squeeze 
	\sigma_{*,m}^{2} \stackrel{\text { def }}{=} \frac{1}{n} \sum\limits_{i=1}^{n}\left\|\nabla f^i_{m}\left(x_{*}\right) \right\|^{2}.
\]
\end{definition}

An important lemma that allows us to obtain a strong upper bound for variance in the case of sampling without replacement, which our data shuffling methods rely on, was formulated by~\citet{MKR2020rr}. We include it here for completeness.
\begin{lemma}[Sampling without replacement]\label{lem:sampling_wo_replacement}
	Let $X_1,\dotsc, X_n\in \R^d$ be fixed vectors, $\overline X\eqdef \frac{1}{n}\sum_{i=1}^n X_i$ be their average and $ \sigma^2 \eqdef \frac{1}{n}\sum\limits_{i=1}^n \norm{X_i-\overline X}^2$ be the population variance. Fix any $k\in\{1,\dotsc, n\}$, let $X_{\pi_1}, \dotsc X_{\pi_k}$ be sampled uniformly without replacement from $\{X_1,\dotsc, X_n\}$ and $\overline X_\pi$ be their average. Then, it holds
	\begin{align}
\squeeze
		\ec{\overline X_\pi}=\overline X,  \quad \ec{\norm{\overline X_{\pi} - \overline X}^2}= \frac{n-k}{k(n-1)}\sigma^2. \label{eq:sampling_wo_replacement}
	\end{align}
\end{lemma}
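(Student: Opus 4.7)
The plan is to verify the two identities directly from the definition of uniform sampling without replacement. The unbiasedness claim is essentially immediate: by symmetry each draw $X_{\pi_j}$ is marginally uniform over $\{X_1,\dotsc,X_n\}$, so $\mathbb{E}[X_{\pi_j}]=\overline X$ for every $j$, and averaging over $j=1,\dotsc,k$ gives $\mathbb{E}[\overline X_\pi]=\overline X$.

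For the variance, I would expand the squared norm as a double sum
$$\|\overline X_\pi-\overline X\|^2 = \frac{1}{k^2}\sum_{j=1}^{k}\sum_{l=1}^{k}\langle X_{\pi_j}-\overline X,\, X_{\pi_l}-\overline X\rangle,$$
take expectations, and separate the diagonal ($j=l$) and off-diagonal ($j\neq l$) contributions. By symmetry of uniform sampling, all $k$ diagonal terms have expectation $\sigma^2$, while all $k(k-1)$ off-diagonal terms share a single value $\mathbb{E}\langle X_{\pi_1}-\overline X,\, X_{\pi_2}-\overline X\rangle$.

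The only step with real content is evaluating this off-diagonal cross-term, and this is where I expect the main (mild) subtlety to lie. Since $(\pi_1,\pi_2)$ is uniform over ordered pairs of distinct indices in $\{1,\dotsc,n\}$, the expectation equals $\frac{1}{n(n-1)}\sum_{i\neq i'}\langle X_i-\overline X,\, X_{i'}-\overline X\rangle$. The key identity is that $\sum_{i=1}^{n}(X_i-\overline X)=0$ by the definition of $\overline X$; expanding $\bigl\|\sum_i(X_i-\overline X)\bigr\|^2=0$ yields $\sum_{i\neq i'}\langle X_i-\overline X,\,X_{i'}-\overline X\rangle = -\sum_{i}\|X_i-\overline X\|^2 = -n\sigma^2$, so the cross-term expectation simplifies to $-\sigma^2/(n-1)$.

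Plugging back in,
$$\mathbb{E}\|\overline X_\pi-\overline X\|^2 = \frac{1}{k^2}\Bigl[k\sigma^2 + k(k-1)\cdot\bigl(-\tfrac{\sigma^2}{n-1}\bigr)\Bigr] = \frac{\sigma^2}{k}\cdot\frac{n-k}{n-1},$$
which is the advertised formula. I do not anticipate any real obstacle beyond careful index bookkeeping and the ``sum of deviations is zero'' trick; the sanity checks at the endpoints $k=n$ (variance $0$, since the full sample recovers $\overline X$) and $k=1$ (variance $\sigma^2$, the single-draw variance) also come out correctly, which is a useful consistency check.
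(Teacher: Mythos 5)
Your proof is correct and follows essentially the same route as the paper's: unbiasedness by symmetry, then a diagonal/off-diagonal split of the double sum with the cross-term evaluated as $-\sigma^2/(n-1)$ via the identity $\sum_i (X_i-\overline X)=0$. The only cosmetic difference is that the paper phrases the cross-term computation as a covariance and subtracts the diagonal from the full double sum, whereas you sum over $i\neq i'$ directly; the algebra is identical.
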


For non-convex functions, we use a different notion of data heterogeneity. 

\begin{definition}[Functional dissimilarity]
	\label{def:variance-non-convex}
	The variance at the optimum in the non-convex regime is defined as 
	\[
		\squeeze
		\Delta_{*} \eqdef  f_* -  \frac{1}{M} \sum \limits_{m=1}^{M}f_{*,m},
	\]
	where $f_{*,m} = \inf_{x} f_m(x)$ and $f_*=\inf_x f(x)$. 
	For each device $m$, the variance at the optimum is defined as
	\[
		\squeeze
		\Delta_{*,m} \eqdef f_* - \frac{1}{n} \sum \limits_{i=1}^{n}f^i_{*,m},  
	\]
	where $f^i_{*,m} = \inf_x f^i_m(x)$.
\end{definition}

Again, the above is a definition and not an assumption. The concepts are well defined as long as Assumption~\ref{assump: L-smooth_1} is satisfied.

\section{The \algname{Nastya} Algorithm} \label{sec:algorithm}

\begin{algorithm*}[!t]
	\caption{\algname{Nastya}: Federated optimization with server stepsize, random shuffling and partial participation}
	\label{alg:pp-jumping}
	\begin{algorithmic}[1]
		\State {\bf Input:}  {\myred client stepsize $\cstep > 0$}; {\mygreen server stepsize $\sstep \geq 0$};  cohort size $C \in \{1,2,\dots,M\}$; initial iterate/model $x_0 \in \mathbb{R}^d$; number of communication rounds $T\geq 1$
		\State {\myblue \textbf{Shuffle-Once option:} For each client $m$, sample a permutation $\pi_m=(\pi^0_{m}, \pi^1_{m}, \ldots, \pi^{n-1}_{m})$ of  $\{1,2,\dots,n\}$}

		\For{communication round $t = 0,1,\ldots, T-1$}
		\State Sample a cohort $S_t$ of $C$ clients \hfill {\scriptsize (server chooses a random set $S_t\subseteq \{1,2,\dots,M\}$ of size $|S_t| = C$, uniformly at random)}
		\State Send model $x_t$ to all participating clients $m\in S_t$  \hfill {\scriptsize (server broadcasts  $x_t$ to all clients $m\in S_t$ )}
		\For{all clients $m\in S_t$, locally in parallel}
		\State $x^0_{t,m} = x_t$ \hfill {\scriptsize (client $m$ initializes local training  using the latest global model $x_t$)}
		\State {\myblue \textbf{Random-Reshuffling option:} Sample a permutation $\pi_m=(\pi^0_{m}, \pi^1_{m}, \ldots, \pi^{n-1}_{m})$   of $\{1,2,\dots,n\}$}
		\For{all local training data points $i=0, 1, \ldots, n-1$}
		\State $x_{t,m}^{i+1} = x_{t,m}^{i} - \cstep \nabla f^{\pi^{i}_m}_{m} (x_{t,m}^i)$ \hfill {\scriptsize (client $m$ makes one  pass over its local training data in the order dictated by $\pi_m$)}
		\EndFor
		\State $g_{t,m} = \frac{1}{{\myblue\cstep} n}(x_t - x^n_{t,m})$ \hfill {\scriptsize (client $m$ computes local update direction $g_{t,m}$)}
		\EndFor
		\State $g_{t} = \frac{1}{C}\sum \limits_{m\in S_t}g_{t,m} $ \hfill {\scriptsize (server aggregates the local update directions $g_{t,m}$ discovered by the cohort $S_t$ of clients)}

		\State $x_{t+1} = x_t - {\myred\sstep} g_t $ \hfill {\scriptsize (server updates the model using  the aggregated direction $g_t$ and applying server stepsize $\sstep$)}
		\EndFor
	\end{algorithmic}
\end{algorithm*}

We now formally describe our \algname{Nastya} algorithm (see Algorithm~\ref{alg:pp-jumping}). \algname{Nastya} combines several techniques that were empirically found to be useful in FL:
{\em partial participation}, {\em local training}, {\em data shuffling} and {\em server stepsizes}.

In each communication round $t\geq 0$ of \algname{Nastya}, the cohort $S_t$ is chosen as a random subset of the set  $\{1,2,\dots,M\}$ of all clients. In particular, we choose a random subset of cardinality $C$ (the cohort size), where $1\leq C \leq M$, uniformly at random. The server then sends the global model $x_t$ to all clients in the cohort. Setting $C=M$ models the full participation regime.

Each participating client $m\in S_t$ then performs local training using a single pass of incremental \algname{GD} with {\em client stepsize} $\cstep > 0$ over the local training data points in an order dictated by a {\em random permutation} \[\pi_m = (\pi_m^1,\pi_m^2,\dots,\pi_m^n)\] of the indices of the local training dataset $\{1,2,\dots,n\}$. In particular, the following update is iterated for $i=0,\dots,n-1$:
\[
	x_{t,m}^{i+1} = x_{t,m}^{i} - \cstep \nabla f^{\pi^{i}_m}_{m} (x_{t,m}^i),
\]
where $x_{t,m}^0$ is initialized to $x_t$, and $\cstep > 0$ is the client stepsize. That is, we run one pass over the local data using the \algname{RR} method~\citep{MKR2020rr}. This differs from one pass over the data via \algname{SGD} in that each data point is sampled exactly once.

Note that we allow for two options for how the permutation is formed: i) either the random permutation  is sampled just once for all clients, and used in all communication rounds ({\em Shuffle-Once} option), or ii) the random permutation is sampled afresh at the start of each communication round ({\em Random-Reshuffling} option). Both have the same theoretical properties in our analysis.

At the end of local training, the updated models $x_{t,m}^{n}$ are communicated back to the server, which uses these updates to form a {\em gradient-type estimator} $g_t$, and applies one step of \algname{GD} using a server stepsize $\sstep > 0$ with this estimator in lieu of the true gradient. Equivalently and this is how we decided to formally state the method, each client $m\in S_t$ sends the following scaled model difference to the server:
\[
	\squeeze g_{t,m} = \frac{1}{{\myblue\cstep} n}(x_t - x^n_{t,m}),
\]
where $x^n_{t,m}$ is the model found by the client after one pass over the data via \algname{RR}. The server then aggregates these vectors from all clients in the cohort to form
$g_{t} = \frac{1}{C}\sum_{m\in S_t}g_{t,m},$ and then takes a gradient-type step using this quantity in lieu of the gradient, using server stepsize $\sstep>0$: 
\[
	x_{t+1} = x_t - {\myred\sstep} g_t .
\]

The new model is then broadcast to a new cohort in the next communication round, and the process is repeated.

\section{Warm-up: How to Improve Random Reshuffling}
In this section, we provide the intuition behind our complexity improvements through the lens of single-node Random Reshuffling (\algname{RR}). In particular, when $M=1$, objective~\eqref{eq:main} recovers the standard empirical-risk minimization (ERM) problem:
\[
\textstyle \min \limits_{x\in\R^d} \frac{1}{n}\sum \limits_{i=1}^n f^i(x).
\]
The update of  \algname{RR}  for this problem has the form
\begin{align*}
	x_{t}^{i+1} = x_{t}^{i} - \cstep \nabla f^{\pi^{i}} (x_{t}^i),
\end{align*}
where we use a permutation $\pi=(\pi^0, \dotsc, \pi^{n-1})$ that is randomly sampled at the beginning of epoch $t$.
Unrolling this recursion, we get 
\begin{align*}
	\squeeze
	x^n_t = x_t - \cstep \sum\limits_{i=0}^{n-1} \nabla f^{\pi^{i}} (x_{t}^i).
\end{align*}
The key insight is that the gradients evaluated at points $x_t^i$ can be viewed as approximations of the gradients at point $x_t$. If we denote, for simplicity,
\begin{align*}
	\squeeze	g_t = \frac{1}{n}\sum\limits_{i=0}^{n-1}\nabla f^{\pi^i}(x^i_t) = \frac{x_t - x^n_t}{\cstep n},
\end{align*}
then one can show that $g_t\approx \nabla f(x_t)$ whenever $\cstep$ is small. The update of \Cref{alg:pp-jumping} becomes much simpler and reduces to 
\begin{align*}
	x_{t+1} &= x^n_t+\beta(x^n_t - x_t) = x_t + (1+\beta)(x^n_t - x_t)\\
	&= \textstyle x_t - (1+\beta)\cstep \sum\limits_{i=0}^{n-1}\nabla f^{\pi^i}(x^i_t) = x_t - \sstep g_t,
\end{align*}
where $\sstep = (1+\beta)\cstep n $. If we imagine for a moment that $g_t$ is indeed a very good approximation of $\nabla f(x_t)$, then the theory of gradient descent suggests that one should use $\sstep \sim \frac{1}{L}$, regardless of the value of $\cstep$. 

{\bf Complexity improvements.} By following this intuition, we can establish, as special cases of our general theory, several complexity improvements. In strongly convex case, we obtain the $\mathcal{O}\left(\kappa n \log \frac{1}{\varepsilon} \right)$ complexity of the modified Random Reshuffling, which is better than $\mathcal{O}\left(\kappa n+\frac{\sqrt{\kappa n} \sigma_{*}}{\mu \sqrt{\varepsilon}}\right) \log \frac{1}{\varepsilon}$ of standard Random Reshuffling. In convex case, we our complexity is $\mathcal{O}\left( \frac{L n}{\varepsilon}\right)$, in contrast to the slower $\mathcal{O}\left(\frac{L n}{\varepsilon}+\frac{\sqrt{L n} \sigma_{*}}{\varepsilon^{3 / 2}}\right)$ one. Finally, in general non-convex case, we get a bound $\mathcal{O}\left(\frac{L n}{\varepsilon^{2}}\right)$, which is better than $\mathcal{O}\left(\frac{L n}{\varepsilon^{2}}+\frac{L \sqrt{n}(B+\sqrt{A})}{\varepsilon^{3}}\right)$, where $A$ and $B$ are defined following \citet{MKR2020rr} as the constants from the following assumption: $\frac{1}{n} \sum_{i=1}^{n}\left\|\nabla f_{i}(x)-\nabla f(x)\right\|^{2} \leq 2 A\left(f(x)-f_{*}\right)+B^{2}$.
\subsection{Extending the Intuition to Multiple Nodes}
Motivated by the example of Random Reshuffling, we can extend the complexity improvements to the case of multiple nodes. To achieve this, we utilize large server stepsize and small client stepsize. The main idea of this approach is again to approximate the full gradient using local passes over the clients' datasets. During each round, a worker node $m$ computes $n$ steps of permutation-based algorithm and obtains local model parameters $x^n_{t,m}$:
\begin{align*}
	x^n_{t,m} = \textstyle x_t - \cstep \sum\limits_{i=0}^{n-1}\nabla f^{\pi^i_m}_m (x^i_{t,m}).
\end{align*}
If $\cstep$ is not large, the sum of local steps serves as a good approximation of the full client gradient, so we define
\begin{align*}
	g_{t,m} & \textstyle = \frac{1}{\cstep n }(x_t - x^n_{t,m})= \frac{1}{n}\sum\limits_{i=0}^{n-1} \nabla f_m^{\pi^i_m}
	(x^i_{t,m}).
\end{align*} 
When the epoch ends, the server aggregates local approximations and then computes a step with the larger stepsize, which is equivalent to averaging the final local iterates and then extrapolating in the obtained direction:
\begin{align*}
	x_{t+1} &\squeeze= x_t - \sstep  \frac{1}{C}\sum\limits_{m\in S_t}g_{t,m} \\
	&\squeeze=x_t - \frac{\sstep}{\cstep n}  \frac{1}{C}\sum\limits_{m\in S_t}(x_t - x^n_{t,m})\\
	&\squeeze= \frac{1}{C}\sum\limits_{m\in S_t}\left( x^n_{t,m}+\beta\left(x^n_{t,m} - x_t\right)\right). \label{eq:iteration}
\end{align*}
Above, $\beta =  \nicefrac{\sstep}{\cstep n} - 1$ is the extrapolation coefficient. Small client stepsizes allow us to get better approximation of full gradient, hence we obtain significantly smaller variance of stochastic steps. In the extreme case when client stepsize goes to zero, $\cstep \rightarrow 0$, the gradient estimator converges to the exact gradient: $g_{t,m} \rightarrow \nabla f_{m}(x_t)$ and we obtain distributed gradient descent method. 

{\em Large} client stepsizes, on the other hand, combine better with {\em small} server stepsize. In that case, each local step has a big impact, and full-gradient approximation breaks. Since $g_t$ no longer stays close to $\nabla f(x_t)$, we use a different analysis for this case, which shows a benefit whenever client sampling noise is significant. This is particularly relevant to the cross-device federated learning, where only a tiny percentage of clients can participate at each round.

\section{Theory}\label{sec:theory}

\begin{table*}[t]
	\begin{center}
		\begin{threeparttable}
			\centering
			{\footnotesize
				\caption{The main convergence results obtained in this paper (also see Theorem~\ref{th:small_alpha}). }
				\label{tab:our_results}
				\centering 
				\begin{tabular}{c | c c}\toprule
					Regime & Stepsizes & Result\tnote{(1)}  \\
					\midrule
					\makecell{$\mu$-Convex  \\ (Theorem~\ref{thm:PP-SC})}	& $\cstep n \leq \sstep\leq \frac{1}{16L}$ & $\mathbb{E}\| x_{T} - x_* \|^2  \leq \left(1 - \frac{\sstep \mu}{2}\right)^T  \left\| x_0 - x_* \right\|^2  + \frac{5\cstepsquared nL}{\mu} \Sigma_*^2
					+\frac{8\sstep }{\mu} \frac{M-C}{C\max\{1,M-1\}} \sigma_{*}^2$ \\
					\makecell{Convex \\ (Theorem~\ref{thm:PP-C})} 
					& $\cstep n \leq \sstep\leq \frac{1}{16L}$	
					&  $\ec{ f(\hat{x}_T) - f(x_*)} \leq  \frac{ 5\left\| x_0 - x_* \right\|^2}{2\sstep  T}  + 7\cstepsquared nL \Sigma_*^2
					+10\sstep \frac{M-C}{C(\max\{1,M-1\}} \sigma_{*}^2$ \\	
					\makecell{Non-convex \\ (Theorem~\ref{thm:PP-NC})}	
					& $\cstep \leq \frac{1}{2nL}$ \& $\sstep \leq \frac{1}{L}$ 
					& $\min \limits_{t = 0, \ldots, T-1} \mathbb{E}\left[\left\|\nabla f\left(x_{t}\right)\right\|^{2}\right] \leq \frac{2\left(1+4\sstep  \cstepsquared  n^2 L^3\right)^T}{\sstep  T} \delta_0
					+ 2 \cstepsquared  n L^3  D_*^2
					+ 4L^2\sstep  \frac{M-C}{C\max\{1,M-1\}} \Delta_* $ \\
					\bottomrule
				\end{tabular}
			}
			\begin{tablenotes}
				{\scriptsize
					\item [(1)] $\cstep$ = client stepsize; $\sstep$ = server stepsize; $M$ = total \# of clients;  $C$ = \# of participating clients (cohort size); $n$ = \# of training data points per client; $L$ = Lipschitz constant of the gradient of $f$; $\mu$ = strong convexity constant of $f$; $T$ = total \# of communication rounds; $x_0$ = initial model; $x_*$ = optimal model; $\delta_0$ =  $f(x_0) - f_*$; $\Sigma_*^2 = \left(\frac{1}{M}\sum \limits_{m=1}^{M}\sigma^2_{*,m} + n \sigma_{*}^2 \right)$; $\sigma_{*}^{2} = \frac{1}{M} \sum\limits_{m=1}^{M}\left\| \nabla f_m\left(x_{*}\right)\right\|^{2}$;  $\sigma_{*,m}^{2} = \frac{1}{n} \sum\limits_{i=1}^{n}\left\|\nabla f^i_{m}\left(x_{*}\right) \right\|^{2}$; $D_*^2 = \left(\frac{1}{M}\sum \limits_{m=1}^{M}\Delta_{*,m}+n\Delta_{*}\right)$; $\Delta_{*} =\frac{1}{M} \sum \limits_{m=1}^{M}(f_{*,m} - f_*)\ge 0$; $\Delta_{*,m} = \frac{1}{n}\sum\limits_{i=1}^{n}(f_* - f^i_{*,m})\ge 0$, where $f_*=\inf f$, $f_{*,m} = \inf f_m$ and $f^i_{*,m} = \inf f^i_m$ are all assumed to be finite (i.e., not $-\infty$).}
			\end{tablenotes}
		\end{threeparttable}
	\end{center}
\end{table*}

We now formulate our three main results.

\begin{theorem}[Strongly convex regime]\label{thm:PP-SC}
	Let Assumption~\ref{assump: L-smooth_1} hold, each $f^i_{m}$ be convex and $f$ be $\mu$-strongly convex. Let $\cstep n \leq \sstep\leq \frac{1}{16L}$. Then for iterates $x_t$ generated by Algorithm~\ref{alg:pp-jumping}, we have 
	\begin{align*}
		\squeeze 
		\mathbb{E}\left[\| x_{T} - x_* \|^2\right] &\squeeze \leq \left(1 - \frac{\sstep \mu}{2}\right)^T  \left\| x_0 - x_* \right\|^2\\
		& \squeeze + \frac{5\cstepsquared nL}{\mu}\left(\frac{1}{M}\sum\limits_{m=1}^{M}\sigma^2_{*,m} + n \sigma_{*}^2 \right)\\
		&\squeeze +\frac{8\sstep }{\mu} \frac{M-C}{C\max\{M-1,1\}} \sigma_{*}^2.
	\end{align*}
\end{theorem}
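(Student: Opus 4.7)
The plan is to establish a one-step descent recursion for the squared distance to the optimum and then iterate it. Expanding the server update $x_{t+1}=x_t-\sstep g_t$ gives
\begin{equation*}
\|x_{t+1}-x_*\|^2 = \|x_t-x_*\|^2 - 2\sstep\langle g_t, x_t-x_*\rangle + \sstepsquared\|g_t\|^2.
\end{equation*}
Taking expectation over the cohort $S_t$ conditional on the local iterates, Lemma~\ref{lem:sampling_wo_replacement} applied to $\{g_{t,m}\}_{m=1}^M$ gives $\mathbb{E}[g_t \mid \cdot]=\bar g_t\eqdef\frac{1}{M}\sum_{m=1}^M g_{t,m}$ together with the variance identity $\mathbb{E}\|g_t-\bar g_t\|^2=\frac{M-C}{C\max\{M-1,1\}}\cdot\frac{1}{M}\sum_m\|g_{t,m}-\bar g_t\|^2$. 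The right-hand side will eventually be bounded (up to lower-order terms) by $\sigma_{*}^2$ via $\|g_{t,m}-\bar g_t\|^2\lesssim \|\nabla f_m(x_*)\|^2 + (\text{small drift contributions})$, which is what produces the third term in the bound.

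Next, I would relate $\bar g_t$ to $\nabla f(x_t)$. Since $g_{t,m}=\frac{1}{n}\sum_{i=0}^{n-1}\nabla f_m^{\pi^i_m}(x_{t,m}^i)$ and $f=\frac{1}{M}\sum_m f_m$, the equality $\bar g_t = \nabla f(x_t)$ would hold if all local iterates coincided with $x_t$. Using $L$-smoothness of each $f_m^i$ and the assumption $\cstep n\leq\sstep\leq\frac{1}{16L}$ (which in particular forces $\cstep\leq\frac{1}{16Ln}$), I would first derive a uniform drift bound showing that $\|x_{t,m}^i-x_t\|^2$ is $\mathcal{O}(\cstepsquared n^2)$ times quantities involving $\sigma_{*,m}^2$, $\sigma_*^2$ and $\|x_t-x_*\|^2$. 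Then, combining this drift with convexity of each $f_m^i$ and applying Lemma~\ref{lem:sampling_wo_replacement} inside each client's permutation (the ``shuffling variance'' technique of \citet{MKR2020rr}), one obtains an estimate of the form
\begin{equation*}
\|\bar g_t - \nabla f(x_t)\|^2 \;\leq\; c_1\, \cstepsquared n L^2 \Sigma_*^2 + c_2 L^2 \|x_t-x_*\|^2,
\end{equation*}
where $\Sigma_*^2\eqdef\frac{1}{M}\sum_m\sigma_{*,m}^2+n\sigma_*^2$ and $c_1,c_2$ are absolute constants. The key point is that the prefactor is $\cstepsquared$, not $\cstep$, which is what ultimately allows the heterogeneity term to be made small without shrinking $\cstep$ all the way to zero.

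Plugging both estimates back into the descent identity, using $\mu$-strong convexity of $f$ to turn $\langle\nabla f(x_t),x_t-x_*\rangle$ into $\mu\|x_t-x_*\|^2+(f(x_t)-f_*)$ and $L$-smoothness via $\|\nabla f(x_t)\|^2\leq 2L(f(x_t)-f_*)$ to absorb the $\sstepsquared\|\bar g_t\|^2$ contribution, the choice $\sstep\leq\frac{1}{16L}$ makes the resulting negative $(f(x_t)-f_*)$ term dominate all $c_2 L^2\|x_t-x_*\|^2$ cross terms. This leaves the clean geometric recursion $\mathbb{E}\|x_{t+1}-x_*\|^2\leq (1-\sstep\mu/2)\mathbb{E}\|x_t-x_*\|^2 + \sstepsquared V$, where $V$ collects the residual shuffling and partial-participation variances. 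Iterating and summing the geometric series $\sum_{t}(1-\sstep\mu/2)^t\leq 2/(\sstep\mu)$ yields the three stated terms, with the $\sstep/\mu$ and $\cstepsquared nL/\mu$ prefactors arising naturally (the latter after also using $\sstep\leq 1/L$). The main obstacle, inherited from the Random Reshuffling literature, is tracking the drift of $x_{t,m}^i$ around $x_t$ tightly enough to get the quadratic $\cstepsquared$ dependence rather than a linear $\cstep$ one; this is exactly what is enabled by the coupled stepsize constraint $\cstep n\leq\sstep$ and is what justifies the small-client/large-server-stepsize message of the paper.
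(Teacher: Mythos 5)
Your high-level architecture matches the paper's: expand $\|x_{t+1}-x_*\|^2$, take the cohort expectation and invoke Lemma~\ref{lem:sampling_wo_replacement} for the partial-participation variance, prove a client-drift bound of order $\cstepsquared$, and unroll a $(1-\sstep\mu/2)$ recursion. However, the step where you convert the inner product $\langle g_t, x_t-x_*\rangle$ into a contraction contains a genuine gap. You propose to write $\bar g_t=\nabla f(x_t)+e_t$ with $\|e_t\|^2\leq c_1\cstepsquared nL^2\Sigma_*^2+c_2L^2\|x_t-x_*\|^2$ and then control the cross term $2\sstep\langle e_t,x_t-x_*\rangle$ so that the negative $(f(x_t)-f_*)$ and $\mu\|x_t-x_*\|^2$ terms dominate. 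But any Young-type bound $2\langle e_t,x_t-x_*\rangle\leq\zeta\|x_t-x_*\|^2+\zeta^{-1}\|e_t\|^2$ forces $\zeta=\cO(\mu)$ in order for $\sstep\zeta\|x_t-x_*\|^2$ to be absorbed by the contraction, and then the $\zeta^{-1}\|e_t\|^2$ piece contributes $\cO\bigl(\tfrac{\sstep}{\mu}L^2\cstepsquared n^2 L\,(f(x_t)-f_*)\bigr)$ and $\cO\bigl(\tfrac{\sstep}{\mu}\cstepsquared nL^2\Sigma_*^2\bigr)$ per step. Closing the recursion then requires $\cstepsquared n^2\lesssim\mu/L^3$ (i.e.\ $\cstep n\lesssim\kappa^{-1/2}/L$, stronger than the stated $\cstep n\leq\sstep\leq\tfrac{1}{16L}$), and the heterogeneity term inflates to $\tfrac{\cstepsquared nL}{\mu}\cdot\kappa\,\Sigma_*^2$ rather than the stated $\tfrac{5\cstepsquared nL}{\mu}\Sigma_*^2$. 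So the theorem as stated does not follow from your route.

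The paper avoids this loss by never measuring $\bar g_t-\nabla f(x_t)$ in norm. Instead, Lemma~\ref{lemma:inner-product-convex} applies the exact three-point identity to each summand, giving
$\langle\nabla f_m^{\pi^i_m}(x^i_{t,m}),x_t-x_*\rangle=f_m^{\pi^i_m}(x_t)-f_m^{\pi^i_m}(x_*)+D_{f_m^{\pi^i_m}}(x_*,x^i_{t,m})-D_{f_m^{\pi^i_m}}(x_t,x^i_{t,m})$;
convexity of each $f_m^i$ lets one drop the nonnegative Bregman term $D_{f_m^{\pi^i_m}}(x_*,x^i_{t,m})$, and $L$-smoothness bounds the remaining divergence by $\tfrac{L}{2}\|x_t-x^i_{t,m}\|^2$. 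The drift thus enters the recursion with coefficient $\sstep L$ (not $\sstep L^2/\mu$) and is absorbed into the negative $(f(x_t)-f(x_*))$ term under the mild condition $\cstepsquared n^2L^2\lesssim 1$. If you replace your perturbation-plus-Young step with this Bregman decomposition (and note that this is also where the convexity of each $f_m^i$, which you invoke only vaguely, actually gets used), the rest of your outline — Lemma~\ref{thm:sqrt} for the $\sstepsquared\|g_t\|^2$ term, Lemma~\ref{lemma:V_t} for the drift, and the geometric summation $\sum_t(1-\sstep\mu/2)^t\leq 2/(\sstep\mu)$ — goes through exactly as in the paper.
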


	In the full participation regime, the server stepsize restriction can be relaxed to $\sstep\leq \frac{1}{8 L}$.

\subsection{Convex regime}
Next, we cover the convex regime. 

\begin{theorem}\label{thm:PP-C}
Let Assumption~\ref{assump: L-smooth_1} hold, each $f^i_{m}$ be convex function. Let $\cstep n \leq \sstep\leq \frac{1}{16L}$. Let $\hat{x}_{T} \eqdef \frac{1}{T} \sum_{t=1}^{T} x_{t}$. Then for iterates $x_t$ of Algorithm~\ref{alg:pp-jumping}, we have 
	\begin{align*}
		\squeeze 
		\mathbb{E}&\squeeze [ f(\hat{x}_T) - f(x_*)] \leq  \frac{ 5\left\| x_0 - x_* \right\|^2}{2\sstep  T} \squeeze+10\sstep \frac{M-C}{C\max\{M-1,1\}} \sigma_{*}^2\\
		&\squeeze+ 7\cstepsquared nL \left(\frac{1}{M}\sum\limits_{m=1}^{M}\sigma^2_{*,m} + n \sigma_{*}^2 \right)
	\end{align*}
\end{theorem}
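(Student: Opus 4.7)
The plan is to mirror the structure of the strongly convex proof (Theorem~\ref{thm:PP-SC}) but, since there is no contraction in $\|x_t-x_*\|^2$ to exploit, to extract a telescoping sum and then apply Jensen's inequality to $\hat{x}_T$ at the end. First I would expand one iteration: since $x_{t+1}=x_t-\sstep g_t$,
\begin{equation*}
\|x_{t+1}-x_*\|^2=\|x_t-x_*\|^2-2\sstep\langle g_t,x_t-x_*\rangle+\sstep^2\|g_t\|^2.
\end{equation*}
Conditioning on the permutations, the cohort $S_t$ is sampled uniformly without replacement from $\{1,\dots,M\}$, so Lemma~\ref{lem:sampling_wo_replacement} applied to the vectors $\{g_{t,m}\}_{m=1}^{M}$ gives $\mathbb{E}_{S_t}[g_t]=\bar g_t\eqdef\tfrac{1}{M}\sum_{m=1}^{M}g_{t,m}$ and a variance term of order $\tfrac{M-C}{C\max\{M-1,1\}}\cdot\tfrac{1}{M}\sum_m\|g_{t,m}-\bar g_t\|^2$. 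After taking full expectation, this partial participation term will be controlled, at the optimum, by $\sigma_*^2$ plus residuals that can be shown small when $\cstep$ is small.

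Next, I would rewrite $\bar g_t=\nabla f(x_t)+(\bar g_t-\nabla f(x_t))$ in the inner-product term, use convexity of $f$ to obtain $\langle\nabla f(x_t),x_t-x_*\rangle\ge f(x_t)-f(x_*)$, and bound the cross-term $\langle\bar g_t-\nabla f(x_t),x_t-x_*\rangle$ by Young's inequality, splitting into a multiple of $\|x_t-x_*\|^2$ (which I do not want, since there is no contraction here) \emph{or}, better, absorb it into $f(x_t)-f(x_*)$ by combining with smoothness $\|\nabla\phi(x)-\nabla\phi(y)\|^2\le 2L(\phi(y)-\phi(x)-\langle\nabla\phi(x),y-x\rangle)$ for convex smooth $\phi$. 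Then the $\sstep^2\|g_t\|^2$ term is bounded by $2\sstep^2\|\nabla f(x_t)\|^2+2\sstep^2\|g_t-\nabla f(x_t)\|^2\le 4L\sstep^2(f(x_t)-f(x_*))+\ldots$ using the smoothness-at-the-optimum bound on $f$. The stepsize condition $\sstep\le 1/(16L)$ is what allows these $4L\sstep^2$ factors to be absorbed, leaving a clean $-\sstep(f(x_t)-f(x_*))$ on the right-hand side (up to a constant like $2/5$).

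The main obstacle will be controlling $\mathbb{E}\|\bar g_t-\nabla f(x_t)\|^2$ and $\mathbb{E}\tfrac{1}{M}\sum_m\|g_{t,m}-\bar g_t\|^2$ in terms of $\cstepsquared$ and in a form that combines with $f(x_t)-f(x_*)$. The key local lemma, analogous to what underlies \cite{MKR2020rr,mishchenko2021proximal} and reused in the proof of Theorem~\ref{thm:PP-SC}, bounds the drift of a single-client RR pass: for each local iterate,
\begin{equation*}
\mathbb{E}\|x_{t,m}^i-x_t\|^2\lesssim \cstepsquared n^2\|\nabla f_m(x_t)\|^2+\cstepsquared n\,\sigma_{*,m}^2,
\end{equation*}
and from it
\begin{equation*}
\mathbb{E}\|g_{t,m}-\nabla f_m(x_t)\|^2\lesssim \cstepsquared nL^2(\sigma_{*,m}^2+n\|\nabla f_m(x_t)\|^2).
\end{equation*}
Averaging over $m$, using $\|\nabla f_m(x_t)\|^2\le 2\|\nabla f_m(x_t)-\nabla f_m(x_*)\|^2+2\|\nabla f_m(x_*)\|^2\le 4L(f_m(x_t)-f_m(x_*)-\langle\nabla f_m(x_*),x_t-x_*\rangle)+2\|\nabla f_m(x_*)\|^2$ and summing over $m$ (the linear terms cancel because $\tfrac{1}{M}\sum_m\nabla f_m(x_*)=\nabla f(x_*)=0$), the gradient-norm piece becomes $\lesssim L(f(x_t)-f(x_*))+\sigma_*^2$. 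The same identity controls the partial participation term. With $\cstep n\le\sstep\le 1/(16L)$, the $L(f(x_t)-f(x_*))$ contributions are absorbed into the favorable $-\sstep(f(x_t)-f(x_*))$ term.

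Finally, I would obtain a one-round inequality of the form
\begin{equation*}
\mathbb{E}\|x_{t+1}-x_*\|^2\le \mathbb{E}\|x_t-x_*\|^2-\tfrac{2\sstep}{5}\mathbb{E}[f(x_t)-f(x_*)]+R,
\end{equation*}
where $R$ collects the $\cstepsquared nL(\tfrac{1}{M}\sum_m\sigma_{*,m}^2+n\sigma_*^2)$ and $\sstep^2\tfrac{M-C}{C\max\{M-1,1\}}\sigma_*^2$ residuals. Summing over $t=0,\dots,T-1$, dividing by $T$, applying Jensen's inequality (convexity of $f$) to $\hat{x}_T=\tfrac{1}{T}\sum_{t=1}^{T}x_t$, and dropping the nonnegative final iterate term, I obtain the stated bound with the constants $\tfrac{5}{2}$, $7$, and $10$ arising from the $2/5$ factor above and the accumulation of the Young-inequality constants.
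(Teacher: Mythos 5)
Your proposal is correct and follows essentially the same route as the paper: a one-round expansion of $\|x_{t+1}-x_*\|^2$, the sampling-without-replacement lemma for the cohort noise, a local-drift bound of order $\cstepsquared n^2\|\nabla f_m(x_t)\|^2+\cstepsquared n\,\sigma_{*,m}^2$ converted into $L(f(x_t)-f(x_*))+\sigma_*^2$ terms, absorption of those terms via $\cstep n\le\sstep\le\tfrac{1}{16L}$ into a $-\tfrac{2\sstep}{5}(f(x_t)-f(x_*))$ decrease, and telescoping plus Jensen. The only cosmetic differences are that the paper handles the inner product via the three-point (Bregman) identity rather than the $\bar g_t=\nabla f(x_t)+\text{error}$ split, and applies the sampling lemma to $\{\nabla f_m(x_*)\}$ after peeling off the drift rather than directly to $\{g_{t,m}\}$; both lead to the same recursion, which the paper simply reuses from the strongly convex proof with $\mu=0$.
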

As it can be seen, we get additional source of variance which is proportional to $\sstep$ and $\sigma_{*}^2$. This term means variance of client sampling. Since this sampling of clients have SGD-type structure, we have that variance is proportional to the first order of server-side stepsize. 

\subsection{Non-convex regime}
Finally, we provide  guarantees in the non-convex case.  

\begin{theorem}\label{thm:PP-NC}
Let Assumption of smoothness hold. Let $\delta_0 = f(x_0) - f_*$ and $\Delta_{*,m} = \frac{1}{n}\sum\limits_{i=1}^{n}(f_* - f^i_{*,m})$. Let $\cstep \leq \frac{1}{2nL}$ and $\sstep \leq \frac{1}{4L}$. Then for iterates $x_t$ of Algorithm~\ref{alg:pp-jumping}, we have
\begin{align*}
	\min _{t = 0, \ldots, T-1} &\squeeze \mathbb{E}\left[\left\|\nabla f\left(x_{t}\right)\right\|^{2}\right] \leq  \squeeze 8L^2\sstep  \frac{M-C}{C\max\{M-1,1\}} \Delta_*\\
	& \squeeze +6\cstepsquared  n L^3 \left(\frac{1}{M}\sum\limits_{m=1}^{M}\Delta_{*,m}+n\Delta_{*}\right) \\
	&\squeeze+\frac{4\left(1+\frac{2L^2\sstepsquared(M-C)}{C\max\left\lbrace M-1,1 \right\rbrace }+\frac{3}{2}\sstep\cstepsquared n^2L^3\right)^T}{\sstep  T} \delta_0.
\end{align*}

\end{theorem}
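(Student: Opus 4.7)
\textbf{Proof plan for Theorem~\ref{thm:PP-NC}.} The plan is to combine the one-epoch analysis of Random Reshuffling in the spirit of \citet{MKR2020rr} with a separate accounting of the client-sampling noise, viewing each round as an inexact gradient step on $f$. Let $\cF_t$ denote the $\sigma$-algebra generated by everything up to the start of round $t$, and let $\bar g_t \eqdef \frac{1}{M}\sum_{m=1}^M g_{t,m}$ be the full-participation aggregate. By Lemma~\ref{lem:sampling_wo_replacement} applied to the collection $\{g_{t,m}\}_{m=1}^M$ conditionally on the permutations and on $x_t$, one has $\ec{g_t\mid \cF_t,\pi}=\bar g_t$ and
\[
\ec{\|g_t-\bar g_t\|^2\mid \cF_t,\pi}=\frac{M-C}{C\max\{M-1,1\}}\cdot\frac{1}{M}\sum_{m=1}^M\|g_{t,m}-\bar g_t\|^2.
\]

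\textbf{Step 1 (descent inequality and bias/variance split).} I would first apply $L$-smoothness of $f$ to obtain $f(x_{t+1})\le f(x_t)-\sstep\langle\nabla f(x_t),g_t\rangle+\frac{L\sstepsquared}{2}\|g_t\|^2$, then take conditional expectation and use the identity $\ec{\|g_t\|^2}=\|\bar g_t\|^2+\ec{\|g_t-\bar g_t\|^2}$ together with $-\langle\nabla f(x_t),\bar g_t\rangle=\frac{1}{2}\|\bar g_t-\nabla f(x_t)\|^2-\frac{1}{2}\|\nabla f(x_t)\|^2-\frac{1}{2}\|\bar g_t\|^2$. After using $\sstep\le\frac{1}{4L}$ to absorb $\frac{L\sstepsquared}{2}\|\bar g_t\|^2$ into $\frac{\sstep}{2}\|\bar g_t\|^2$, one is left with controlling the local-drift bias $\|\bar g_t-\nabla f(x_t)\|^2$ and the sampling variance.

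\textbf{Step 2 (controlling local drift via RR).} For each client $m$, $g_{t,m}-\nabla f_m(x_t)=\frac{1}{n}\sum_{i=0}^{n-1}\bigl(\nabla f^{\pi^i_m}_m(x^i_{t,m})-\nabla f^{\pi^i_m}_m(x_t)\bigr)$, which by $L$-smoothness is bounded by $\frac{L^2}{n}\sum_i\|x^i_{t,m}-x_t\|^2$. The key RR lemma of \citet{MKR2020rr} adapted to non-convex smooth losses (the ``$\Delta_{*,m}$''-version) bounds $\frac{1}{n}\sum_i\|x^i_{t,m}-x_t\|^2$ by a multiple of $\cstepsquared n^2\bigl(\|\nabla f_m(x_t)\|^2+L\Delta_{*,m}\bigr)$ under $\cstep\le\frac{1}{2nL}$. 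Averaging over $m$ and using the identity $\frac{1}{M}\sum_m\|\nabla f_m(x_t)\|^2\le\|\nabla f(x_t)\|^2+2L\Delta_*$ (a consequence of smoothness plus the definition of $\Delta_*$, since $f_{*,m}\le f(x_t)-\frac{1}{2L}\|\nabla f_m(x_t)\|^2$ need not hold in the nonconvex regime, so I will instead use $\|\nabla f_m(x_t)\|^2\le 2L(f_m(x_t)-f_{*,m})$ and recombine with $\frac{1}{M}\sum_m f_m(x_t)=f(x_t)$ to produce the $\Delta_*$ term) yields
\[
\|\bar g_t-\nabla f(x_t)\|^2\le 2\cstepsquared n^2 L^2\bigl(\|\nabla f(x_t)\|^2+2L\Delta_*\bigr)+2\cstepsquared n L^3\cdot\tfrac{1}{M}\textstyle\sum_m\Delta_{*,m}.
\]

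\textbf{Step 3 (controlling client-sampling variance).} For $\frac{1}{M}\sum_m\|g_{t,m}-\bar g_t\|^2\le\frac{1}{M}\sum_m\|g_{t,m}\|^2$, I split $g_{t,m}=\nabla f_m(x_t)+(g_{t,m}-\nabla f_m(x_t))$, bound the second piece as in Step 2, and use the non-convex smoothness inequality $\|\nabla f_m(x_t)\|^2\le 2L(f_m(x_t)-f^i_{*,m})$ averaged over $i$ and $m$ to produce the $\Delta_*$ term (jointly with $f(x_t)-f_*\ge 0$ after taking full expectation). This gives a bound on the sampling variance of the form $\frac{M-C}{C\max\{M-1,1\}}\bigl(c_1\|\nabla f(x_t)\|^2+c_2L\Delta_*+\text{lower-order}\bigr)$.

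\textbf{Step 4 (Lyapunov recursion and telescoping).} Assembling the pieces yields, after taking total expectation, a one-step inequality of the shape
\[
\ec{f(x_{t+1})}\le\bigl(1+\alpha\bigr)\ec{f(x_t)-f_*}-c\sstep\ec{\|\nabla f(x_t)\|^2}+\sstep\cdot B,
\]
with $\alpha=\tfrac{2L^2\sstepsquared(M-C)}{C\max\{M-1,1\}}+\tfrac{3}{2}\sstep\cstepsquared n^2L^3$, $c$ a constant (the stepsize restrictions $\cstep\le\tfrac{1}{2nL}$ and $\sstep\le\tfrac{1}{4L}$ are chosen precisely so that the quadratic-in-$\|\nabla f\|^2$ terms from Steps 2 and 3 are absorbed into a $\tfrac{1}{4}$-factor of the descent term), and $B$ a constant times the noise floor matching the theorem's statement. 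Iterating this recursion produces the factor $(1+\alpha)^T$, and after rearranging, dividing by $c\sstep T$, and taking the minimum over $t$ on the left-hand side, one obtains the stated bound.

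\textbf{Main obstacle.} The delicate part is Step 3: in the non-convex setting there is no standard ``$\|\nabla f_m\|^2\le 2L(f_m-f_{*,m})$ with $f_{*,m}=f(x_*)$'' inequality, and the partial-participation variance must be controlled using the functional dissimilarity $\Delta_*$ rather than a gradient-variance $\sigma_*^2$. Juggling the two noise sources so that the resulting $\alpha$ is exactly what appears inside the $(1+\alpha)^T$ factor, while keeping the coefficients of $\|\nabla f(x_t)\|^2$ small enough to be absorbed by the descent, is the key technical balancing act.
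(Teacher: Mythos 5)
Your plan is correct and follows essentially the same route as the paper's proof: the smoothness descent inequality, the three-term splitting of the inner product isolating $\|\nabla f(x_t)\|^2$ and the drift $\|\bar g_t - \nabla f(x_t)\|^2$, the epoch-deviation bound for the RR drift, the sampling-without-replacement variance lemma combined with $\|\nabla f_m(x_t)\|^2 \le 2L(f_m(x_t)-f_{*,m})$ to produce the $\Delta_*$ term, and the $(1+a)^T$ recursion lemma. The only cosmetic difference is that you apply the bias/variance decomposition directly to $g_t$ around $\bar g_t$, whereas the paper first peels off the drift with Young's inequality and applies the without-replacement formula to $\frac{1}{C}\sum_{m\in S_t}\nabla f_m(x_t)$; both yield the same terms.
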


Similarly to analysis in full participation case, we use $\Delta_{*,m}$ and $\Delta_{*}$ instead of $\sigma_{*,m}^2$ and $\sigma_{*}^2$, since point of minimizer cannot be defined. 

{\bf Client and server stepsizes.}
Theorems~\ref{thm:PP-SC}, \ref{thm:PP-C} and \ref{thm:PP-NC} suggest that the server can use the large $\cO(1/L)$ stepsize, where $L$ is the Lipschitz constant of the gradient of $f$. In all regimes, it is optimal for the client stepsize $\cstep$ to be small, which completely eliminates the second of the three terms in the complexity bounds, which controls the price one pays due to {\em data heterogeneity}. 

{\bf Partial participation.}
Notice that if the cohort size is equal to $M$, then $ \frac{M-C}{C\max\{1,M-1\}}$ is equal to $0$, and this means that the last (third) term in all our complexity results disappears. The last term can thus be interpreted  as the price we pay for partial participation.  While we can reduce the variance of \algname{RR} and the client drift by decreasing $\cstep$, we cannot make the variance due to client sampling  arbitrary small, since it depends on $\sstep$. 

{\bf Comparison with existing rates.} In Table~\ref{tab:compare_with_others} we compare our results in the strongly convex and non-convex regimes with selected existing results. 

\section{Benefits of Small Server Stepsize}

\begin{figure*}[t!]
	\centering
	\begin{tabular}{cc}
		\includegraphics[scale=0.27]{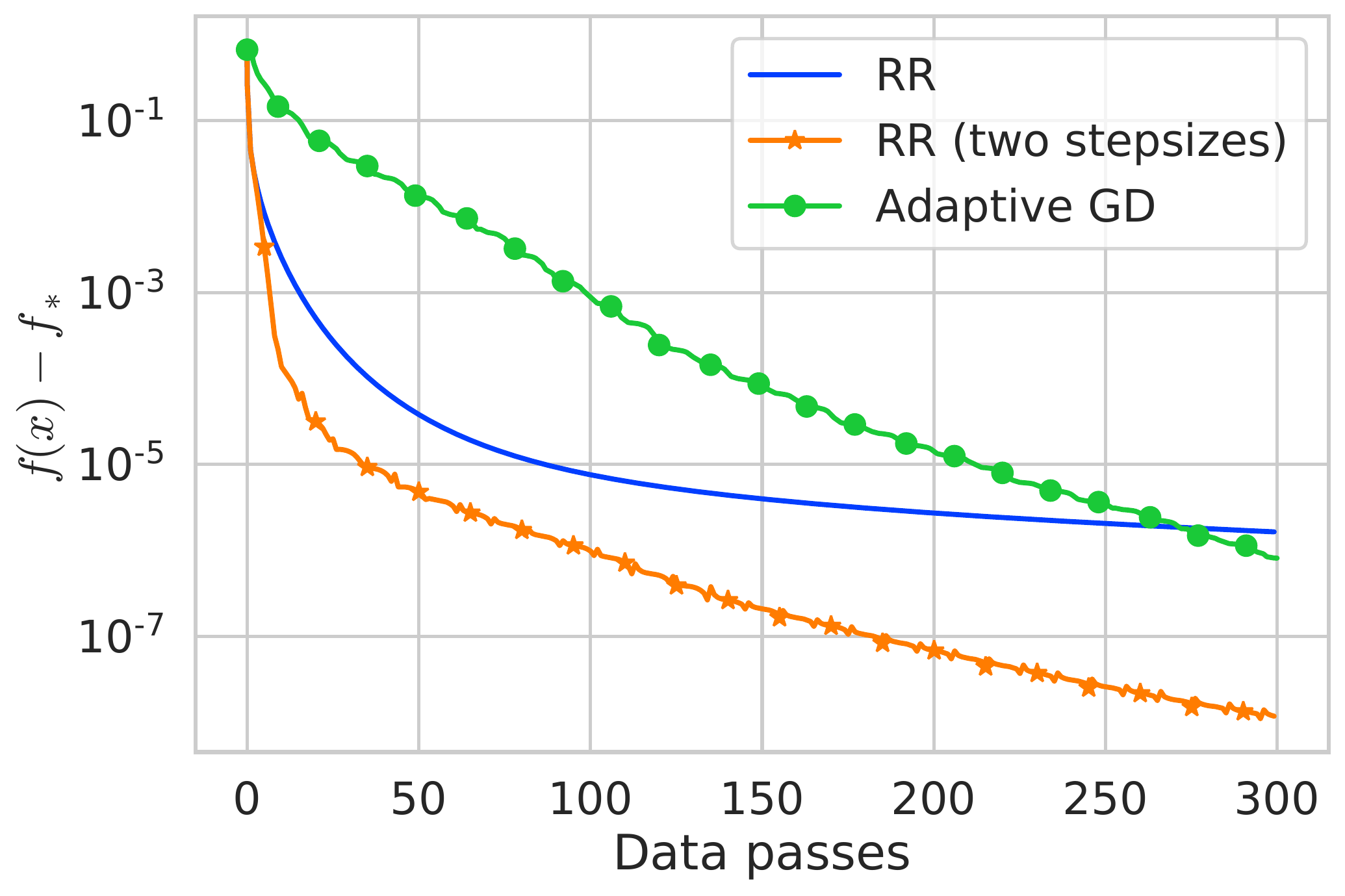}\includegraphics[scale=0.27]{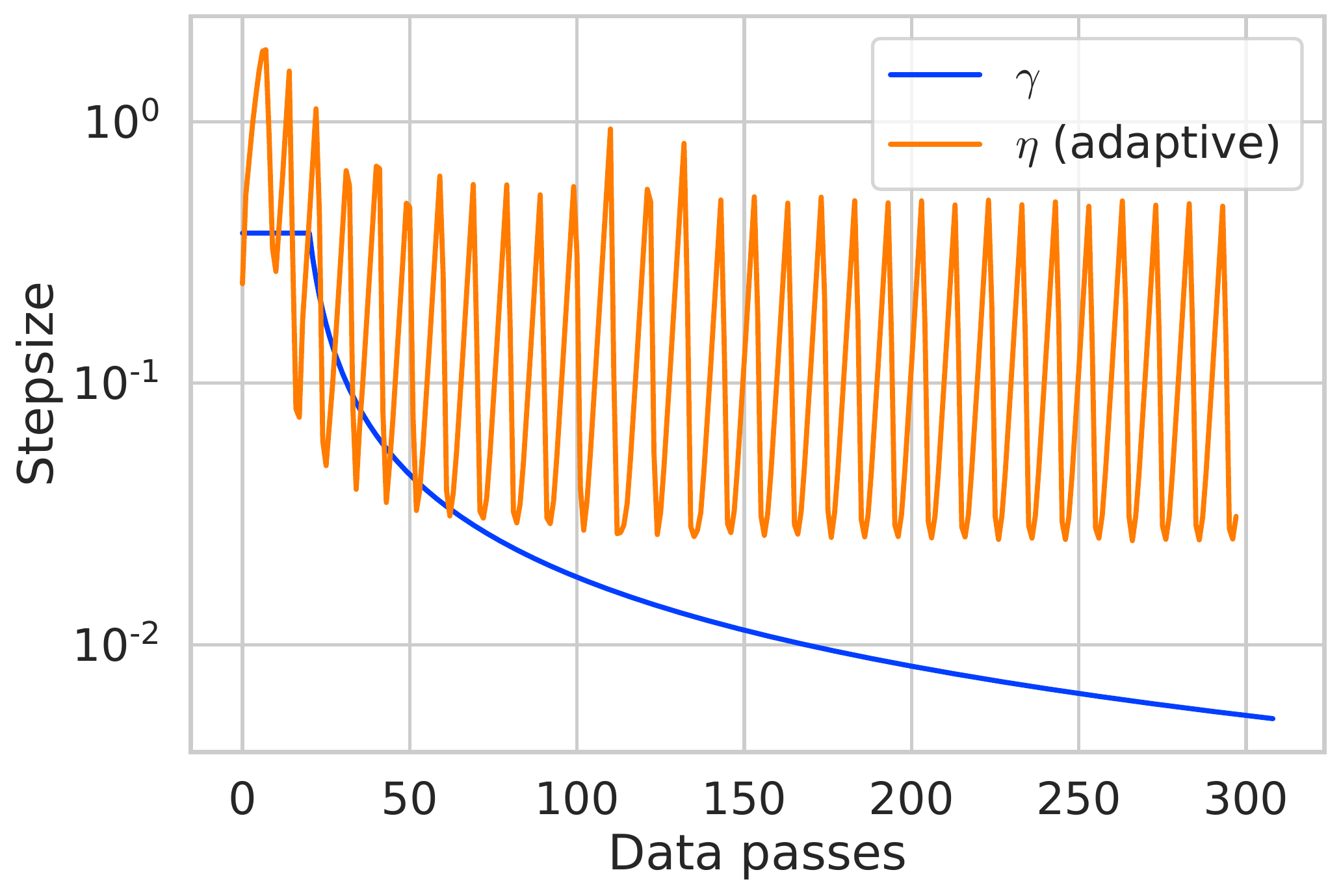}
		\includegraphics[scale=0.27]{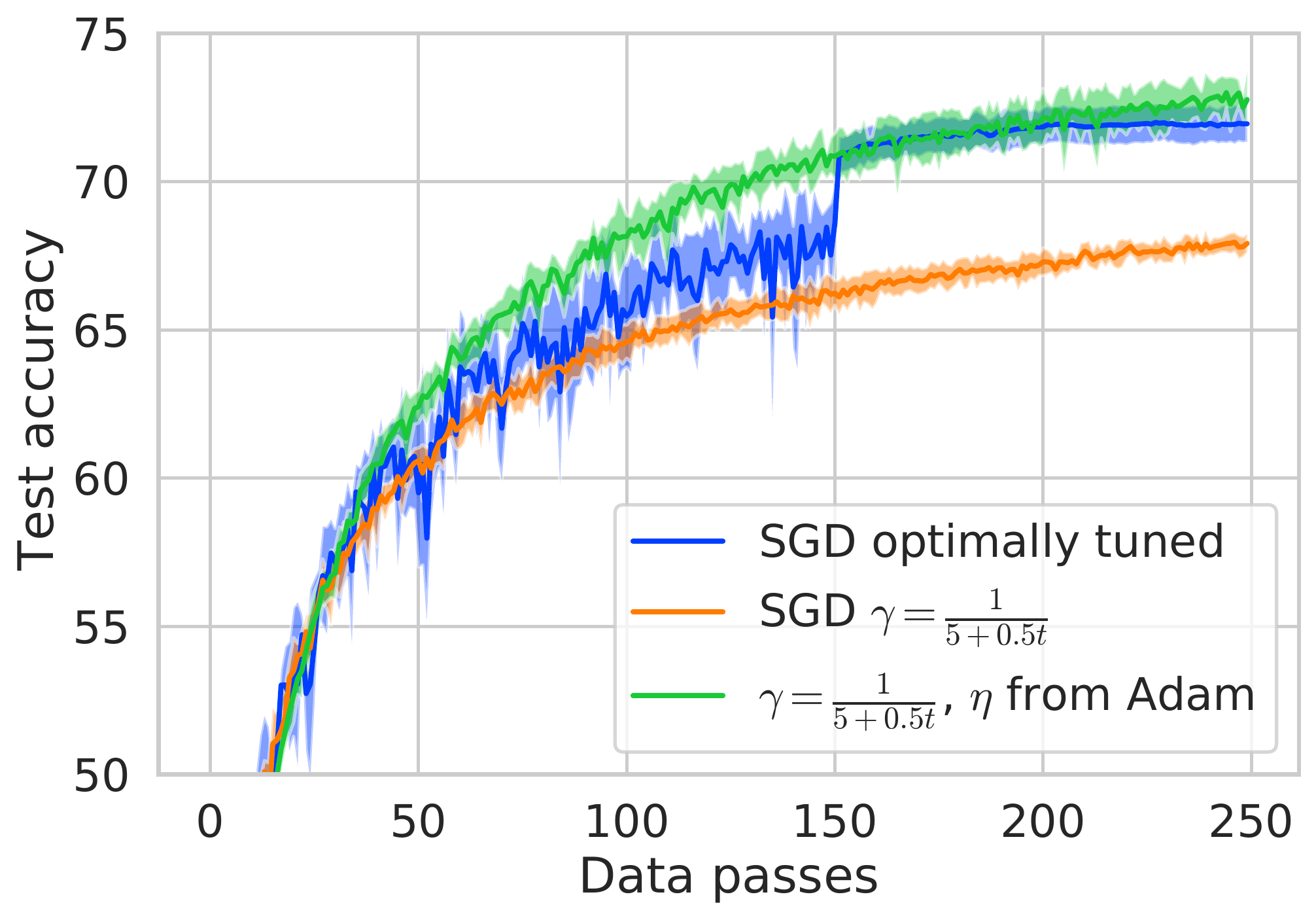}
	\end{tabular}	
	\caption{{\bf Left and middle:} We compare running standard Random Reshuffling (\algname{RR}), adaptive gradient descent (\algname{Adaptive GD}), and the combination of \algname{RR} with outer adaptive stepsize (\algname{Nastya}) (\algname{RR (two stepsizes)}) on logistic regression. As one can see, the variant with two stepsizes outperforms both of them and does not require more hyper-parameters than \algname{RR}, and the middle plot shows the exact values of $\cstep$ and $\sstep$. {\bf Right:} The right plot shows the training curves of LeNet on CIFAR-10 with minibatch size 1024, where we compare carefully tuned \algname{SGD} (blue) to poorly tuned \algname{SGD} (orange) and show that using \algname{Adam} optimizer with stepsize $10^{-2}$ after each data pass can significantly improve the poorly tuned version.} 
	\label{fig:logistic_and_lenet}
\end{figure*}

Our analysis shows that small client stepsizes can control variance. It turns out that using small client stepsizes means that we do not have any benefits from local steps. However, in some cases, our analysis shows that using small server stepsize and large client stepsizes can be beneficial and it means that we gain from using local steps. The advantage of local steps is obtained in case of data reshuffling \cite{mishchenko2021proximal}.  Moreover, the goal of learning is not obtaining the best value of the loss function, but the performance of the model. In recent papers, it was shown that large stepsizes are the better option in terms of generalization~\cite{smith2020origin}. 

Next, we introduce analysis for the case when each $f_i$ is strongly convex.

\begin{theorem}\label{th:small_alpha}
	Assume that all losses $f_{m,i}$ are $L$-smooth and $\mu$-strongly convex. Define $\alpha  = \frac{\sstep}{\cstep n}$. Let $\cstep\leq \frac{1}{L}$ and $0\leq \alpha <1$. Then, for iterates $x_t$ generated by Algorithm~\ref{alg:pp-jumping}, we have 
	\begin{align*}
		\mathbb{E}\left[\left\|x_{T}-x_{*}\right\|^{2}\right] &\squeeze \leq\left(1-\alpha+\alpha(1-\cstep \mu)^{n}\right)^{T}\left\|x_{0}-x_{*}\right\|^{2}\\
		&\squeeze +\frac{\alpha}{(1-\alpha)\left(1-(1-\cstep \mu)^{n}\right)} \cstepsquared  \frac{M-C}{C\max \left\lbrace M-1,1 \right\rbrace}\sigma_{*}^{2} \\
		&\squeeze +2 \cstepcubed \sigma_{\operatorname{rad}}^{2} \frac{1}{1-(1-\cstep \mu)^{n}} \sum\limits_{i=0}^{n-1}(1-\cstep \mu)^{i},
	\end{align*}
	\end{theorem}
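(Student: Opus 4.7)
\textbf{Proof plan for Theorem~\ref{th:small_alpha}.}

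The plan is to exploit the key identity $\sstep = \alpha \cstep n$ to rewrite one round of \algname{Nastya} as a convex combination, and then compose three ingredients: (i) convexity of $\|\cdot\|^2$ to separate $x_t$ from the aggregated local iterates, (ii) the without-replacement Lemma~\ref{lem:sampling_wo_replacement} for the cohort-sampling noise, and (iii) a per-client Random Reshuffling contraction that uses the $\mu$-strong convexity of each $f_m^i$. Concretely, since $g_{t,m}=\frac{1}{\cstep n}(x_t - x_{t,m}^n)$ and $\sstep=\alpha\cstep n$, the server step becomes
\[
 x_{t+1} \;=\; (1-\alpha)x_t \;+\; \frac{\alpha}{C}\sum_{m\in S_t} x_{t,m}^n,
\]
which is a genuine convex combination whenever $\alpha\in[0,1)$. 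This form makes the target contraction factor $1-\alpha+\alpha(1-\cstep\mu)^n$ appear naturally: a fraction $1-\alpha$ of the iterate stays put, and a fraction $\alpha$ is replaced by a RR pass that contracts at rate $(1-\cstep\mu)^n$.

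Next, I would condition on the random permutations (so that $x_{t,m}^n$ are fixed) and take the cohort expectation. Applying Lemma~\ref{lem:sampling_wo_replacement} to $X_m = x_{t,m}^n$ with $k=C$ splits the squared norm into a mean part and a variance part:
\[
\mathbb{E}_{S_t}\|x_{t+1}-x_*\|^2
 = \bigl\|(1-\alpha)(x_t-x_*)+\alpha(\bar x_t^n - x_*)\bigr\|^2
 + \alpha^2\,\tfrac{M-C}{C\max\{M-1,1\}}\,\tfrac{1}{M}\sum_m\|x_{t,m}^n-\bar x_t^n\|^2,
\]
where $\bar x_t^n = \tfrac{1}{M}\sum_m x_{t,m}^n$. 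The first term is controlled by convexity of $\|\cdot\|^2$ plus Jensen, giving at most $(1-\alpha)\|x_t-x_*\|^2 + \tfrac{\alpha}{M}\sum_m\|x_{t,m}^n-x_*\|^2$. For the dispersion in the second term I would use $x_{t,m}^n - x_t = -\cstep\sum_i \nabla f_m^{\pi_m^i}(x_{t,m}^i)$, and since the centered sum is dominated by the uncentered one, bound the dispersion by $\cstepsquared$ times an average of squared gradients along the trajectory; near the optimum these are controlled by $\sigma_*^2$ up to an $\|x_t-x_*\|^2$ term that can be absorbed into the contraction factor under the assumed stepsize restriction.

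The crucial local step is the RR contraction of $\mathbb{E}_{\pi_m}\|x_{t,m}^n-x_*\|^2$. Because each $f_m^i$ is $\mu$-strongly convex and $L$-smooth with $\cstep\leq 1/L$, the co-coercivity bound yields $\bigl\|x-\cstep\nabla f_m^i(x)-y+\cstep\nabla f_m^i(y)\bigr\|\leq (1-\cstep\mu)\|x-y\|$ for any $x,y$. Applying this to $y=x_*$, together with a Young's inequality tuned to $\beta = \cstep\mu/(1-\cstep\mu)$ (so that $(1+\beta)(1-\cstep\mu)^2\leq 1-\cstep\mu$), produces a one-step recursion of the form $\|x_{t,m}^{i+1}-x_*\|^2\leq (1-\cstep\mu)\|x_{t,m}^i-x_*\|^2 + c\cdot \cstepsquared\|\nabla f_m^{\pi_m^i}(x_*)\|^2$. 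Unrolling $n$ steps and taking the permutation expectation, and invoking Lemma~\ref{lem:sampling_wo_replacement} again to exploit the without-replacement variance reduction on the accumulated gradient-at-optimum terms, delivers
\[
\mathbb{E}_{\pi_m}\|x_{t,m}^n-x_*\|^2
 \;\leq\; (1-\cstep\mu)^n\|x_t-x_*\|^2 + 2\cstepcubed\sigma_{\mathrm{rad},m}^2\sum_{i=0}^{n-1}(1-\cstep\mu)^i,
\]
which is the RR noise appearing in the theorem.

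Putting the three pieces together gives a one-step recursion $R_{t+1}\leq \rho R_t + E$ with $\rho = 1-\alpha+\alpha(1-\cstep\mu)^n$, where $E$ splits into the RR contribution (proportional to $\alpha \cstepcubed\sigma_{\mathrm{rad}}^2 \sum_i (1-\cstep\mu)^i$) and the partial-participation contribution (proportional to $\alpha^2\cstepsquared\tfrac{M-C}{C\max\{M-1,1\}}\sigma_*^2$). Unrolling gives $R_T\leq \rho^T R_0 + E/(1-\rho)$, and the identity $1-\rho = \alpha(1-(1-\cstep\mu)^n)$ immediately produces the $1/(1-(1-\cstep\mu)^n)$ factor in front of both noise terms, while the cancellation of one $\alpha$ between numerator and denominator yields the final shapes. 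The main obstacle I anticipate is the partial-participation term: one has to route the dispersion through a Young-type splitting that absorbs a residual $\|x_t-x_*\|^2$ piece into the contraction (this is where stepsize restrictions $\cstep\leq 1/L$ and $\alpha<1$ are both genuinely needed) while keeping the leading client-sampling noise proportional to $\cstepsquared\sigma_*^2$; the $1/(1-\alpha)$ factor in the stated bound is exactly the price paid by this Young-inequality split.
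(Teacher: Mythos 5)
Your skeleton (rewrite the update as the convex combination $x_{t+1}=(1-\alpha)x_t+\frac{\alpha}{C}\sum_{m\in S_t}x^n_{t,m}$, split by convexity of $\|\cdot\|^2$, contract the local pass, control cohort sampling by Lemma~\ref{lem:sampling_wo_replacement}) matches the paper's starting point, but the way you anchor the two pieces breaks down, and this is where the paper does something genuinely different. The paper does not compare $x^n_{t,m}$ to the global optimum $x_*$; it compares the $(1-\alpha)$-part to the \emph{shifted} point $x_*+\frac{\alpha}{1-\alpha}\cstep n\nabla f_{S_t}(x_*)$ and the $\alpha$-part to $x_*-\cstep n\nabla f_{S_t}(x_*)$, the two shifts cancelling identically inside the convex combination. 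This matters because a local RR pass on client $m$ does not contract toward $x_*$ at all: its approximate fixed point is $x_*-\cstep n\nabla f_m(x_*)$, so $\mathbb{E}\|x^n_{t,m}-x_*\|^2$ necessarily contains a bias of order $\cstepsquared n^2\|\nabla f_m(x_*)\|^2$. Averaging your bound over clients therefore produces an $\alpha\,\cstepsquared n^2\sigma_*^2$ term \emph{without} the factor $\frac{M-C}{C\max\{M-1,1\}}$, i.e.\ a heterogeneity penalty that survives even at full participation $C=M$ — contradicting the theorem, whose $\sigma_*^2$ term vanishes when $C=M$. The shifted-optimum decomposition is precisely what preserves the cancellation $\frac{1}{M}\sum_m\nabla f_m(x_*)=0$: only the zero-mean cohort fluctuation $\nabla f_{S_t}(x_*)$ survives, and Lemma~\ref{lem:sampling_wo_replacement} applied to it (not to the iterates $x^n_{t,m}$) yields exactly the $\frac{M-C}{C\max\{M-1,1\}}\sigma_*^2$ variance with no cross term, since $\mathbb{E}_{S_t}[\nabla f_{S_t}(x_*)]=\nabla f(x_*)=0$.

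Two further steps in your plan do not deliver the stated constants. First, your one-step recursion $\|x^{i+1}_{t,m}-x_*\|^2\le(1-\cstep\mu)\|x^i_{t,m}-x_*\|^2+c\,\cstepsquared\|\nabla f_m^{\pi_m^i}(x_*)\|^2$ forces $c=\mathcal{O}(1/(\cstep\mu))$ through the Young split, so after unrolling the noise floor is of order $\frac{\cstep}{\mu}\sigma_{*,m}^2\sum_i(1-\cstep\mu)^i$, not $\cstepcubed\sigma_{\mathrm{rad}}^2\sum_i(1-\cstep\mu)^i$; the per-step split destroys the without-replacement cancellation among the $\nabla f_m^{\pi_m^i}(x_*)$, and recovering the $\cstepcubed$ scaling requires the shadow-sequence (perturbed-iterate) RR analysis, which the paper simply imports as a black box from \citet{mishchenko2021proximal} in the form $\|x^n_t-(x_*-\cstep n\nabla f_{S_t}(x_*))\|^2\le(1-\cstep\mu)^n\|x_t-x_*\|^2+2\cstepcubed\sigma_{\mathrm{rad}}^2\sum_{i=0}^{n-1}(1-\cstep\mu)^i$. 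Second, your plan to absorb the residual $\|x_t-x_*\|^2$ piece of the iterate dispersion into the contraction $1-\alpha(1-(1-\cstep\mu)^n)\approx 1-\alpha\cstep\mu n$ needs roughly $\alpha\cstep n L\cdot\frac{L}{\mu}\cdot\frac{M-C}{C\max\{M-1,1\}}\lesssim 1$, which does not follow from $\cstep\le 1/L$ and $\alpha<1$ alone — and the theorem is aimed precisely at the ill-conditioned regime where $\mu$ is tiny, so this absorption cannot be taken for granted. In the paper's route no such residual arises, because the first term's excess is exactly the variance of $\nabla f_{S_t}(x_*)$ at the fixed point $x_*$, not at the moving iterate.
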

where $\sigma^2_{\text{rad}}$ is introduced in \citep{mishchenko2021proximal} and it corresponds the variance of Random Reshuffling method. The upper bound depends on $\alpha$ in a nonlinear way, so the optimal value of $\alpha$ would often lie somewhere in the interval $(0, 1)$. Furthermore, the last term does not change with $\alpha$, so the optimal value $\alpha^*$ of $\alpha$ is completely determined by the first two terms.

Let us derive optimal $\alpha^*$ under some approximations. In particular, when for ill-conditioned problems where $\mu$ is sufficiently small, it holds $(1-\cstep \mu)^n \approx 1 - \cstep \mu n$. Ignoring the last term in the upper bound of \Cref{th:small_alpha}, which does not affect the value $\alpha^*$, and using $\frac{1}{1-\alpha}\le 2$ for $\alpha\le \frac{1}{2}$, we simplify the upper bound to
\begin{align*}	
	&\squeeze (1-\alpha+\alpha(1-\cstep\mu n))^T\|x_0 - x_*\|^2\\
	&\squeeze + \frac{2\alpha\cstepsquared }{1-(1-\cstep\mu n)} \frac{M-C}{C\max \left\lbrace M-1,1 \right\rbrace}\sigma_{*}^{2}  \\
	&\squeeze = (1-\alpha\cstep \mu n)^T\|x_0-x_*\|^2 + \frac{2\alpha\cstep}{\mu n} \frac{M-C}{C\max \left\lbrace M-1,1 \right\rbrace}\sigma_{*}^{2} .
\end{align*}
To have this upper bound smaller than some $\varepsilon\ge 0$, we need to use $\alpha = \cO\left(\frac{n \varepsilon C}{\cstep \sigma_*^2}\right)$ and $T= \cO(\frac{1}{\alpha\cstep\mu n}\log\frac{1}{\varepsilon})$, where we ignore constants unrelated to $\alpha, \cstep, \varepsilon, \mu$ and $n$. Thus, the server stepsize $\sstep=\alpha \cstep n$ should ideally be $\sstep=\cO\left(\frac{C \varepsilon}{\sigma_*^2} \right)$. In other words, it is better to decrease $\sstep$ if only a small subset of clients is used and the variance of client sampling $ \frac{M-C}{C\max \left\lbrace M-1,1 \right\rbrace}\sigma_{*}^{2} $ is large.

\section{Experiments} \label{sec:experiments}

To showcase the speed-up that can be obtained from the server-side stepsizes, we run a toy experiment in the single-node setup, i.e., we consider standard minimization of a finite-sum. We combine the local passes over the data with the adaptive estimation of smoothness proposed by~\cite{malitsky2019adaptive}. We run our experiment on $\ell_2$-regularized logistic regression with the `mushrooms' dataset from LibSVM~\citep{chang2011libsvm}. The results are reported in \Cref{fig:logistic_and_lenet}.

We use standard LeNet architecture, which is a 5-layer convolutional neural network, implemented in PyTorch~\citep{paszke2017automatic} and train them to classify images from the CIFAR-10 dataset~\citep{krizhevsky2009learning} with cross-entropy loss. At each iteration, we use a minibatch of size 1024. For the tuned \algname{SGD}, we start with stepsize 0.2 and divide by 10 at epochs 150 and 200. For the other version, we take \algname{SGD} with stepsize 0.2 and decrease as $\cO(\frac{1}{t})$, where $t$ is the epoch number.

For our method, we treat the full sum of gradients over epoch as an approximation of full gradient and use \algname{Adam} with stepsize 0.01 to improve this update. We can see from \Cref{fig:logistic_and_lenet} that by applying \algname{Adam}, we can improve the performance of \algname{SGD} with decreasing stepsize. At the same time, applying it to the tuned stepsize schedule only made the results much worse, so we do not report that line. This highlights that adaptive outer stepsizes are helpful when the base stepsize $\cstep$ is not chosen well, which is in line with our theory.

\bibliographystyle{plainnat}
\bibliography{jumping.bib}

\begin{thebibliography}{28}
\providecommand{\natexlab}[1]{#1}
\providecommand{\url}[1]{\texttt{#1}}
\expandafter\ifx\csname urlstyle\endcsname\relax
  \providecommand{\doi}[1]{doi: #1}\else
  \providecommand{\doi}{doi: \begingroup \urlstyle{rm}\Url}\fi

\bibitem[Alistarh et~al.(2018)Alistarh, Hoefler, Johansson, Khirirat,
  Konstantinov, and Renggli]{Alistarh-EF2018}
Dan Alistarh, Torsten Hoefler, Mikael Johansson, Sarit Khirirat, Nikola
  Konstantinov, and C\'{e}dric Renggli.
\newblock The convergence of sparsified gradient methods.
\newblock In \emph{Advances in Neural Information Processing Systems}, 2018.

\bibitem[Bengio(2012)]{bengio2012practical}
Yoshua Bengio.
\newblock Practical recommendations for gradient-based training of deep
  architectures.
\newblock In \emph{Neural Networks: Tricks of the trade}, pages 437--478.
  Springer, 2012.

\bibitem[Bottou(2009)]{Bottou2009CuriouslyFC}
L\'{e}on Bottou.
\newblock Curiously fast convergence of some stochastic gradient descent
  algorithms.
\newblock Unpublished open problem offered to the attendance of the SLDS 2009
  conference, 2009.

\bibitem[Chang and Lin(2011)]{chang2011libsvm}
Chih-Chung Chang and Chih-Jen Lin.
\newblock Lib{SVM}: a library for support vector machines.
\newblock \emph{ACM Transactions on Intelligent Systems and Technology},
  2\penalty0 (3):\penalty0 27, 2011.

\bibitem[Charles et~al.(2021)Charles, Garrett, Huo, Shmulyian, and
  Smith]{Cohort2021}
Zachary Charles, Zachary Garrett, Zhouyuan Huo, Sergei Shmulyian, and Virginia
  Smith.
\newblock On large-cohort training for federated learning.
\newblock \emph{arXiv preprint arXiv:2106.07820}, 2021.

\bibitem[Chen et~al.(2020)Chen, Horvath, and Richt{\'a}rik]{chen2020optimal}
Wenlin Chen, Samuel Horvath, and Peter Richt{\'a}rik.
\newblock Optimal client sampling for federated learning.
\newblock \emph{arXiv preprint arXiv:2010.13723}, 2020.

\bibitem[Gorbunov et~al.(2020)Gorbunov, Hanzely, and
  Richt\'{a}rik]{LSGDunified2020}
Eduard Gorbunov, Filip Hanzely, and Peter Richt\'{a}rik.
\newblock Local {SGD}: unified theory and new efficient methods.
\newblock In \emph{NeurIPS}, 2020.

\bibitem[Gorbunov et~al.(2021)Gorbunov, Burlachenko, Li, and
  Richt{\'a}rik]{gorbunov2021marina}
Eduard Gorbunov, Konstantin Burlachenko, Zhize Li, and Peter Richt{\'a}rik.
\newblock Marina: Faster non-convex distributed learning with compression.
\newblock 139:\penalty0 3788--3798, 18--24 Jul 2021.

\bibitem[Hsu et~al.(2019)Hsu, Qi, and Brown]{hsu2019measuring}
Tzu-Ming~Harry Hsu, Hang Qi, and Matthew Brown.
\newblock Measuring the effects of non-identical data distribution for
  federated visual classification.
\newblock \emph{arXiv preprint arXiv:1909.06335}, 2019.

\bibitem[Kairouz et~al.(2021)Kairouz, McMahan, Avent, Bellet, Bennis, Bhagoji,
  Bonawitz, Charles, Cormode, Cummings, et~al.]{kairouz2019advances}
Peter Kairouz, H.~Brendan McMahan, Brendan Avent, Aur{\'e}lien Bellet, Mehdi
  Bennis, Arjun~Nitin Bhagoji, Keith Bonawitz, Zachary Charles, Graham Cormode,
  Rachel Cummings, et~al.
\newblock Advances and open problems in federated learning.
\newblock \emph{Foundations and Trends{\textregistered} in Machine Learning},
  14\penalty0 (1), 2021.

\bibitem[Karimireddy et~al.(2020)Karimireddy, Kale, Mohri, Reddi, Stich, and
  Suresh]{karimireddy2020scaffold}
Sai~Praneeth Karimireddy, Satyen Kale, Mehryar Mohri, Sashank Reddi,
  Sebastian~U. Stich, and Ananda~Theertha Suresh.
\newblock {SCAFFOLD}: Stochastic controlled averaging for federated learning.
\newblock In \emph{International Conference on Machine Learning}, pages
  5132--5143. PMLR, 2020.

\bibitem[Khaled et~al.(2020)Khaled, Mishchenko, and
  Richt{\'a}rik]{khaled2020tighter}
Ahmed Khaled, Konstantin Mishchenko, and Peter Richt{\'a}rik.
\newblock Tighter theory for local {SGD} on identical and heterogeneous data.
\newblock In \emph{Proceedings of the 23rd International Conference on
  Artificial Intelligence and Statistics}, pages 4519--4529. PMLR, 2020.

\bibitem[Koloskova et~al.(2020)Koloskova, Loizou, Boreiri, Jaggi, and
  Stich]{koloskova2020unified}
Anastasia Koloskova, Nicolas Loizou, Sadra Boreiri, Martin Jaggi, and
  Sebastian~U. Stich.
\newblock A unified theory of decentralized {SGD} with changing topology and
  local updates.
\newblock In \emph{International Conference on Machine Learning}, pages
  5381--5393. PMLR, 2020.

\bibitem[Kone{\v{c}}n{\'y} et~al.(2016)Kone{\v{c}}n{\'y}, McMahan, Ramage, and
  Richt{\'a}rik]{konevcny2016federated_1}
Jakub Kone{\v{c}}n{\'y}, H.~Brendan McMahan, Daniel Ramage, and Peter
  Richt{\'a}rik.
\newblock Federated optimization: Distributed machine learning for on-device
  intelligence.
\newblock \emph{arXiv preprint arXiv:1610.02527}, 2016.

\bibitem[Kone\v{c}n\'{y} et~al.(2016)Kone\v{c}n\'{y}, McMahan, Yu,
  Richt\'{a}rik, Suresh, and Bacon]{konecny2016federated}
Jakub Kone\v{c}n\'{y}, H.~Brendan McMahan, Felix Yu, Peter Richt\'{a}rik,
  Ananda~Theertha Suresh, and Dave Bacon.
\newblock Federated learning: strategies for improving communication
  efficiency.
\newblock In \emph{NIPS Private Multi-Party Machine Learning Workshop}, 2016.

\bibitem[Krizhevsky et~al.(2009)Krizhevsky, Hinton,
  et~al.]{krizhevsky2009learning}
Alex Krizhevsky, Geoffrey Hinton, et~al.
\newblock Learning multiple layers of features from tiny images.
\newblock 2009.

\bibitem[Malitsky and Mishchenko(2020)]{malitsky2019adaptive}
Yura Malitsky and Konstantin Mishchenko.
\newblock Adaptive gradient descent without descent.
\newblock In \emph{Proceedings of the 37th International Conference on Machine
  Learning}, volume 119, pages 6702--6712. PMLR, 2020.

\bibitem[McMahan et~al.(2017)McMahan, Moore, Ramage, Hampson, and Ag\"{u}era~y
  Arcas]{mcmahan2017communication}
H.~Brendan McMahan, Eider Moore, Daniel Ramage, Seth Hampson, and Blaise
  Ag\"{u}era~y Arcas.
\newblock Communication-efficient learning of deep networks from decentralized
  data.
\newblock In \emph{Proceedings of the 20th International Conference on
  Artificial Intelligence and Statistics}, pages 1273--1282. PMLR, 2017.

\bibitem[Mishchenko et~al.(2020)Mishchenko, Khaled, and
  Richt{\'a}rik]{MKR2020rr}
Konstantin Mishchenko, Ahmed Khaled, and Peter Richt{\'a}rik.
\newblock {Random Reshuffling}: Simple analysis with vast improvements.
\newblock \emph{Advances in Neural Information Processing Systems},
  33:\penalty0 17309--17320, 2020.

\bibitem[Mishchenko et~al.(2021)Mishchenko, Khaled, and
  Richt{\'a}rik]{mishchenko2021proximal}
Konstantin Mishchenko, Ahmed Khaled, and Peter Richt{\'a}rik.
\newblock Proximal and federated random reshuffling.
\newblock \emph{arXiv preprint arXiv:2102.06704}, 2021.

\bibitem[Paszke et~al.(2017)Paszke, Gross, Chintala, Chanan, Yang, DeVito, Lin,
  Desmaison, Antiga, and Lerer]{paszke2017automatic}
Adam Paszke, Sam Gross, Soumith Chintala, Gregory Chanan, Edward Yang, Zachary
  DeVito, Zeming Lin, Alban Desmaison, Luca Antiga, and Adam Lerer.
\newblock Automatic differentiation in {PyTorch}.
\newblock 2017.

\bibitem[Reddi et~al.(2020)Reddi, Charles, Zaheer, Garrett, Rush,
  Kone{\v{c}}n{\'y}, Kumar, and McMahan]{reddi2020adaptive}
Sashank~J. Reddi, Zachary Charles, Manzil Zaheer, Zachary Garrett, Keith Rush,
  Jakub Kone{\v{c}}n{\'y}, Sanjiv Kumar, and H.~Brendan McMahan.
\newblock Adaptive federated optimization.
\newblock In \emph{International Conference on Learning Representations}, 2020.

\bibitem[Richt\'{a}rik et~al.(2021)Richt\'{a}rik, Sokolov, and
  Fatkhullin]{EF21}
Peter Richt\'{a}rik, Igor Sokolov, and Ilyas Fatkhullin.
\newblock {EF21}: A new, simpler, theoretically better, and practically faster
  error feedback.
\newblock \emph{arXiv preprint arXiv:2106.05203}, 2021.

\bibitem[Smith et~al.(2020)Smith, Dherin, Barrett, and De]{smith2020origin}
Samuel~L. Smith, Benoit Dherin, David Barrett, and Soham De.
\newblock On the origin of implicit regularization in stochastic gradient
  descent.
\newblock In \emph{International Conference on Learning Representations}, 2020.

\bibitem[Stich and Karimireddy(2019)]{stich2019}
Sebastian~U. Stich and Sai~Praneeth Karimireddy.
\newblock The error-feedback framework: Better rates for {SGD} with delayed
  gradients and compressed communication.
\newblock \emph{arXiv preprint arXiv:1909.05350}, 2019.

\bibitem[Sun(2020)]{sun20}
Ruo-Yu Sun.
\newblock Optimization for deep learning: An overview.
\newblock \emph{Journal of the Operations Research Society of China},
  8:\penalty0 1--46, 06 2020.
\newblock \doi{10.1007/s40305-020-00309-6}.

\bibitem[Woodworth et~al.(2020)Woodworth, Patel, and
  Srebro]{woodworth2020minibatch}
Blake~E. Woodworth, Kumar~Kshitij Patel, and Nati Srebro.
\newblock Minibatch vs local {SGD} for heterogeneous distributed learning.
\newblock In \emph{Advances in Neural Information Processing Systems},
  volume~33, pages 6281--6292. Curran Associates, Inc., 2020.

\bibitem[Yun et~al.(2021)Yun, Rajput, and Sra]{yun2021minibatch}
Chulhee Yun, Shashank Rajput, and Suvrit Sra.
\newblock Minibatch vs local {SGD} with shuffling: Tight convergence bounds and
  beyond.
\newblock \emph{arXiv preprint arXiv:2110.10342}, 2021.

\end{thebibliography}


\begin{thebibliography}{37}
\providecommand{\natexlab}[1]{#1}
\providecommand{\url}[1]{\texttt{#1}}
\expandafter\ifx\csname urlstyle\endcsname\relax
  \providecommand{\doi}[1]{doi: #1}\else
  \providecommand{\doi}{doi: \begingroup \urlstyle{rm}\Url}\fi

\bibitem[Bengio(2012)]{bengio2012practical}
Yoshua Bengio.
\newblock Practical recommendations for gradient-based training of deep
  architectures.
\newblock In \emph{Neural Networks: Tricks of the trade}, pages 437--478.
  Springer, 2012.

\bibitem[Bottou(2009)]{Bottou2009CuriouslyFC}
L\'{e}on Bottou.
\newblock Curiously fast convergence of some stochastic gradient descent
  algorithms.
\newblock Unpublished open problem offered to the attendance of the SLDS 2009
  conference, 2009.

\bibitem[Charles and Kone{\v{c}}n{\'y}(2020)]{charles2020outsized}
Zachary Charles and Jakub Kone{\v{c}}n{\'y}.
\newblock On the outsized importance of learning rates in local update methods.
\newblock \emph{arXiv preprint arXiv:2007.00878}, 2020.

\bibitem[Charles et~al.(2021)Charles, Garrett, Huo, Shmulyian, and
  Smith]{charles2021large}
Zachary Charles, Zachary Garrett, Zhouyuan Huo, Sergei Shmulyian, and Virginia
  Smith.
\newblock On large-cohort training for federated learning.
\newblock \emph{arXiv preprint arXiv:2106.07820}, 2021.

\bibitem[Chen et~al.(2020)Chen, Horvath, and Richt{\'a}rik]{chen2020optimal}
Wenlin Chen, Samuel Horvath, and Peter Richt{\'a}rik.
\newblock Optimal client sampling for federated learning.
\newblock \emph{arXiv preprint arXiv:2010.13723}, 2020.

\bibitem[Gorbunov et~al.(2021)Gorbunov, Burlachenko, Li, and
  Richt{\'a}rik]{gorbunov2021marina}
Eduard Gorbunov, Konstantin Burlachenko, Zhize Li, and Peter Richt{\'a}rik.
\newblock Marina: Faster non-convex distributed learning with compression.
\newblock 139:\penalty0 3788--3798, 18--24 Jul 2021.

\bibitem[G\"{u}rb\"{u}zbalaban et~al.(2019)G\"{u}rb\"{u}zbalaban,
  \"{O}zda\u{g}lar, and Parrilo]{Gurbuzbalaban2019RR}
Mert G\"{u}rb\"{u}zbalaban, Asuman \"{O}zda\u{g}lar, and Pablo~A. Parrilo.
\newblock Why random reshuffling beats stochastic gradient descent.
\newblock \emph{Mathematical Programming}, 2019.
\newblock ISSN 1436-4646.
\newblock \doi{10.1007/s10107-019-01440-w}.

\bibitem[Haddadpour and Mahdavi(2019)]{haddadpour2019convergence}
Farzin Haddadpour and Mehrdad Mahdavi.
\newblock On the convergence of local descent methods in federated learning.
\newblock \emph{arXiv preprint arXiv:1910.14425}, 2019.

\bibitem[Hsu et~al.(2019)Hsu, Qi, and Brown]{hsu2019measuring}
Tzu-Ming~Harry Hsu, Hang Qi, and Matthew Brown.
\newblock Measuring the effects of non-identical data distribution for
  federated visual classification.
\newblock \emph{arXiv preprint arXiv:1909.06335}, 2019.

\bibitem[Kairouz et~al.(2021)Kairouz, McMahan, Avent, Bellet, Bennis, Bhagoji,
  Bonawitz, Charles, Cormode, Cummings, et~al.]{kairouz2019advances}
Peter Kairouz, H.~Brendan McMahan, Brendan Avent, Aur{\'e}lien Bellet, Mehdi
  Bennis, Arjun~Nitin Bhagoji, Keith Bonawitz, Zachary Charles, Graham Cormode,
  Rachel Cummings, et~al.
\newblock Advances and open problems in federated learning.
\newblock \emph{Foundations and Trends{\textregistered} in Machine Learning},
  14\penalty0 (1), 2021.

\bibitem[Karimireddy et~al.(2020{\natexlab{a}})Karimireddy, Jaggi, Kale, Mohri,
  Reddi, Stich, and Suresh]{karimireddy2020mime}
Sai~Praneeth Karimireddy, Martin Jaggi, Satyen Kale, Mehryar Mohri, Sashank~J
  Reddi, Sebastian~U. Stich, and Ananda~Theertha Suresh.
\newblock Mime: Mimicking centralized stochastic algorithms in federated
  learning.
\newblock \emph{arXiv preprint arXiv:2008.03606}, 2020{\natexlab{a}}.

\bibitem[Karimireddy et~al.(2020{\natexlab{b}})Karimireddy, Kale, Mohri, Reddi,
  Stich, and Suresh]{karimireddy2020scaffold}
Sai~Praneeth Karimireddy, Satyen Kale, Mehryar Mohri, Sashank Reddi,
  Sebastian~U. Stich, and Ananda~Theertha Suresh.
\newblock {SCAFFOLD}: Stochastic controlled averaging for federated learning.
\newblock In \emph{International Conference on Machine Learning}, pages
  5132--5143. PMLR, 2020{\natexlab{b}}.

\bibitem[Khaled et~al.(2019)Khaled, Mishchenko, and
  Richt{\'a}rik]{khaled2019analysis}
Ahmed Khaled, Konstantin Mishchenko, and Peter Richt{\'a}rik.
\newblock First analysis of local {GD} on heterogeneous data.
\newblock \emph{The 2nd International Workshop on Federated Learning for Data
  Privacy and Confidentiality, arXiv preprint arXiv:1909.04715}, 2019.

\bibitem[Khaled et~al.(2020)Khaled, Mishchenko, and
  Richt{\'a}rik]{khaled2020tighter}
Ahmed Khaled, Konstantin Mishchenko, and Peter Richt{\'a}rik.
\newblock Tighter theory for local {SGD} on identical and heterogeneous data.
\newblock In \emph{Proceedings of the 23rd International Conference on
  Artificial Intelligence and Statistics}, pages 4519--4529. PMLR, 2020.

\bibitem[Koloskova et~al.(2020)Koloskova, Loizou, Boreiri, Jaggi, and
  Stich]{koloskova2020unified}
Anastasia Koloskova, Nicolas Loizou, Sadra Boreiri, Martin Jaggi, and
  Sebastian~U. Stich.
\newblock A unified theory of decentralized {SGD} with changing topology and
  local updates.
\newblock In \emph{International Conference on Machine Learning}, pages
  5381--5393. PMLR, 2020.

\bibitem[Kone{\v{c}}n{\'y} et~al.(2016)Kone{\v{c}}n{\'y}, McMahan, Ramage, and
  Richt{\'a}rik]{konevcny2016federated_1}
Jakub Kone{\v{c}}n{\'y}, H.~Brendan McMahan, Daniel Ramage, and Peter
  Richt{\'a}rik.
\newblock Federated optimization: Distributed machine learning for on-device
  intelligence.
\newblock \emph{arXiv preprint arXiv:1610.02527}, 2016.

\bibitem[Kone\v{c}n\'{y} et~al.(2016)Kone\v{c}n\'{y}, McMahan, Yu,
  Richt\'{a}rik, Suresh, and Bacon]{konecny2016federated}
Jakub Kone\v{c}n\'{y}, H.~Brendan McMahan, Felix Yu, Peter Richt\'{a}rik,
  Ananda~Theertha Suresh, and Dave Bacon.
\newblock Federated learning: strategies for improving communication
  efficiency.
\newblock In \emph{NIPS Private Multi-Party Machine Learning Workshop}, 2016.

\bibitem[Li et~al.(2020)Li, Sahu, Zaheer, Sanjabi, Talwalkar, and
  Smith]{li2020federated}
Tian Li, Anit~Kumar Sahu, Manzil Zaheer, Maziar Sanjabi, Ameet Talwalkar, and
  Virginia Smith.
\newblock Federated optimization in heterogeneous networks.
\newblock In \emph{Proceedings of Machine Learning and Systems}, volume~2,
  pages 429--450, 2020.

\bibitem[Li et~al.(2019)Li, Huang, Yang, Wang, and Zhang]{li2019convergence}
Xiang Li, Kaixuan Huang, Wenhao Yang, Shusen Wang, and Zhihua Zhang.
\newblock On the convergence of {FedAvg} on non-iid data.
\newblock In \emph{International Conference on Learning Representations}, 2019.

\bibitem[Malinovskiy et~al.(2020)Malinovskiy, Kovalev, Gasanov, Condat, and
  Richt{\'a}rik]{malinovskiy2020local}
Grigory Malinovskiy, Dmitry Kovalev, Elnur Gasanov, Laurent Condat, and Peter
  Richt{\'a}rik.
\newblock From local sgd to local fixed-point methods for federated learning.
\newblock In \emph{International Conference on Machine Learning}, pages
  6692--6701. PMLR, 2020.

\bibitem[Malinovsky et~al.(2021)Malinovsky, Sailanbayev, and
  Richt{\'a}rik]{malinovsky2021random}
Grigory Malinovsky, Alibek Sailanbayev, and Peter Richt{\'a}rik.
\newblock {Random Reshuffling} with variance reduction: New analysis and better
  rates.
\newblock \emph{arXiv preprint arXiv:2104.09342}, 2021.

\bibitem[Malitsky and Mishchenko(2020)]{malitsky2019adaptive}
Yura Malitsky and Konstantin Mishchenko.
\newblock Adaptive gradient descent without descent.
\newblock In \emph{Proceedings of the 37th International Conference on Machine
  Learning}, volume 119, pages 6702--6712. PMLR, 2020.

\bibitem[Mangasarian and Solodov(1994)]{Mangasarian1994}
Olvi~L. Mangasarian and Mikhail~V. Solodov.
\newblock Serial and parallel backpropagation convergence via nonmonotone
  perturbed minimization.
\newblock \emph{Optimization Methods and Software}, 4\penalty0 (2):\penalty0
  103--116, 1994.
\newblock \doi{10.1080/10556789408805581}.

\bibitem[McMahan et~al.(2017)McMahan, Moore, Ramage, Hampson, and Ag\"{u}era~y
  Arcas]{mcmahan2017communication}
H.~Brendan McMahan, Eider Moore, Daniel Ramage, Seth Hampson, and Blaise
  Ag\"{u}era~y Arcas.
\newblock Communication-efficient learning of deep networks from decentralized
  data.
\newblock In \emph{Proceedings of the 20th International Conference on
  Artificial Intelligence and Statistics}, pages 1273--1282. PMLR, 2017.

\bibitem[Mishchenko et~al.(2020)Mishchenko, Khaled, and
  Richt{\'a}rik]{MKR2020rr}
Konstantin Mishchenko, Ahmed Khaled, and Peter Richt{\'a}rik.
\newblock {Random Reshuffling}: Simple analysis with vast improvements.
\newblock \emph{Advances in Neural Information Processing Systems},
  33:\penalty0 17309--17320, 2020.

\bibitem[Mishchenko et~al.(2021)Mishchenko, Khaled, and
  Richt{\'a}rik]{mishchenko2021proximal}
Konstantin Mishchenko, Ahmed Khaled, and Peter Richt{\'a}rik.
\newblock Proximal and federated random reshuffling.
\newblock \emph{arXiv preprint arXiv:2102.06704}, 2021.

\bibitem[Nedi{\'{c}} and Bertsekas(2001)]{Nedic2001}
Angelia Nedi{\'{c}} and Dimitri~P. Bertsekas.
\newblock Incremental subgradient methods for nondifferentiable optimization.
\newblock \emph{{SIAM} Journal on Optimization}, 12\penalty0 (1):\penalty0
  109--138, 2001.
\newblock \doi{10.1137/s1052623499362111}.

\bibitem[Reddi et~al.(2020)Reddi, Charles, Zaheer, Garrett, Rush,
  Kone{\v{c}}n{\'y}, Kumar, and McMahan]{reddi2020adaptive}
Sashank~J Reddi, Zachary Charles, Manzil Zaheer, Zachary Garrett, Keith Rush,
  Jakub Kone{\v{c}}n{\'y}, Sanjiv Kumar, and H.~Brendan McMahan.
\newblock Adaptive federated optimization.
\newblock In \emph{International Conference on Learning Representations}, 2020.

\bibitem[Reisizadeh et~al.(2020)Reisizadeh, Mokhtari, Hassani, Jadbabaie, and
  Pedarsani]{reisizadeh2020fedpaq}
Amirhossein Reisizadeh, Aryan Mokhtari, Hamed Hassani, Ali Jadbabaie, and
  Ramtin Pedarsani.
\newblock Fedpaq: A communication-efficient federated learning method with
  periodic averaging and quantization.
\newblock In \emph{International Conference on Artificial Intelligence and
  Statistics}, pages 2021--2031. PMLR, 2020.

\bibitem[Smith et~al.(2020)Smith, Dherin, Barrett, and De]{smith2020origin}
Samuel~L Smith, Benoit Dherin, David Barrett, and Soham De.
\newblock On the origin of implicit regularization in stochastic gradient
  descent.
\newblock In \emph{International Conference on Learning Representations}, 2020.

\bibitem[Stich(2019)]{Stich2018}
Sebastian~U. Stich.
\newblock Local {SGD} converges fast and communicates little.
\newblock In \emph{International Conference on Learning Representations}, 2019.

\bibitem[Sun(2020)]{sun20}
Ruo-Yu Sun.
\newblock Optimization for deep learning: An overview.
\newblock \emph{Journal of the Operations Research Society of China},
  8:\penalty0 1--46, 06 2020.
\newblock \doi{10.1007/s40305-020-00309-6}.

\bibitem[Wang et~al.(2019)Wang, Yurochkin, Sun, Papailiopoulos, and
  Khazaeni]{wang2019federated}
Hongyi Wang, Mikhail Yurochkin, Yuekai Sun, Dimitris Papailiopoulos, and
  Yasaman Khazaeni.
\newblock Federated learning with matched averaging.
\newblock In \emph{International Conference on Learning Representations}, 2019.

\bibitem[Wang et~al.(2020)Wang, Tantia, Ballas, and Rabbat]{wang2020lookahead}
Jianyu Wang, Vinayak Tantia, Nicolas Ballas, and Michael Rabbat.
\newblock Lookahead converges to stationary points of smooth non-convex
  functions.
\newblock In \emph{ICASSP 2020-2020 IEEE International Conference on Acoustics,
  Speech and Signal Processing (ICASSP)}, pages 8604--8608. IEEE, 2020.

\bibitem[Woodworth et~al.(2020)Woodworth, Patel, and
  Srebro]{woodworth2020minibatch}
Blake~E. Woodworth, Kumar~Kshitij Patel, and Nati Srebro.
\newblock Minibatch vs local {SGD} for heterogeneous distributed learning.
\newblock In \emph{Advances in Neural Information Processing Systems},
  volume~33, pages 6281--6292. Curran Associates, Inc., 2020.

\bibitem[Ying et~al.(2020)Ying, Yuan, and Sayed]{ying2020variance}
Bicheng Ying, Kun Yuan, and Ali~H. Sayed.
\newblock Variance-reduced stochastic learning under random reshuffling.
\newblock \emph{IEEE Transactions on Signal Processing}, 68:\penalty0
  1390--1408, 2020.

\bibitem[Zhang et~al.(2019)Zhang, Lucas, Hinton, and Ba]{zhang2019lookahead}
Michael~R. Zhang, James Lucas, Geoffrey Hinton, and Jimmy Ba.
\newblock Lookahead optimizer: $k$ steps forward, 1 step back.
\newblock volume~32, pages 9597--–9608, 2019.

\end{thebibliography}

\clearpage
\appendix
\onecolumn

\part*{Appendix}

\tableofcontents

\clearpage
\clearpage
\section{Basic Facts and Notation}
\subsection{Basic facts}
For any two vectors $a, b \in \R^d$ and any $\zeta > 0$,
\begin{equation}
	\label{eq:young}
	2 \ev{a, b} \leq \frac{\sqn{a}}{\zeta} + \zeta \sqn{b}.
\end{equation}
A consequence of \eqref{eq:young} is that for any $a, b \in \R^d$, we have
\begin{equation}
	\label{eq:sqnorm-triangle}
	\sqn{a + b} \leq \br{1 + \zeta} \sqn{a} + \br{1 + \zeta^{-1}} \sqn{b}.
\end{equation}
Using $\zeta = 1$ specifically yields,
\begin{equation}
	\label{eq:sqnorm-triangle-2}
	\sqn{a + b} \leq 2 \sqn{a} + 2 \sqn{b}.
\end{equation}
A function $h:\R^d\to \R$ is called $\mu$-convex if for some $\mu \geq 0$ and for all $x, y \in \R^d$, we have
\begin{equation}
	\label{eq:mu-convexity}
	h(x) + \ev{\nabla h(x), y - x} + \frac{\mu}{2} \sqn{y - x} \leq h(y).
\end{equation}
Function $h:\R^d\to \R$ is called $L$-smooth if for some $L\geq 0$ and for all $x, y \in \R^d$, we have
\begin{equation}
	\label{eq:nabla-Lip}
	\norm{\nabla h(x) - \nabla h(y)} \leq L \norm{x - y}.
\end{equation}
A useful consequence of $L$-smoothness is the inequality
\begin{equation}
	\label{eq:L-smoothness}
	h(x) \leq h(y) + \ev{\nabla h(y), x - y} + \frac{L}{2} \sqn{x - y},
\end{equation}
holding for all $x,y\in \R^d$. If $h$ is $L$-smooth and lower bounded by $h_\ast$, then
\begin{equation}
	\label{eq:grad-bound}
	\sqn{\nabla h(x)} \leq 2 L \br{h(x) - h_\ast}.
\end{equation}
For any convex and $L$-smooth function $h$ it holds
\begin{align}
	\norm{\nabla h(x) - \nabla h(y)}^2 \le 2L D_h(x, y). \label{eq:grad_dif_to_bregman}
\end{align}

For a convex function $h\colon \R^d \to \R$ and any vectors $y_1,\dots,y_n \in \R^d$,  Jensen's inequality states that
\begin{equation}
	\label{eq:jensen}
	h\br{\frac{1}{n} \sum\limits_{i=1}^{n} y_i} \leq \frac{1}{n} \sum\limits_{i=1}^{n} h(y_i).
\end{equation}
Applying this to the squared norm, $h(y) = \sqn{y}$, we get
\begin{equation}
	\label{eq:sqnorm-jensen}
	\sqn{ \frac{1}{n} \sum\limits_{i=1}^{n} y_i } \leq \frac{1}{n} \sum\limits_{i=1}^{n} \sqn{y_i}.
\end{equation}
Simple multiplication on both sides of \eqref{eq:sqnorm-jensen} also yields,
\begin{equation}
	\label{eq:sqnorm-sum-bound}
	\sqn{\sum\limits_{i=1}^{n} y_i} \leq n \sum\limits_{i=1}^{n} \sqn{y_i}.
\end{equation}
We use the following decomposition that holds for any random variable $X$ with $\ec{\norm{X}^2}<+\infty$,
\begin{align}
	\ec{\norm{X}^2}=\norm{\ec{X}}^2 + \ec{\norm{X-\ec{X}}^2}. \label{eq:rv_moments}
\end{align}
We will make use of the particularization of \eqref{eq:rv_moments} to the discrete case: Let $y_{1}, \ldots, y_{n} \in \R^d$ be given vectors and let $\bar{y} = \frac{1}{n} \sum\limits_{i=1}^{n} y_i$ be their average. Then,
\begin{equation}
	\label{eq:variance-decomp}
	\frac{1}{n} \sum\limits_{i=1}^{n} \sqn{y_{i}} = \sqn{ \bar{y} } + \frac{1}{n} \sum\limits_{i=1}^{n} \sqn{y_i - \bar{y}}.
\end{equation}

\subsection{Notation}
We define the variance of the local gradients from their average at a point $x_t$ as
\[ \sigma_{t}^2 \eqdef \frac{1}{n} \sum\limits_{j=1}^{n} \sqn{\nabla f_{j} (x_t) - \nabla f(x_t)}. \]

A summary of the notation used is given in Table~\ref{tab:notation}.

\begin{table}[h]
	\centering
	\caption{Summary of notation used.}
	\label{tab:notation}
	\begin{tabular}{@{}cl@{}}
		\toprule
		Symbol               & Description                                                                                                                                    \\ \midrule
		$x_t$                & The iterate used at the start of epoch $t$.                                                                                                    \\ \midrule
		$\pi_m$ &
		\begin{tabular}[c]{@{}l@{}}A permutation $\pi_m = \br{ \pi^{0}_m, \pi^{1}_m, \ldots, \pi_m^{n-1} }$ of $\{ 1, 2, \ldots, n \}$,\\ which is resampled every epoch for Random Reshuffling.\end{tabular} \\ \midrule
		$\cstep$             & The stepsize used when taking descent steps in an epoch.                                                                                       \\ \midrule
		$x_{t,m}^i$              & The current iterate after $i$ steps in epoch $t$, for $0 \leq i \leq n$.                                                                       \\ \midrule
		$g_t$                & The sum of gradients used over epoch $t$ such that $x_{t+1}=x_t-\sstep g_t$.                                                                                                      \\ \midrule
		$\beta$              & The epoch jumping parameter.                                                                                                                   \\ \midrule
		$\sstep$               & The effective epoch stepsize, defined as $\sstep \eqdef \cstep \br{1 + \beta}n$.                                                                  \\ \midrule
		$\sigma_{t}^2$       & The variance of the individual loss gradients from the average loss at point $x_t$.                                                                                                                                                     \\ \midrule
		$L$                  & The smoothness constant of $f$ and each $f^i_{m}$.                                                                              \\ \midrule
		$\delta_{t}$         & \begin{tabular}[c]{@{}c@{}}Functional suboptimality, $\delta_{t} = f(x_t) - f_\ast$, where $f_\ast= \inf_{x} f(x)$.\end{tabular}     \\ \bottomrule
	\end{tabular}
\end{table}
\subsection{Sampling without replacement}
We provide the full proof of Lemma~\ref{lem:sampling_wo_replacement}.
\begin{lemma_empt}
	Let $X_1,\dotsc, X_n\in \R^d$ be fixed vectors, $\overline X\eqdef \frac{1}{n}\sum\limits_{i=1}^n X_i$ be their average and $\sigma^2 \eqdef \frac{1}{n}\sum\limits_{i=1}^n \norm{X_i-\overline X}^2$ be the population variance. Fix any $k\in\{1,\dotsc, n\}$, let $X_{\pi_1}, \dotsc X_{\pi_k}$ be sampled uniformly without replacement from $\{X_1,\dotsc, X_n\}$ and $\overline X_\pi$ be their average. Then, it holds
	\begin{align}
		\ec{\overline X_\pi}=\overline X, && \ec{\norm{\overline X_{\pi} - \overline X}^2}= \frac{n-k}{k(n-1)}\sigma^2.
	\end{align}
\end{lemma_empt}
\begin{proof}
	The first claim follows by linearity of the expectation and uniformity of the sampling,
	\begin{align*}
		\ec{\overline X_\pi} 
		= \frac{1}{k}\sum\limits_{i=1}^k \ec{X_{\pi_i}}
		= \frac{1}{k}\sum\limits_{i=1}^k \overline X
		= \overline X.
	\end{align*}
	To show the second claim, let us first establish that for any $i\neq j$ it holds $\mathrm{cov}(X_{\pi_i}, X_{\pi_j})=-\frac{\sigma^2}{n-1}$. Indeed,
	\begin{align*}
		\mathrm{cov}(X_{\pi_i}, X_{\pi_j})
		&= \ec{ \ev{X_{\pi_i} - \overline X, X_{\pi_j} - \overline X}}
		= \frac{1}{n(n-1)}\sum\limits_{l=1}^n\sum\limits_{m\neq l}\ev{X_l - \overline X, X_m - \overline X} \\
		&= \frac{1}{n(n-1)}\sum\limits_{l=1}^n\sum\limits_{m=1}^n\ev{X_l - \overline X, X_m - \overline X} - \frac{1}{n(n-1)}\sum\limits_{l=1}^n \norm{X_l - \overline X}^2 \\
		&= \frac{1}{n(n-1)}\sum\limits_{l=1}^n \ev{X_l - \overline X, \sum\limits_{m=1}^n(X_m - \overline X)} - \frac{\sigma^2}{n-1} \\
		&=-\frac{\sigma^2}{n-1}.
	\end{align*}
	Therefore,
	\begin{align*}
		\ecn{\overline X_{\pi} - \overline X}
		&= \frac{1}{k^2} \sum\limits_{i=1}^k\sum\limits_{j=1}^k \mathrm{cov}(X_{\pi_i}, X_{\pi_j}) \\
		&= \frac{1}{k^2}\ec{\sum\limits_{i=1}^k \sqn{X_{\pi_i} - \overline X}} + \sum\limits_{i=1}^k\sum\limits_{j=1,j\neq i}^{n} \mathrm{cov}(X_{\pi_i}, X_{\pi_j})  \\
		&=\frac{1}{k^2}\left(k\sigma^2 - k(k-1)\frac{\sigma^2}{n-1}\right)
		= \frac{n-k}{k(n-1)}\sigma^2.
	\end{align*}
\end{proof}

\section{Large Server Stepsize}
\subsection{Strongly convex and general convex case}\label{section:C.1}
\begin{lemma}
	\label{lemma:inner-product-convex}
	Let Assumption~\ref{assump: L-smooth_1} holds and further assume $f$ is $\mu$-strongly convex and each $f_m^i$ is convex. Then
	\begin{align*}
		- \frac{1}{Mn}\sum\limits_{m=1}^{M}\sum\limits_{i=0}^{n-1}\left\langle f_{m}^{\pi^i_m} \left(x^i_{t,m}\right),x_t - x_*  \right\rangle &\leq -\frac{\mu}{4}\|x_t - x_*\|^2 - \frac{1}{2}\left(f\left(x_t\right) - f\left(x_*\right)\right) +\frac{L}{2Mn}\sum\limits_{m=1}^{M}\sum\limits_{i=0}^{n-1} \left\|x_t - x^i_{t,m}\right\|^2.  
	\end{align*}
\end{lemma}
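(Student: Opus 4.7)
The plan is to split the inner product using the point $x_{t,m}^i$ as an intermediate, handle each piece with one of the two inequalities we have available (convexity of $f_m^{\pi_m^i}$ versus smoothness of $f_m^{\pi_m^i}$), and then use strong convexity of $f$ to carve out the $\tfrac{\mu}{4}\|x_t-x_\ast\|^2$ term from the function-value gap.

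Concretely, first I would write $x_t - x_\ast = (x_t - x_{t,m}^i) + (x_{t,m}^i - x_\ast)$ and expand
\begin{align*}
-\bigl\langle \nabla f_m^{\pi_m^i}(x_{t,m}^i),\, x_t - x_\ast\bigr\rangle
= -\bigl\langle \nabla f_m^{\pi_m^i}(x_{t,m}^i),\, x_t - x_{t,m}^i\bigr\rangle
  -\bigl\langle \nabla f_m^{\pi_m^i}(x_{t,m}^i),\, x_{t,m}^i - x_\ast\bigr\rangle.
\end{align*}
For the second summand, convexity of $f_m^{\pi_m^i}$ gives the lower bound $f_m^{\pi_m^i}(x_{t,m}^i)-f_m^{\pi_m^i}(x_\ast)$ on the inner product, hence an upper bound on its negation. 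For the first summand, the descent-lemma form of $L$-smoothness applied at the base point $x_{t,m}^i$ and evaluated at $x_t$ yields
\begin{align*}
-\bigl\langle \nabla f_m^{\pi_m^i}(x_{t,m}^i),\, x_t - x_{t,m}^i\bigr\rangle
\le f_m^{\pi_m^i}(x_{t,m}^i) - f_m^{\pi_m^i}(x_t) + \tfrac{L}{2}\|x_t - x_{t,m}^i\|^2.
\end{align*}

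Next, I would average over $i\in\{0,\ldots,n-1\}$ and $m\in\{1,\ldots,M\}$. The $f_m^{\pi_m^i}(x_{t,m}^i)$ terms cancel between the two bounds, the term $\tfrac{1}{Mn}\sum_{m,i} f_m^{\pi_m^i}(x_t)$ collapses to $f(x_t)$ since $\pi_m$ is a permutation of $\{1,\dots,n\}$, and similarly $\tfrac{1}{Mn}\sum_{m,i} f_m^{\pi_m^i}(x_\ast) = f(x_\ast)$. This leaves
\begin{align*}
-\tfrac{1}{Mn}\sum_{m,i}\bigl\langle \nabla f_m^{\pi_m^i}(x_{t,m}^i),\, x_t-x_\ast\bigr\rangle
\le -\bigl(f(x_t)-f(x_\ast)\bigr) + \tfrac{L}{2Mn}\sum_{m,i}\|x_t-x_{t,m}^i\|^2.
\end{align*}

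Finally, to produce the $-\tfrac{\mu}{4}\|x_t-x_\ast\|^2 - \tfrac{1}{2}(f(x_t)-f(x_\ast))$ form required in the statement, I would split the $-(f(x_t)-f(x_\ast))$ piece in half and use $\mu$-strong convexity of $f$ (applied at the minimizer, giving $f(x_t)-f(x_\ast)\ge \tfrac{\mu}{2}\|x_t-x_\ast\|^2$) on one of the halves. This yields $-(f(x_t)-f(x_\ast)) \le -\tfrac{1}{2}(f(x_t)-f(x_\ast)) - \tfrac{\mu}{4}\|x_t-x_\ast\|^2$, completing the proof. There is no real obstacle here; the only mild subtleties are (i) remembering that the sum over a permutation equals the sum over the unpermuted index set (so the random $\pi_m^i$ creates no measurability issue in this deterministic per-realization inequality), and (ii) choosing the right way to allocate $f(x_t)-f(x_\ast)$ between the two output terms so the coefficients line up exactly.
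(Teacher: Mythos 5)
Your proof is correct and follows essentially the same route as the paper's: the split of $x_t - x_\ast$ through $x_{t,m}^i$ is exactly the paper's three-point identity written without Bregman-divergence notation (your convexity step corresponds to dropping the nonnegative term $D_{f_m^{\pi_m^i}}(x_\ast, x_{t,m}^i)$, and your descent-lemma step to bounding $D_{f_m^{\pi_m^i}}(x_t, x_{t,m}^i)$ by $\tfrac{L}{2}\|x_t - x_{t,m}^i\|^2$). The final allocation of $-(f(x_t)-f(x_\ast))$ into halves and the use of $\mu$-strong convexity at the minimizer also match the paper exactly.
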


\begin{proof}
	We start with the inner product and decompose it using the three-point identity:
	\begin{align}
	\notag	\left\langle \nabla f_{m}^{\pi^i_m} \left(x^i_{t,m}\right),x_t - x_* \right\rangle &= f_{m}^{\pi^i_m}\left(x_t\right) - f_{m}^{\pi^i_m}\left(x_*\right) + f_{m}^{\pi^i_m}\left(x_*\right) - f_{m}^{\pi^i_m}\left(x^i_{t,m}\right)\\\notag
		&+   	\left\langle \nabla f_{m}^{\pi^i_m} \left(x^i_{t,m}\right),x^i_{t,m} - x_* \right\rangle - f_{m}^{\pi^i_m}\left(x_t\right)+f_{m}^{\pi^i_m}\left(x^i_{t,m}\right)\\\notag
		&+	\left\langle \nabla f_{m}^{\pi^i_m} \left(x^i_{t,m}\right),x_{t} - x^i_{t,m} \right\rangle \\
		& = f_{m}^{\pi^i_m}\left(x_t\right) - f_{m}^{\pi^i_m}\left(x_*\right) + D_{f_{m}^{\pi^i_m}}\left(x_*,x^i_{t,m}\right) - D_{f_{m}^{\pi^i_m}}\left(x_t,x^i_{t,m}\right).
		\label{eq:3point}
	\end{align}
	Using the representation~\eqref{eq:3point}, $L$-smoothness and $\mu$-strong convexity we have a bound:
	\begin{align*}
		&- \frac{1}{Mn}\sum\limits_{m=1}^{M}\sum\limits_{i=0}^{n-1}\left\langle f_{m}^{\pi^i_m} \left(x^i_{t,m}\right),x_t - x_*  \right\rangle\\ &\leq -  \frac{1}{Mn}\sum\limits_{m=1}^{M}\sum\limits_{i=0}^{n-1} \left(f_{m}^{\pi^i_m}\left(x_t\right) - f_{m}^{\pi^i_m}\left(x_*\right) + D_{f_{m}^{\pi^i_m}}\left(x_*,x^i_{t,m}\right) - D_{f_{m}^{\pi^i_m}}\left(x_t,x^i_{t,m}\right)\right)\\
		&\overset{\eqref{eq:L-smoothness}}{\leq} -\left(f\left(x_t\right) - f\left(x_*\right)\right) - \frac{1}{Mn}\sum\limits_{m=1}^{M}\sum\limits_{i=0}^{n-1} D_{f_{m}^{\pi^i_m}}\left(x_*,x^i_{t,m}\right) + \frac{L}{2Mn}\sum\limits_{m=1}^{M}\sum\limits_{i=0}^{n-1} \left\|x_t - x^i_{t,m}\right\|^2\\
		&\overset{\eqref{eq:mu-convexity}}{\leq}-\frac{\mu}{4}\|x_t - x_*\|^2 - \frac{1}{2}\left(f\left(x_t\right) - f\left(x_*\right)\right)+\frac{L}{2Mn}\sum\limits_{m=1}^{M}\sum\limits_{i=0}^{n-1} \left\|x_t - x^i_{t,m}\right\|^2.
	\end{align*}
\end{proof}

\begin{lemma}
	\label{thm:sqrt}
	Assume that Assumption~\ref{assump: L-smooth_1} holds, then 
	\begin{align*}
		\left\| \frac{1}{Cn} \sum_{m\in S_t} \sum\limits_{i=0}^{n-1} \nabla f_{m}^{\pi^i_m} \left(x^i_{t,m}\right)\right\|^2 &\leq2\frac{L^2}{Cn} \sum_{m\in S_t} \sum\limits_{i=0}^{n-1}\| x^i_{t,m} - x_t \|^2 + 4\left\|\frac{1}{C}\sum_{m\in S_t} \nabla f_m(x_*)\right\|^2 + 8L(f_m(x_t) - f_m(x_*)).
	\end{align*}	
\end{lemma}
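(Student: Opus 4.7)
The strategy is a standard decomposition: split the aggregated gradient into a local drift part (measuring how much the iterates $x_{t,m}^i$ have moved away from $x_t$) and a centered gradient part (the gradient at the anchor point $x_t$), then relate the centered part to $\nabla f_m(x_*)$ via convexity and smoothness. Concretely, I would start from the identity
\begin{equation*}
\nabla f_m^{\pi_m^i}(x_{t,m}^i) = \bigl[\nabla f_m^{\pi_m^i}(x_{t,m}^i) - \nabla f_m^{\pi_m^i}(x_t)\bigr] + \nabla f_m^{\pi_m^i}(x_t)
\end{equation*}
and apply $\|a+b\|^2 \le 2\|a\|^2 + 2\|b\|^2$ from \eqref{eq:sqnorm-triangle-2} to the whole aggregated sum. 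This separates the target quantity into a ``drift'' term and a ``centered gradient'' term.

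For the drift term, I would push the squared norm inside via Jensen's inequality \eqref{eq:sqnorm-jensen} (applied to the average over the $Cn$ summands), and then use $L$-smoothness \eqref{eq:nabla-Lip} of each $f_m^{\pi_m^i}$ to replace each gradient difference by $L\|x_{t,m}^i - x_t\|$. This gives exactly $2\tfrac{L^2}{Cn}\sum_{m \in S_t}\sum_{i=0}^{n-1}\|x_{t,m}^i - x_t\|^2$, matching the first term of the claimed bound.

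For the centered gradient term, the key observation is that because $\pi_m$ is a permutation of $\{1,\dots,n\}$, we have $\tfrac{1}{n}\sum_{i=0}^{n-1}\nabla f_m^{\pi_m^i}(x_t) = \nabla f_m(x_t)$, so this piece collapses to $2\|\tfrac{1}{C}\sum_{m\in S_t}\nabla f_m(x_t)\|^2$. I would then add and subtract $\nabla f_m(x_*)$ inside, apply $\|a+b\|^2 \le 2\|a\|^2 + 2\|b\|^2$ a second time, and use Jensen on the term $\|\tfrac{1}{C}\sum_{m\in S_t}[\nabla f_m(x_t) - \nabla f_m(x_*)]\|^2$ to push the norm inside the average over $m\in S_t$. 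Finally, I would invoke the standard $L$-smoothness + convexity bound \eqref{eq:grad_dif_to_bregman}, $\|\nabla f_m(x_t) - \nabla f_m(x_*)\|^2 \le 2L\,D_{f_m}(x_t,x_*)$, which (modulo the linear term $\langle \nabla f_m(x_*), x_t-x_*\rangle$ that the authors appear to absorb/suppress in the final displayed form) yields the factor $8L$ in front of the function gap.

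\paragraph{Main obstacle.} There is no real analytical obstacle; the proof is a routine chain of Young's inequalities. The only subtlety is bookkeeping the constants: two applications of $\|a+b\|^2\le 2\|a\|^2+2\|b\|^2$ account for the factor $4$ on $\|\tfrac{1}{C}\sum_{m\in S_t}\nabla f_m(x_*)\|^2$ and (combined with the $2L$ from \eqref{eq:grad_dif_to_bregman}) the factor $8L$ on the function gap, while the drift term only collects a single factor of $2$. Care is also needed in how the convex smoothness inequality \eqref{eq:grad_dif_to_bregman} is stated for non-minimizers of $f_m$; if $x_*$ is not a minimizer of the individual $f_m$, one either uses a Bregman divergence $D_{f_m}(x_t,x_*)$ in place of $f_m(x_t)-f_m(x_*)$, or rewrites $f_m$ by subtracting a linear functional so that $x_*$ becomes stationary. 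This shifting-by-a-linear-term trick is the cleanest way to obtain the stated form.
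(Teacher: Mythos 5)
Your proposal follows essentially the same route as the paper's proof: one application of \eqref{eq:sqnorm-triangle-2} to separate the drift term from the centered gradients at $x_t$, Jensen plus $L$-smoothness to get the $2L^2/(Cn)$ drift bound, a second application of \eqref{eq:sqnorm-triangle-2} after adding and subtracting $\nabla f_m(x_*)$, and finally \eqref{eq:grad_dif_to_bregman} to produce the $8L$ function-gap term. You also correctly flag the one real subtlety (that \eqref{eq:grad_dif_to_bregman} yields a Bregman divergence $D_{f_m}(x_t,x_*)$ rather than the plain gap $f_m(x_t)-f_m(x_*)$ when $x_*$ is not a minimizer of the individual $f_m$), which the paper's own write-up glosses over and which only resolves after averaging over all clients using $\nabla f(x_*)=0$.
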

\begin{proof}
	We start with Young's inequality. Note that $f_m(x_t) = \frac{1}{n} \sum_{i=0}^{n-1}\nabla f_m^{\pi_m^i}(x_t)$:
	\begin{align*}
		\left\| \frac{1}{Cn}  \sum_{m\in S_t} \sum\limits_{i=0}^{n-1} \nabla f_{m}^{\pi^i_m} \left(x^i_{t,m}\right)\right\|^2 &\overset{\eqref{eq:sqnorm-triangle-2}}{\leq} 2 \left\| \frac{1}{Cn} \sum_{m\in S_t}\sum\limits_{i=0}^{n-1}\left( \nabla f_{m}^{\pi^i_m} \left(x^i_{t,m}\right)- \nabla f_m^{\pi^i_m}(x_t)\right)\right\|^2 + 2\left\|\frac{1}{C} \sum_{m\in S_t}\nabla f_m(x_t) \right\|^2\\
		&\overset{\eqref{eq:jensen},\eqref{eq:nabla-Lip}}{\leq}  2L^2\frac{1}{Cn}\sum_{m\in S_t} \sum\limits_{i=0}^{n-1}\| x^i_{t,m} - x_t \|^2+ 2\left\|\frac{1}{C} \sum_{m\in S_t}\nabla f_m(x_t) \right\|^2.
	\end{align*}
	We use Young's inequality and $L$-smoothness again:
	\begin{align*}
		\left\|  \frac{1}{Cn} \sum_{m\in S_t}\sum\limits_{i=0}^{n-1} \nabla f_{m}^{\pi^i_m} \left(x^i_{t,m}\right)\right\|^2& \overset{\eqref{eq:jensen},\eqref{eq:sqnorm-triangle-2}}{\leq} 2L^2\frac{1}{Cn} \sum_{m\in S_t}\sum\limits_{i=0}^{n-1}\| x^i_{t,m} - x_t \|^2 + 4\left\|\frac{1}{C}\sum_{m\in S_t}\nabla f_m(x_*)\right\|^2\\
		& + 4\frac{1}{C}\sum_{m\in S_t}\| \nabla f_m(x_t) - \nabla f_m(x_*) \|^2\\
		& \overset{\eqref{eq:grad_dif_to_bregman}}{\leq} 2L^2\frac{1}{Cn} \sum_{m\in S_t}\sum\limits_{i=0}^{n-1}\| x^i_{m,t} - x_t \|^2 + 4\left\|\frac{1}{C}\sum_{m\in S_t}\nabla f_m(x_*)\right\|^2\\
		& + 8L\frac{1}{C}\sum_{m\in S_t}(f_m(x_t) - f_m(x_*)).
	\end{align*}
\end{proof}
\begin{lemma}
	\label{lemma:V_t}
	Suppose that Algorithm~\ref{alg:pp-jumping} is used and Assumption~\ref{assump: L-smooth_1} holds. If $\cstep \leq \frac{1}{2Ln}$, then
	\begin{align*}
		\frac{1}{Mn}\sum\limits_{m=1}^{M}\sum\limits_{i=0}^{n-1} \mathbb{E}\left[\left\|x_t - x^i_{t,m}\right\|^2|x_t\right]\ &\leq 8\cstepsquared n^2L \left(f(x_t) - f(x_*)\right) +2\cstepsquared n^2\frac{1}{M}\sum\limits_{m=1}^{M}\left\| \nabla f_m(x_*) \right\|^2 + 2 \cstepsquared  n \frac{1}{M}\sum\limits_{m=1}^{M}\sigma^2_{*,m}.
	\end{align*}
\end{lemma}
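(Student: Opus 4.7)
The plan is to unroll the local update, split each gradient into three pieces, and then isolate the self-referential term so that the stepsize condition $\cstep\le \frac{1}{2nL}$ can absorb it. Concretely, from $x^{i+1}_{t,m}=x^{i}_{t,m}-\cstep\nabla f^{\pi^i_m}_m(x^i_{t,m})$ with $x^{0}_{t,m}=x_t$, I would write
\[
 x_t-x^{i}_{t,m}= \cstep\sum_{j=0}^{i-1}\nabla f^{\pi^j_m}_m(x^j_{t,m}),
\]
and then decompose each summand as $\nabla f^{\pi^j_m}_m(x^j_{t,m})=A_j+B_j+C_j$ with $A_j=\nabla f^{\pi^j_m}_m(x^j_{t,m})-\nabla f^{\pi^j_m}_m(x_t)$, $B_j=\nabla f^{\pi^j_m}_m(x_t)-\nabla f^{\pi^j_m}_m(x_*)$, and $C_j=\nabla f^{\pi^j_m}_m(x_*)$. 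This three-way split matches the three final terms of the target bound.

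Next I would apply Jensen/Cauchy--Schwarz together with \eqref{eq:sqnorm-sum-bound} to turn $\|\cstep\sum_{j<i} (A_j+B_j+C_j)\|^2$ into a sum of three squared norms times a combinatorial factor no larger than $n$. The $A_j$ term is controlled by $L$-smoothness, giving a contribution proportional to $\cstepsquared n^2 L^2\cdot \frac{1}{n}\sum_j\|x_t-x^j_{t,m}\|^2$, which after summing over $i$ yields the self-referential quantity $\cstepsquared n^2 L^2 \sum_i\|x_t-x^i_{t,m}\|^2$. The $B_j$ term is handled by convex $L$-smoothness through $\|\nabla f^k_m(x_t)-\nabla f^k_m(x_*)\|^2\le 2L\, D_{f^k_m}(x_t,x_*)$; averaging over $m,k$ collapses the Bregman divergences to $f(x_t)-f(x_*)$ (using $\nabla f(x_*)=0$), producing the first term of the bound. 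The $C_j$ term is where the sampling structure is crucial: the partial sum $\sum_{j=0}^{i-1}\nabla f^{\pi^j_m}_m(x_*)$ has mean $i\nabla f_m(x_*)$, so I would apply \Cref{lem:sampling_wo_replacement} (with $k=i$, $n=n$) to the empirical variance at $x_*$, giving $\mathbb{E}\|\sum_{j<i} C_j-i\nabla f_m(x_*)\|^2 = \frac{i(n-i)}{n-1}\cdot n\sigma^2_{\ast,m}/n$; averaging over $i\in\{0,\dots,n-1\}$ produces the crucial $n$ (not $n^2$) factor on $\sigma^2_{\ast,m}$ and a separate $n^2$ factor on $\|\nabla f_m(x_*)\|^2$.

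After taking $\mathbb{E}[\cdot\mid x_t]$, summing over $i$ and averaging over $m$, the inequality has the form
\[
 S \;\le\; 2\cstepsquared n^2 L^2\, S \;+\; \text{(desired right-hand side with some constants)},
\]
where $S\eqdef\frac{1}{Mn}\sum_{m,i}\mathbb{E}[\|x_t-x^i_{t,m}\|^2\mid x_t]$. Under $\cstep\le \frac{1}{2nL}$ the coefficient $2\cstepsquared n^2 L^2\le \tfrac12$, so I would rearrange and multiply through by $2$ to obtain $S$ on the left alone, yielding the three terms $8\cstepsquared n^2 L(f(x_t)-f(x_*))$, $2\cstepsquared n^2\cdot\frac{1}{M}\sum_m\|\nabla f_m(x_*)\|^2$, and $2\cstepsquared n\cdot\frac{1}{M}\sum_m\sigma^2_{\ast,m}$.

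\textbf{Main obstacle.} The routine $L$-smoothness bound alone gives a crude $\cstepsquared n^2 \sigma^2_{\ast,m}$ contribution, whereas the target has only $\cstepsquared n \sigma^2_{\ast,m}$. The technical heart is therefore the careful use of \Cref{lem:sampling_wo_replacement} on the partial sums of $\{\nabla f^{\pi^j_m}_m(x_*)\}$, which saves one factor of $n$ via the sampling-without-replacement variance formula $\frac{n-i}{i(n-1)}$. Getting the Young's-inequality weights right so that this sharper term, rather than a generic $n^2\sigma^2_{\ast,m}$ bound, survives in the final constant is the main delicate point.
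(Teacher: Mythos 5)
Your proposal is correct and follows essentially the same route as the paper's proof: the same unrolling of the local recursion, the same absorption of the self-referential term under $\cstep\le\frac{1}{2nL}$, and the same crucial application of Lemma~\ref{lem:sampling_wo_replacement} to the partial sums of permuted gradients, which is exactly what saves the factor of $n$ on the $\sigma^2_{*,m}$ term. The only difference is organizational: you perform the three-way split relative to $x_*$ in a single pass, whereas the paper first derives a bound entirely in terms of quantities at $x_t$ (its intermediate estimate \eqref{eq:32}, with $\|\nabla f_m(x_t)\|^2$ and $\sigma_{t,m}^2$) and only afterwards shifts to $x_*$ via Young's inequality and the Bregman bound — the resulting constants work out either way.
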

\begin{proof}
	We start from the definition of $x_{t, m}^i$:
	\begin{align*}
		\mathbb{E}\left[ \left\|x^i_{t,m} - x_t\right\|^2|x_t\right] &= \mathbb{E}\left[\left\| \cstep \sum\limits_{j=0}^{i-1}\nabla f_{m}^{\pi^j_m}\left(x^j_{t,m}\right) \right\|^2\vert x_t \right]\\
		&\overset{\eqref{eq:sqnorm-triangle-2}}{\leq} 2\cstepsquared  \mathbb{E}\left[\left\|\sum\limits_{j=0}^{i-1}\left(\nabla f_{m}^{\pi^j_m}\left(x^j_{t,m}\right) - \nabla f_{m}^{\pi^j_m}\left(x_t\right)\right)\right\|^2|x_t\right]+2\cstepsquared \mathbb{E}\left[\left\|\sum\limits_{j=0}^{i-1}\nabla f_{m}^{\pi^j_m}\left(x_t\right)\right\|^2|x_t\right]\\
		&\overset{\eqref{eq:jensen}}{\leq}  2\cstepsquared  i \sum\limits_{j=0}^{i-1}\mathbb{E}\left[\left\| \nabla f_{m}^{\pi^j_m} \left(x^j_{t,m}\right) - \nabla f_{m}^{\pi^j_m}(x_t) \right\|^2|x_t\right]+2\cstepsquared \mathbb{E}\left[\left\|\sum\limits_{j=0}^{i-1}\nabla f_{m}^{\pi^j_m}\left(x_t\right)\right\|^2|x_t\right]\\
		&\overset{\eqref{eq:nabla-Lip}}{\leq} 2\cstepsquared L^2 i \sum\limits_{j=0}^{i-1}\mathbb{E}\left[\left\| x^j_{t,m} - x_t \right\|^2|x_t\right]+2\cstepsquared \mathbb{E}\left[\left\|\sum\limits_{j=0}^{i-1}\nabla f_{m}^{\pi^j_m}\left(x_t\right)\right\|^2|x_t\right].
	\end{align*}
	Now let us look at the last term. We can apply Lemma~\ref{lem:sampling_wo_replacement} and get
	\begin{align*}
		\mathbb{E}\left[\left\| \sum\limits_{j=0}^{i-1}\nabla f_{m}^{\pi^j_m}(x_t) \right\|^2|x_t\right] &= i^2\left\| \nabla f_m(x_t) \right\|^2 + i^2\mathbb{E}\left[\left\| \frac{1}{i}\sum\limits_{j=0}^{i-1}\left( \nabla f_{m}^{\pi^j_m}(x_t) - \nabla f_m(x_t) \right) \right\|^2|x_t\right]\\
		&= i^2\left\| \nabla f_m(x_t) \right\|^2 + \frac{i(n-i)}{n-1}\sigma_{t,m}^2,
	\end{align*}
where $\sigma_{t, m}^{2} \stackrel{\text { def }}{=} \frac{1}{n} \sum_{i=1}^{n}\left\|\nabla f_{m}^{i}\left(x_{t}\right) - \nabla f_{m}(x_t)\right\|^{2}$.

	Let us go back:
	\begin{align*}
		\mathbb{E}\left[ \left\|x^i_{t,m} - x_t\right\|^2 |x_t\right]&\leq 2\cstepsquared L^2 i \sum\limits_{j=0}^{i-1}\mathbb{E}\left[\left\| x^j_{t,m} - x_t \right\|^2|x_t\right] + 2\cstepsquared \left( i^2\left\| \nabla f_m(x_t) \right\|^2 + \frac{i(n-i)}{n-1}\sigma_{t,m}^2 \right).
	\end{align*}
	Summing the terms leads to
	\begin{align*}
		\frac{1}{Mn}\sum\limits_{m=1}^{M}\sum\limits_{i=0}^{n-1}\mathbb{E}\left[\left\|x^i_{t,m} - x_t\right\|^2|x_t\right]&\leq 2\cstepsquared L^2\frac{1}{Mn}\sum\limits_{m=1}^{M}\sum\limits_{i=0}^{n-1} i \sum\limits_{j=0}^{i-1}\mathbb{E}\left[\left\| x^j_{t,m} - x_t \right\|^2|x_t\right]\\
		& + \frac{2\cstepsquared }{Mn}\sum\limits_{m=1}^{M}\sum\limits_{i=0}^{n-1}i^2\left\| \nabla f_m(x_t) \right\|^2 + \frac{2\cstepsquared }{Mn}\sum\limits_{m=1}^{M}\sum\limits_{i=0}^{n-1}\frac{i(n-i)\sigma_{t,m}^2}{n-1}\\
		&\leq 2\cstepsquared  L^2 \frac{1}{Mn}\sum\limits_{m=1}^{M}\sum\limits_{i=0}^{n-1}\mathbb{E}\left[\left\|x^i_{t,m} - x_t\right\|^2|x_t\right] \cdot \frac{n(n-1)}{2}\\
		&+\frac{2\cstepsquared }{Mn}\sum\limits_{m=1}^{M}\left\|\nabla f_m(x_t)\right\|^2 \cdot \frac{n(n-1)(2n-1)}{6} +\frac{\cstepsquared n(n+1)}{3}\frac{1}{Mn}\sum\limits_{m=1}^{M}\sigma_{t,m}^2.
	\end{align*}
	Choosing $\cstep\leq \frac{1}{2Ln}$, we verify
	\begin{align}
		\label{eq:32}
	\notag	\frac{1}{Mn}\sum\limits_{m=1}^{M}\sum\limits_{i=0}^{n-1}\mathbb{E}\left[\left\|x^i_{t,m} - x_t\right\|^2|x_t\right] &\leq \frac{4}{3}\left( 1 - \cstepsquared  L^2 n(n-1) \right)	\frac{1}{Mn}\sum\limits_{m=1}^{M}\sum\limits_{i=0}^{n-1}\mathbb{E}\left[\left\|x^i_{t,m} - x_t\right\|^2|x_t\right]\\
	\notag	&\leq \frac{4\cstepsquared }{9}\frac{1}{M}\sum\limits_{m=1}^{M}\left\|\nabla f_m(x_t)\right\|^2 \cdot (n-1)(2n-1) +\frac{4\cstepsquared (n+1)}{9}\frac{1}{M}\sum\limits_{m=1}^{M}\sigma_{t,m}^2\\
		&\leq \cstepsquared n^2\frac{1}{M}\sum\limits_{m=1}^{M}\left\|\nabla f_m(x_t)\right\|^2 + \cstepsquared  n \frac{1}{M}\sum\limits_{m=1}^{M}\sigma_{t,m}^2.
	\end{align}
	Using Young's inequality, we get
	\begin{align*}
	\notag	\frac{1}{Mn}\sum\limits_{m=1}^{M}\sum\limits_{i=0}^{n-1}\mathbb{E}\left[\left\|x^i_{t,m} - x_t\right\|^2|x_t\right]&\overset{\eqref{eq:sqnorm-triangle-2},\eqref{eq:variance-decomp}}{\leq} 2\cstepsquared n^2 \frac{1}{M}\sum\limits_{m=1}^{M}\left\|\nabla f_m(x_t) - \nabla f_m(x_*)\right\|^2\\
		\notag	& + 2\cstepsquared n^2\frac{1}{M}\sum\limits_{m=1}^{M}\left\| \nabla f_m(x_*) \right\|^2-  \frac{\cstepsquared  n}{M}\sum\limits_{m=1}^{M}\left\| \nabla f_{m}(x_t)  \right\|^2\\
		\notag	&+2\cstepsquared  n \frac{1}{M}\sum\limits_{m=1}^{M}\frac{1}{n}\sum\limits_{i=0}^{n-1}\mathbb{E}\left[\left\| \nabla f_{m}^{\pi^i_m}(x_t) - \nabla  f_{m}^{\pi^i_m}(x_*) \right\|^2\right]\\
		&  + 2\cstepsquared  n \frac{1}{M}\sum\limits_{m=1}^{M}\frac{1}{n}\sum\limits_{i=0}^{n-1}\mathbb{E}\left[\left\| \nabla  f_{m}^{\pi^i_m}(x_*) \right\|^2\right].
	\end{align*}
	Using $L$-smoothness, we obtain
	\begin{align*}
		\frac{1}{Mn}\sum\limits_{m=1}^{M}\sum\limits_{i=0}^{n-1}\mathbb{E}\left[\left\|x^i_{t,m} - x_t\right\|^2\right]&\leq 4\cstepsquared n^2  L \frac{1}{M}\sum\limits_{m=1}^{M} D_{f_m}(x_t,x_*)  + 2 \cstepsquared  n \frac{1}{M}\sum\limits_{m=1}^{M}\sigma^2_{*,m} \\
		&+ 4 \cstepsquared n L\frac{1}{M}\sum\limits_{m=1}^{M}\frac{1}{n}\sum\limits_{i=0}^{n-1}D_{f_{m}^{\pi^i_m}}(x_t,x_*) +2\cstepsquared n^2\frac{1}{M}\sum\limits_{m=1}^{M}\left\| \nabla f_m(x_*) \right\|^2 \\
		&\overset{\eqref{eq:L-smoothness}}{\leq} 8\cstepsquared n^2L \left(f(x_t) - f(x_*)\right) +2\cstepsquared n^2\frac{1}{M}\sum\limits_{m=1}^{M}\left\| \nabla f_m(x_*) \right\|^2 + 2 \cstepsquared  n \frac{1}{M}\sum\limits_{m=1}^{M}\sigma^2_{*,m}.
	\end{align*}
\end{proof}
\subsubsection{Proof of Theorem~\ref{thm:PP-SC}}
\begin{theorem_empt}
	Assume that Assumption~\ref{assump: L-smooth_1} holds and $f$ is $\mu$-strongly convex function. Let $\cstep n\leq \sstep\leq \frac{1}{16L}$. Then for iterates $x_t$ generated by Algorithm~\ref{alg:pp-jumping} we have 
	\begin{align*}
		\mathbb{E}\left[\| x_{T} - x_* \|^2\right] &\leq \left(1 - \frac{\sstep\mu}{2}\right)^T  \left\| x_0 - x_* \right\|^2 + \frac{5\cstepsquared nL}{\mu}\frac{1}{M}\sum\limits_{m=1}^{M}\left(\sigma^2_{*,m} + n \|\nabla f_m(x_*)\|^2 \right) +\frac{8\sstep}{\mu}\sum\limits_{m=1}^{M}\|\nabla f_m(x_*)\|^2.
	\end{align*}
\end{theorem_empt}
\begin{proof}
	We start from definition of $x_{t+1}$,
	\begin{align*}
		\|x_{t+1} - x_*\|^2 &= \left\| x_t - \sstep\frac{1}{Cn}\sum_{m\in S_t}\sum\limits_{i=0}^{n-1} \nabla f_{m}^{\pi^i_m}\left(x^i_{t,m} \right) - x_* \right\|^2\\
		&= \|x_{t} - x_*\|^2 - 2\sstep \left\langle \frac{1}{Cn}\sum_{m\in S_t} \sum\limits_{i=0}^{n-1} \nabla f_{m}^{\pi^i_m} \left(x^i_{t,m}\right), x_t - x_* \right\rangle +\sstepsquared\left\| \frac{1}{Cn} \sum_{m\in S_t} \sum\limits_{i=0}^{n-1} \nabla f_{m}^{\pi^i_m} \left(x^i_{t,m}\right)\right\|^2.
	\end{align*}
Using Lemma~\ref{thm:sqrt}, we get 
	\begin{align*}
		\|x_{t+1} - x_*\|^2	&\leq \|x_{t} - x_*\|^2 - 2\sstep \left\langle \frac{1}{Cn}\sum_{m\in S_t} \sum\limits_{i=0}^{n-1} \nabla f_{m}^{\pi^i_m} \left(x^i_{t,m}\right), x_t - x_* \right\rangle\\
		&+\sstepsquared \left( 2L^2\frac{1}{Cn} \sum_{m\in S_t}\sum\limits_{i=0}^{n-1}\| x^i_{m,t} - x_t \|^2 + 4\left\|\frac{1}{C}\sum_{m\in S_t}\nabla f_m(x_*)\right\|^2 + 8L\frac{1}{C}\sum_{m\in S_t}(f_m(x_t) - f_m(x_*))\right).
	\end{align*}
Taking conditional expectation over sampling $S_t$, we get
	\begin{align*}
	\mathbb{E}_{S_t}\left[\|x_{t+1} - x_*\|^2\right]	&\leq \|x_{t} - x_*\|^2 - 2\sstep\mathbb{E}_{S_t}\left[ \left\langle \frac{1}{Cn}\sum_{m\in S_t} \sum\limits_{i=0}^{n-1} \nabla f_{m}^{\pi^i_m} \left(x^i_{t,m}\right), x_t - x_* \right\rangle\right]\\
	&+\sstepsquared \left( 2L^2\mathbb{E}_{S_t}\left[\frac{1}{Cn} \sum_{m\in S_t}\sum\limits_{i=0}^{n-1}\| x^i_{t,m} - x_t \|^2\right] + 4\left\|\frac{1}{C}\sum_{m\in S_t}\nabla f_m(x_*)\right\|^2 + 8L\frac{1}{C}\sum_{m\in S_t}(f_m(x_t) - f_m(x_*))\right)\\
	&\leq \|x_t - x_*\|^2 - 2\sstep \left\langle \frac{1}{Mn}\sum^{M}_{m=1} \sum\limits_{i=0}^{n-1} \nabla f_{m}^{\pi^i_m} \left(x^i_{t,m}\right), x_t - x_* \right\rangle\\
	&+\sstepsquared\left( 2L^2\frac{1}{Mn} \sum_{m=1}^{M}\sum\limits_{i=0}^{n-1}\| x^i_{t,m} - x_t \|^2 + 4\mathbb{E}_{S_t}\left[\left\|\frac{1}{C}\sum_{m\in S_t}\nabla f_m(x_*)\right\|^2\right] + 8L(f(x_t) - f(x_*))\right)\\
	&\overset{\eqref{lem:sampling_wo_replacement}}{\leq} \|x_t - x_*\|^2 - 2\sstep \left\langle \frac{1}{Mn}\sum^{M}_{m=1} \sum\limits_{i=0}^{n-1} \nabla f_{m}^{\pi^i_m} \left(x^i_{t,m}\right), x_t - x_* \right\rangle\\
	&+\sstepsquared\left( 2L^2\frac{1}{Mn} \sum_{m=1}^{M}\sum\limits_{i=0}^{n-1}\| x^i_{t,m} - x_t \|^2 + 4\frac{M-C}{C\max\left\lbrace M-1,1 \right\rbrace}\sigma_{*}^2+ 8L(f(x_t) - f(x_*))\right).
\end{align*}
Using Lemma~\ref{lemma:inner-product-convex}, we obtain 
\begin{align*}
	\mathbb{E}_{S_t}\left[\|x_{t+1} - x_*\|^2\right]&\leq \|x_t - x_*\|^2 - 2\sstep \left(-\frac{\mu}{4}\left\|x_{t}-x_{*}\right\|^{2}-\frac{1}{2}\left(f\left(x_{t}\right)-f\left(x_{*}\right)\right)+\frac{L}{2 M n} \sum_{m=1}^{M} \sum_{i=0}^{n-1}\left\|x_{t}-x_{t, m}^{i}\right\|^{2}\right)\\
&+\sstepsquared\left( 2L^2\frac{1}{Mn} \sum_{m=1}^{M}\sum\limits_{i=0}^{n-1}\| x^i_{t,m} - x_t \|^2 + 4\frac{M-C}{C\max\left\lbrace M-1,1 \right\rbrace}\sigma_{*}^2+ 8L(f(x_t) - f(x_*))\right).
\end{align*}

	Rearranging the terms, we obtain:
	\begin{align*}
	\mathbb{E}_{S_t}\left[\|x_{t+1} - x_*\|^2\right]
		&	 \leq \left\| x_t - x_* \right\|^2\left(1 - \frac{\sstep\mu}{2}\right)  - \sstep\left(1 - 8\sstep L \right)\left(f(x_t) - f(x_*)\right)\\
		&+\sstep L\left(1+2\sstep L\right)\frac{1}{Mn}\sum\limits_{m=1}^{M}\sum\limits_{i=0}^{n-1}\|x^i_{m,t}-x_t\|^2 +4\sstepsquared\frac{M-C}{C\max\left\lbrace M-1,1 \right\rbrace}\sigma_{*}^2.
\end{align*}
Using the tower property of conditional expectation and Lemma~\ref{lemma:V_t}, we get
\begin{align}
	\label{eq:need for C}
	\notag	\mathbb{E}\left[\|x_{t+1} - x_*\|^2|x_t\right]	&\leq \left\| x_t - x_* \right\|^2\left(1 - \frac{\sstep\mu}{2}\right)+4\sstepsquared\frac{M-C}{C\max\left\lbrace M-1,1 \right\rbrace}\sigma_{*}^2\\
	& - \sstep \left( 1 - 8\sstep L - \left(1+2\sstep L\right)8\cstepsquared n^2L^2 \right) \left( f(x_t) - f(x_*) \right)\\
		\notag	&+2\sstep \left(1+2\sstep L\right)\cstepsquared nL\frac{1}{M}\sum\limits_{m=1}^{M}\left(\sigma^2_{*,m} + n \|\nabla f_m(x_*)\|^2 \right).
	\end{align}
	Taking $\cstep \leq \frac{1}{16nL}$ and $\sstep \leq \frac{1}{16L}$, we derive
	\begin{align*}
		\sstep \left( 1 - 8\sstep L - \left(1+2\sstep L\right)8\cstepsquared n^2L^2 \right) \left( f(x_t) - f(x_*) \right) \ge 0.
	\end{align*} 
	Taking full expectation yields 
	\begin{align*}
		\mathbb{E}\left[\| x_{t+1} - x_* \|^2\right] &\leq \mathbb{E}\left[ \left\| x_t - x_* \right\|^2\left(1 - \frac{\sstep\mu}{2}\right)\right] + \frac{5}{2}\sstep\cstepsquared  n L \frac{1}{M}\sum\limits_{m=1}^{M}\left(\sigma^2_{*,m} + n \|\nabla f_m(x_*)\|^2 \right) +4\frac{M-C}{C\max\left\lbrace M-1,1 \right\rbrace}\sigma_{*}^2.
	\end{align*}
	Unrolling this recursion, we have 
	\begin{align*}
		\mathbb{E}\left[\| x_{T} - x_* \|^2\right] &\leq \left(1 - \frac{\sstep\mu}{2}\right)^T  \left\| x_0 - x_* \right\|^2  + \frac{5\cstepsquared nL}{\mu}\frac{1}{M}\sum\limits_{m=1}^{M}\left(\sigma^2_{*,m} + n \|\nabla f_m(x_*)\|^2 \right)+\frac{8\sstep}{\mu}\sum\limits_{m=1}^{M}\|\nabla f_m(x_*)\|^2.
	\end{align*}
\end{proof}
\subsection{General convex case}
\subsubsection{Proof of Theorem~\ref{thm:PP-C}}
\begin{theorem_empt}
	Let Assumption~\ref{assump: L-smooth_1} hold, each $f^i_{m}$ be convex function. Let $\cstep n \leq \sstep\leq \frac{1}{16L}$. Let $\hat{x}_{T} \eqdef \frac{1}{T} \sum_{t=1}^{T} x_{t}$. Then for iterates $x_t$ of Algorithm~\ref{alg:pp-jumping}, we have 
	\begin{align*}
		\mathbb{E}& [ f(\hat{x}_T) - f(x_*)] \leq  \frac{ 5\left\| x_0 - x_* \right\|^2}{2\sstep  T}  + 7\cstepsquared nL \left(\frac{1}{M}\sum\limits_{m=1}^{M}\sigma^2_{*,m} + n \sigma_{*}^2 \right)  +10\sstep \frac{M-C}{C\max\{M-1,1\}} \sigma_{*}^2.
	\end{align*}

\end{theorem_empt}
\begin{proof}
	We start from equation~\eqref{eq:need for C} with $\mu = 0$:
	\begin{align*}
			\notag	\mathbb{E}\left[\|x_{t+1} - x_*\|^2|x_t\right]	&\leq \left\| x_t - x_* \right\|^2+4\sstepsquared\frac{M-C}{C\max\left\lbrace M-1,1 \right\rbrace}\sigma_{*}^2\\
		& - \sstep \left( 1 - 8\sstep L - \left(1+2\sstep L\right)8\cstepsquared n^2L^2 \right) \left( f(x_t) - f(x_*) \right)\\
		\notag	&+2\sstep \left(1+2\sstep L\right)\cstepsquared nL\frac{1}{M}\sum\limits_{m=1}^{M}\left(\sigma^2_{*,m} + n \|\nabla f_m(x_*)\|^2 \right).
	\end{align*}
Using $\cstep n \leq \sstep\leq \frac{1}{16L}$, we obtain $ - \left( 1 - 8\sstep L - \left(1+2\sstep L\right)8\cstepsquared n^2L^2 \right) \leq -\frac{4}{10}$
	\begin{align*}
	\notag	\mathbb{E}\left[\|x_{t+1} - x_*\|^2|x_t\right]	&\leq \left\| x_t - x_* \right\|^2+4\sstepsquared\frac{M-C}{C\max\left\lbrace M-1,1 \right\rbrace}\sigma_{*}^2  -  \frac{4\sstep}{10} \left( f(x_t) - f(x_*) \right)\\
	\notag	&+\frac{5}{2}\sstep\cstepsquared nL\frac{1}{M}\sum\limits_{m=1}^{M}\left(\sigma^2_{*,m} + n \|\nabla f_m(x_*)\|^2 \right).
\end{align*}
Taking full expectation, we get
	\begin{align*}
	\notag	\mathbb{E}\left[\|x_{t+1} - x_*\|^2\right]	&\leq \mathbb{E}\left[\left\| x_t - x_* \right\|^2\right]+4\sstepsquared\frac{M-C}{C\max\left\lbrace M-1,1 \right\rbrace}\sigma_{*}^2  -  \frac{4\sstep}{10} \mathbb{E}\left[\left( f(x_t) - f(x_*) \right)\right]\\
	\notag	&+\frac{5}{2}\sstep\cstepsquared nL\frac{1}{M}\sum\limits_{m=1}^{M}\left(\sigma^2_{*,m} + n \|\nabla f_m(x_*)\|^2 \right).
\end{align*}
Rearranging the terms leads us to
	\begin{align*}
	\notag	  \frac{4\sstep}{10} \mathbb{E}\left[\left( f(x_t) - f(x_*) \right)\right]&\leq \mathbb{E}\left[\left\| x_t - x_* \right\|^2\right] - \mathbb{E}\left[\|x_{t+1} - x_*\|^2\right]+4\sstepsquared\frac{M-C}{C\max\left\lbrace M-1,1 \right\rbrace}\sigma_{*}^2\\ 
	\notag	&+\frac{5}{2}\sstep\cstepsquared nL\frac{1}{M}\sum\limits_{m=1}^{M}\left(\sigma^2_{*,m} + n \|\nabla f_m(x_*)\|^2 \right).
\end{align*}
Averaging from $0$ to $T-1$, we get 
	\begin{align*}
 \frac{4\sstep}{10} \frac{1}{T}\sum_{t=0}^{T-1} \left[\left( f(x_t) - f(x_*) \right)\right]	&\leq \frac{1}{T}\sum_{t=0}^{T-1} \left(\mathbb{E}\left[\left\| x_t - x_* \right\|^2\right] - \mathbb{E}\left[\|x_{t+1} - x_*\|^2\right]\right)+4\sstepsquared\frac{M-C}{C\max\left\lbrace M-1,1 \right\rbrace}\sigma_{*}^2\\
	\notag	&+\frac{5}{2}\sstep\cstepsquared nL\frac{1}{M}\sum\limits_{m=1}^{M}\left(\sigma^2_{*,m} + n \|\nabla f_m(x_*)\|^2 \right)\\
	&\leq \frac{1}{T} \left(\mathbb{E}\left[\left\| x_0 - x_* \right\|^2\right] - \mathbb{E}\left[\|x_{T} - x_*\|^2\right]\right)+4\sstepsquared\frac{M-C}{C\max\left\lbrace M-1,1 \right\rbrace}\sigma_{*}^2\\
	\notag	&+\frac{5}{2}\sstep\cstepsquared nL\frac{1}{M}\sum\limits_{m=1}^{M}\left(\sigma^2_{*,m} + n \|\nabla f_m(x_*)\|^2 \right).
\end{align*}
Using Jensen inequality~\eqref{eq:jensen}, we have

	\begin{align*}
	\mathbb{E} [ f(\hat{x}_T) - f(x_*)] &\leq  \frac{ 5\left\| x_0 - x_* \right\|^2}{2\sstep  T} + 7\cstepsquared nL \left(\frac{1}{M}\sum\limits_{m=1}^{M}\sigma^2_{*,m} + n \sigma_{*}^2 \right)  + 10\sstep \frac{M-C}{C\max\{M-1,1\}} \sigma_{*}^2.
\end{align*}

\end{proof}

\subsection{General non-convex case}
Finally, we provide  guarantees in the non-convex case.  

\begin{lemma}
	\label{lemma:S_t}
	Assume that Assumption~\ref{assump: L-smooth_1}. For uniform sampling of cohort $S_t$ we have 
	\begin{align*}
							\frac{L}{2}\sstepsquared	\mathbb{E}_{S_t}\left[\left\| \frac{1}{Cn} \sum_{m\in S_t} \sum_{i=0}^{n-1}  \nabla f_m^{\pi_m^i} (x_{t,m}^i) \right\|^2 \right]	&\leq  L^3\sstepsquared	\mathbb{E}_{S_t} \left[\frac{1}{Mn} \sum^M_{m=1} \sum_{i=0}^{n-1}\left\|   x_{t,m}^i - x_t \right\|^2\right]+L\sstepsquared \|\nabla f(x_t)\|^2\\
&	+L\sstepsquared \frac{M-C}{C\max\left\lbrace M-1,1  \right\rbrace}\left( 2L(f(x_t) - f(x_*))+2L\Delta_*\right).
\end{align*}
\end{lemma}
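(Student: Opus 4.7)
The plan is to introduce the per-client vector $v_m := \tfrac{1}{n}\sum_{i=0}^{n-1}\nabla f_m^{\pi_m^i}(x_{t,m}^i)$, so that the estimator inside the norm becomes $g := \tfrac{1}{C}\sum_{m \in S_t} v_m$, and to exploit the fact that $v_1,\dots,v_M$ are deterministic with respect to the sampling of $S_t$. Setting $\bar v := \tfrac{1}{M}\sum_{m=1}^M v_m$ and combining \eqref{eq:rv_moments} with Lemma~\ref{lem:sampling_wo_replacement} (applied to the fixed vectors $v_1,\dots,v_M$ with sample size $C$), I would obtain the clean decomposition
\begin{equation*}
\mathbb{E}_{S_t}\!\left[\|g\|^2\right] \;=\; \|\bar v\|^2 \;+\; \frac{M-C}{C\max\{M-1,1\}}\cdot\frac{1}{M}\sum_{m=1}^M \|v_m - \bar v\|^2,
\end{equation*}
which reduces the proof to bounding the mean $\|\bar v\|^2$ and the residual variance $\tfrac{1}{M}\sum_m \|v_m - \bar v\|^2$ separately.

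\textbf{Mean term.} The algebraic identity I would exploit is that each $\pi_m$ is a \emph{permutation} of $\{1,\dots,n\}$, so $\tfrac{1}{n}\sum_{i=0}^{n-1}\nabla f_m^{\pi_m^i}(x_t) = \nabla f_m(x_t)$; averaging over $m$ gives $\tfrac{1}{Mn}\sum_m\sum_i\nabla f_m^{\pi_m^i}(x_t) = \nabla f(x_t)$. Subtracting recasts $\bar v - \nabla f(x_t) = \tfrac{1}{Mn}\sum_m\sum_i\bigl(\nabla f_m^{\pi_m^i}(x_{t,m}^i) - \nabla f_m^{\pi_m^i}(x_t)\bigr)$, so Jensen~\eqref{eq:sqnorm-jensen} followed by $L$-smoothness~\eqref{eq:nabla-Lip} gives $\|\bar v - \nabla f(x_t)\|^2 \le \tfrac{L^2}{Mn}\sum_m\sum_i\|x_{t,m}^i - x_t\|^2$. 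A Young split $\|\bar v\|^2 \le 2\|\bar v - \nabla f(x_t)\|^2 + 2\|\nabla f(x_t)\|^2$ via~\eqref{eq:sqnorm-triangle-2}, multiplied by $\tfrac{L}{2}\sstepsquared$, produces exactly the first two terms of the claim.

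\textbf{Variance term.} Next, I would use $\tfrac{1}{M}\sum_m\|v_m-\bar v\|^2 \le \tfrac{1}{M}\sum_m\|v_m\|^2$ (bias-variance) and split $\|v_m\|^2 \le 2\|v_m - \nabla f_m(x_t)\|^2 + 2\|\nabla f_m(x_t)\|^2$. The first piece is controlled exactly as in the mean-term argument via the per-client permutation identity, giving $\|v_m - \nabla f_m(x_t)\|^2 \le \tfrac{L^2}{n}\sum_i\|x_{t,m}^i - x_t\|^2$. For the second, the non-convex consequence of smoothness and lower-boundedness~\eqref{eq:grad-bound} gives $\|\nabla f_m(x_t)\|^2 \le 2L(f_m(x_t) - f_{*,m})$, and averaging over $m$ collapses this into $(f(x_t) - f_*) + \Delta_*$ thanks to the definition $\Delta_* = f_* - \tfrac{1}{M}\sum_m f_{*,m}$. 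After multiplying by $\tfrac{L}{2}\sstepsquared\cdot\tfrac{M-C}{C\max\{M-1,1\}}$ and absorbing the resulting distance-sum contribution into the matching $L^3\sstepsquared\cdot\tfrac{1}{Mn}\sum_m\sum_i\|x_{t,m}^i - x_t\|^2$ term from the mean piece (the partial-participation factor being at most one), the function-value contribution yields the third, partial-participation-weighted term of the claim.

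\textbf{Main difficulty.} The delicate part is the choice of centering: centering each $v_m$ at $\nabla f_m(x_t)$ and $\bar v$ at $\nabla f(x_t)$, rather than at any per-datapoint anchor, is what makes both the mean and the variance residuals collapse to squared distances $\|x_{t,m}^i - x_t\|^2$ under the permutation-sum identity, and it is also what makes the non-convex smoothness bound average into $\Delta_*$ rather than into a per-datapoint surrogate such as $\Delta_{*,m}$. Keeping the Young constants tight enough so that the auxiliary distance-sum arising from the variance piece is cleanly dominated by the mean piece's $L^3\sstepsquared$ contribution is the main bookkeeping hurdle.
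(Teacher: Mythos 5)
Your route is genuinely different from the paper's, and the difference matters for the constants. You apply the sampling-without-replacement decomposition \emph{first}, to the drifted per-client vectors $v_m=\frac{1}{n}\sum_{i=0}^{n-1}\nabla f_m^{\pi_m^i}(x_{t,m}^i)$, and then bound the mean and the residual variance separately. The paper instead performs the Young split at $x_t$ \emph{before} touching the cohort randomness: it writes $\nabla f_m^{\pi_m^i}(x_{t,m}^i)=\bigl(\nabla f_m^{\pi_m^i}(x_{t,m}^i)-\nabla f_m^{\pi_m^i}(x_t)\bigr)+\nabla f_m^{\pi_m^i}(x_t)$, handles the drift term by Jensen over the cohort and Lipschitzness (so that $\mathbb{E}_{S_t}$ merely converts $\frac{1}{Cn}\sum_{m\in S_t}$ into $\frac{1}{Mn}\sum_{m=1}^{M}$), and only then applies Lemma~\ref{lem:sampling_wo_replacement} to the \emph{exact} gradients $\nabla f_1(x_t),\dots,\nabla f_M(x_t)$, whose variance is bounded via \eqref{eq:variance-decomp} and \eqref{eq:grad-bound} into $2L(f(x_t)-f_*)+2L\Delta_*$. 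Your mean-term and function-value computations agree with this and reproduce the second and third terms exactly.

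The gap is in your absorption step. Because you decompose before centering, the residual variance $\frac{1}{M}\sum_m\|v_m-\bar v\|^2$ carries its own drift contribution $\frac{2L^2}{Mn}\sum_m\sum_i\|x_{t,m}^i-x_t\|^2$, which after multiplication by $\frac{L}{2}\sstepsquared\cdot\frac{M-C}{C\max\{M-1,1\}}$ adds
\begin{align*}
L^3\sstepsquared\,\frac{M-C}{C\max\{M-1,1\}}\cdot\frac{1}{Mn}\sum_{m=1}^{M}\sum_{i=0}^{n-1}\left\|x_{t,m}^i-x_t\right\|^2
\end{align*}
on top of the $L^3\sstepsquared\cdot\frac{1}{Mn}\sum_m\sum_i\|x_{t,m}^i-x_t\|^2$ already produced by the mean piece. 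The partial-participation factor being at most one does not let you absorb this for free: the combined coefficient is $L^3\sstepsquared\bigl(1+\frac{M-C}{C\max\{M-1,1\}}\bigr)$, which reaches $2L^3\sstepsquared$ (e.g.\ for $C=1$), whereas the lemma claims exactly $L^3\sstepsquared$. So as written you prove a version of the lemma with a coefficient up to twice as large on the first term, which would propagate into the stepsize restrictions and numerical constants of Theorem~\ref{thm:PP-NC}. The fix is precisely the paper's ordering: split off the drift at $x_t$ before invoking the sampling lemma, so that the without-replacement variance is taken only over the fixed vectors $\nabla f_m(x_t)$ and contributes no additional distance-sum term.
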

\begin{proof}
	We start from Young's inequality and then we use Jensen's inequality:
\begin{align*}
	\frac{L}{2}\sstepsquared\mathbb{E}_{S_t}\left[\left\| \frac{1}{Cn} \sum_{m\in S_t} \sum_{i=0}^{n-1}  \nabla f_m^{\pi_m^i} (x_{t,m}^i) \right\|^2\right]&\overset{\eqref{eq:sqnorm-triangle-2}}{\leq} L\sstepsquared	\mathbb{E}_{S_t}\left[\left\| \frac{1}{Cn} \sum_{m\in S_t} \sum_{i=0}^{n-1}  \left(\nabla f_m^{\pi_m^i} (x_{t,m}^i) - \nabla f_m^{\pi_m^i} (x_t) \right)\right\|^2\right]\\
	&+L\sstepsquared	\mathbb{E}_{S_t}\left[\left\| \frac{1}{Cn} \sum_{m\in S_t} \sum_{i=0}^{n-1}  \nabla f_m^{\pi_m^i} (x_t) \right\|^2\right]\\
	&\overset{\eqref{eq:jensen}}{\leq}L\sstepsquared	\mathbb{E}_{S_t}\left[ \frac{1}{Cn} \sum_{m\in S_t} \sum_{i=0}^{n-1}\left\|  \nabla f_m^{\pi_m^i} (x_{t,m}^i) - \nabla f_m^{\pi_m^i} (x_t) \right\|^2\right]\\
		&+L\sstepsquared	\mathbb{E}_{S_t}\left[\left\| \frac{1}{Cn} \sum_{m\in S_t} \sum_{i=0}^{n-1}  \nabla f_m^{\pi_m^i} (x_t) \right\|^2\right]\\
			&\overset{\eqref{eq:nabla-Lip}}{\leq}L^3\sstepsquared	\mathbb{E}_{S_t}\left[ \frac{1}{Cn} \sum_{m\in S_t} \sum_{i=0}^{n-1}\left\|   x_{t,m}^i - x_t \right\|^2\right]\\
		&+L\sstepsquared	\mathbb{E}_{S_t}\left[\left\| \frac{1}{Cn} \sum_{m\in S_t} \sum_{i=0}^{n-1}  \nabla f_m^{\pi_m^i} (x_t) \right\|^2\right].
\end{align*}
Taking expectations and using Lemma~\ref{lem:sampling_wo_replacement} we get
\begin{align*}
		\frac{L}{2}\sstepsquared\mathbb{E}_{S_t}\left[\left\| \frac{1}{Cn} \sum_{m\in S_t} \sum_{i=0}^{n-1}  \nabla f_m^{\pi_m^i} (x_{t,m}^i) \right\|^2 \right]			&\overset{\eqref{eq:sampling_wo_replacement}}{\leq}L^3\sstepsquared	\frac{1}{Mn} \sum^M_{m=1} \sum_{i=0}^{n-1}\left\|   x_{t,m}^i - x_t \right\|^2\\
		&+L\sstepsquared\left( \nabla f(x_t) + \frac{M-C}{C\max\left\lbrace M-1,1  \right\rbrace} \sigma_{t}^2\right)
\end{align*}
Next, we follow steps of Proposition 2 from~\citet{MKR2020rr}. Using the definition $\sigma_{t}^2 = \frac{1}{M} \sum_{m=1}^{M}\left\|\nabla f_{m}\left(x_{t}\right)-\nabla f\left(x_{t}\right)\right\|^{2} $ we obtain
\begin{align*}
			\frac{L}{2}\sstepsquared\mathbb{E}_{S_t}\left[\left\| \frac{1}{Cn} \sum_{m\in S_t} \sum_{i=0}^{n-1}  \nabla f_m^{\pi_m^i} (x_{t,m}^i) \right\|^2 \right]	&\leq L^3\sstepsquared	 \frac{1}{Mn} \sum^M_{m=1} \sum_{i=0}^{n-1}\left\|   x_{t,m}^i - x_t \right\|^2\\
			&+L\sstepsquared\left( \|\nabla f(x_t)\|^2 + \frac{M-C}{C\max\left\lbrace M-1,1  \right\rbrace} \frac{1}{M} \sum_{m=1}^{M}\left\|\nabla f_{m}\left(x_{t}\right)-\nabla f\left(x_{t}\right)\right\|^{2}\right)\\
			&\overset{\eqref{eq:variance-decomp}}{=} L^3\sstepsquared\frac{1}{Mn} \sum^M_{m=1} \sum_{i=0}^{n-1}\left\|   x_{t,m}^i - x_t \right\|^2\\
			&+L\sstepsquared\left( \| \nabla f(x_t)\|^2 + \frac{M-C}{C\max\left\lbrace M-1,1  \right\rbrace}\left( \frac{1}{M} \sum_{m=1}^{M}\left\|\nabla f_{m}\left(x_{t}\right)\right\|^{2}-\|\nabla f\left(x_{t}\right)\|^2\right)\right)\\
			&\leq  L^3\sstepsquared	 \frac{1}{Mn} \sum^M_{m=1} \sum_{i=0}^{n-1}\left\|   x_{t,m}^i - x_t \right\|^2\\
			&+L\sstepsquared\left( \|\nabla f(x_t)\|^2 + \frac{M-C}{C\max\left\lbrace M-1,1  \right\rbrace} \frac{1}{M} \sum_{m=1}^{M}\left\|\nabla f_{m}\left(x_{t}\right)\right\|^{2}\right)\\
			&\leq  L^3\sstepsquared	 \frac{1}{Mn} \sum^M_{m=1} \sum_{i=0}^{n-1}\left\|   x_{t,m}^i - x_t \right\|^2+L\sstepsquared \|\nabla f(x_t)\|^2\\
			&+L\sstepsquared \frac{M-C}{C\max\left\lbrace M-1,1  \right\rbrace}\left( 2L(f(x_t) - f_*)+2L\left(f_{*}-\frac{1}{M} \sum_{m=1}^{M} f_{*,m}\right)\right).
			\end{align*}
Finally, we get 
\begin{align*}
								\frac{L}{2}\sstepsquared\mathbb{E}_{S_t}\left[\left\| \frac{1}{Cn} \sum_{m\in S_t} \sum_{i=0}^{n-1}  \nabla f_m^{\pi_m^i} (x_{t,m}^i) \right\|^2 \right]	&\leq  L^3\sstepsquared	\mathbb{E}_{S_t}\left[ \frac{1}{Mn} \sum^M_{m=1} \sum_{i=0}^{n-1}\left\|   x_{t,m}^i - x_t \right\|^2\right]+L\sstepsquared \|\nabla f(x_t)\|^2\\
								&	+L\sstepsquared \frac{M-C}{C\max\left\lbrace M-1,1  \right\rbrace}\left( 2L(f(x_t) - f(x_*))+2L\Delta_*\right).
\end{align*}
\end{proof}

\begin{lemma}
	\label{lemma:V_t_non}
	Suppose that Algorithm~\ref{alg:pp-jumping} is used and Assumption~\ref{assump: L-smooth_1} holds. If $\cstep \leq \frac{1}{2Ln}$, then
	\begin{align*}
		\frac{1}{Mn}\sum\limits_{m=1}^{M}\sum\limits_{i=0}^{n-1} \mathbb{E}\left[\left\|x_t - x^i_{t,m}\right\|^2|x_t\right]\ &\leq 4\cstepsquared n^2L \left(f(x_t) - f_*\right) +2\cstepsquared n^2 L\Delta_{*} + 2 \cstepsquared  n L \frac{1}{M}\sum\limits_{m=1}^{M}\Delta_{*,m}.
	\end{align*}
\end{lemma}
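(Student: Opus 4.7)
The strategy is to reuse the general, smoothness-only part of the proof of Lemma~\ref{lemma:V_t}, and then switch to the non-convex tools (equation~\eqref{eq:grad-bound} and the definitions of $\Delta_*$, $\Delta_{*,m}$) precisely where that earlier proof invoked convexity and $L$-smoothness to expand around $x_*$.

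First, I would reproduce the chain of inequalities in the proof of Lemma~\ref{lemma:V_t} up to equation~\eqref{eq:32}. Starting from $x_{t,m}^i - x_t = -\cstep \sum_{j=0}^{i-1} \nabla f_m^{\pi_m^j}(x_{t,m}^j)$, applying~\eqref{eq:sqnorm-triangle-2}, Jensen's inequality, $L$-smoothness, and Lemma~\ref{lem:sampling_wo_replacement} to separate the ``drift'' term from $\sum_j \nabla f_m^{\pi_m^j}(x_t)$, then summing over $i$ and $m$ and absorbing the self-referential term using $\cstep \le \frac{1}{2Ln}$, one obtains exactly
\[
\frac{1}{Mn}\sum_{m=1}^{M}\sum_{i=0}^{n-1}\mathbb{E}\!\left[\|x^i_{t,m}-x_t\|^2 \mid x_t\right]
\;\le\; \cstepsquared n^2 \cdot \frac{1}{M}\sum_{m=1}^{M}\|\nabla f_m(x_t)\|^2 \;+\; \cstepsquared n \cdot \frac{1}{M}\sum_{m=1}^{M}\sigma_{t,m}^2,
\]
where $\sigma_{t,m}^2 = \frac{1}{n}\sum_i \|\nabla f_m^i(x_t) - \nabla f_m(x_t)\|^2$. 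Crucially, this bound does \emph{not} require convexity; it is the pure smoothness part of the earlier argument.

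Second, I would convert the two gradient quantities to functional suboptimalities using~\eqref{eq:grad-bound}, which applies to any $L$-smooth, lower-bounded function. For the first term, $\|\nabla f_m(x_t)\|^2 \le 2L(f_m(x_t)-f_{*,m})$, so that averaging and adding and subtracting $f_*$ yields $\frac{1}{M}\sum_m \|\nabla f_m(x_t)\|^2 \le 2L\bigl((f(x_t)-f_*)+\Delta_*\bigr)$ directly from the definition of $\Delta_*$. For the second term, use~\eqref{eq:variance-decomp} to drop the centering and write $\sigma_{t,m}^2 \le \frac{1}{n}\sum_i \|\nabla f_m^i(x_t)\|^2$; then~\eqref{eq:grad-bound} applied to each $f_m^i$ gives $\frac{1}{Mn}\sum_{m,i}\|\nabla f_m^i(x_t)\|^2 \le 2L\bigl((f(x_t)-f_*)+\frac{1}{M}\sum_m \Delta_{*,m}\bigr)$, again exactly by the definition of $\Delta_{*,m}$.

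Third, I would plug these two bounds back in and collect terms. The coefficient of $(f(x_t)-f_*)$ becomes $2\cstepsquared n L(n+1) \le 4\cstepsquared n^2 L$ for $n\ge 1$, the coefficient of $\Delta_*$ is $2\cstepsquared n^2 L$, and the coefficient of $\frac{1}{M}\sum_m \Delta_{*,m}$ is $2\cstepsquared n L$, matching the statement exactly.

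I do not anticipate a serious obstacle: the only subtlety is bookkeeping. In particular, one must verify that equation~\eqref{eq:grad-bound} is legitimately available for all the functions involved, which is guaranteed by Assumption~\ref{assump: L-smooth_1} (each $f_m^i$ and $f_m$ is $L$-smooth and bounded below), and one must be careful to use $\sigma_{t,m}^2 \le \frac{1}{n}\sum_i \|\nabla f_m^i(x_t)\|^2$ rather than trying to control the centered variance directly, since $\nabla f_m(x_t)$ has no useful upper bound in the non-convex setting unless we sacrifice the $(f(x_t)-f_*)$ structure.
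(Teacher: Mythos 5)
Your proposal is correct and follows essentially the same route as the paper: the paper's proof also starts from the convexity-free bound \eqref{eq:32} of Lemma~\ref{lemma:V_t}, applies \eqref{eq:grad-bound} to $\|\nabla f_m(x_t)\|^2$ and (after dropping the centering in $\sigma_{t,m}^2$) to each $\|\nabla f_m^i(x_t)\|^2$, and then adds and subtracts $f_*$ to produce the $\Delta_*$ and $\Delta_{*,m}$ terms with exactly the coefficient bookkeeping you describe. No gaps.
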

\begin{proof}
	We start from equation~\eqref{eq:32}. It is proved in section~\ref{section:C.1} but it is not required convexity:
	\begin{align*}
		\frac{1}{Mn}\sum\limits_{m=1}^{M}\sum\limits_{i=0}^{n-1}\mathbb{E}\left[\left\|x^i_{t,m} - x_t\right\|^2|x_t\right] 	&\leq \cstepsquared n^2\frac{1}{M}\sum\limits_{m=1}^{M}\left\|\nabla f_m(x_t)\right\|^2 + \cstepsquared  n \frac{1}{M}\sum\limits_{m=1}^{M}\sigma_{t,m}^2.
	\end{align*}
	Using $L$-smoothness, we get
	\begin{align*}
		\frac{1}{Mn}\sum\limits_{m=1}^{M}\sum\limits_{i=0}^{n-1}\mathbb{E}\left[\left\|x^i_{t,m} - x_t\right\|^2|x_t\right] &\leq 2\cstepsquared n^2 L \frac{1}{M}\sum_{m=1}^{M} (f_m(x_t) - f_{*,m} )+ 2\cstepsquared n L \frac{1}{M}\sum_{m=1}^{M} \frac{1}{n}\sum_{i=0}^{n-1} (f^{i}_m(x_t) - f^i_{*,m} )\\
		&\leq  2\cstepsquared n^2 L \frac{1}{M}\sum_{m=1}^{M} (f_m(x_t) - f_{*} ) +  2\cstepsquared n^2 L \frac{1}{M}\sum_{m=1}^{M} (f_* - f_{*,m} )\\
		&+ 2\cstepsquared n L \frac{1}{M}\sum_{m=1}^{M} \frac{1}{n}\sum_{i=0}^{n-1} (f^{i}_m(x_t) - f_* ) + 2\cstepsquared n L \frac{1}{M}\sum_{m=1}^{M} \frac{1}{n}\sum_{i=0}^{n-1} (f_* - f^i_{*,m} )\\
		&\leq  4 L\cstepsquared n^2 (f(x_t) - f_*) + 2\cstepsquared n^2 L \Delta_{*} + 2\cstepsquared nL \frac{1}{M}\sum_{m=1}^{M}\Delta_{*,m}.
	\end{align*}

\end{proof}
\begin{lemma}
	\label{lemma:noncvx-recursion-solution}
	Suppose that there exists constants $a, b, c \geq 0$ and nonnegative sequences $(s_{t})_{t=0}^{T}, (q_{t})_{t=0}^{T}$ such that for any $t \in \{ 0, 1, \ldots, T \}$
	\begin{equation}
		\label{eq:non-convex-recursion-init}
		s_{t+1} \leq \br{1 + a} s_{t} - b q_{t} + c.
	\end{equation}
	Then if $a > 0$ we have,
	\begin{equation}
		\label{eq:non-convex-recursion-soln}
		\min_{t=0, \ldots, T-1} q_{t} \leq \frac{\br{1+a}^T}{b T} s_{0} + \frac{c}{b}.
	\end{equation}
	And if $a = 0$ we have,
	\begin{equation}
		\label{eq:weakly-convex-recursion-soln}
		\frac{1}{T} \sum\limits_{t=0}^{T-1} q_{t} \leq \frac{s_0}{b T} + \frac{c}{b}.
	\end{equation}
\end{lemma}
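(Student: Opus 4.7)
The recursion $s_{t+1}\le (1+a)s_t - bq_t + c$ is a standard ``descent with slack and growth'' inequality, and I would handle the two cases separately, doing the easier $a=0$ case first as a warm-up.

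For $a=0$, rearrange to $bq_t \le s_t - s_{t+1} + c$ and sum for $t=0,\ldots,T-1$. The right-hand side telescopes to $s_0 - s_T + Tc$, and using $s_T\ge 0$ (since the sequence is nonnegative) gives $b\sum_{t=0}^{T-1} q_t \le s_0 + Tc$. Dividing by $bT$ yields $\frac{1}{T}\sum_{t=0}^{T-1} q_t \le \frac{s_0}{bT} + \frac{c}{b}$, which is exactly \eqref{eq:weakly-convex-recursion-soln}. This step is mechanical and requires no new ideas.

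For $a>0$ the telescoping is blocked by the multiplicative factor $(1+a)$ in front of $s_t$, so I would rescale. Define $\tilde s_t \eqdef (1+a)^{-t} s_t$; then dividing the recursion by $(1+a)^{t+1}$ gives
\begin{equation*}
\tilde s_{t+1} + \frac{b\, q_t}{(1+a)^{t+1}} \le \tilde s_t + \frac{c}{(1+a)^{t+1}}.
\end{equation*}
Now the left and right telescope cleanly. Summing $t=0,\ldots,T-1$, dropping the nonnegative $\tilde s_T$, and using the geometric-series bound $\sum_{t=0}^{T-1}(1+a)^{-(t+1)}\le \tfrac{1}{a}$ produces
\begin{equation*}
b \sum_{t=0}^{T-1} \frac{q_t}{(1+a)^{t+1}} \le s_0 + \frac{c}{a}.
\end{equation*}
To convert this weighted sum into a bound on $\min_t q_t$, I would use the monotone-weight trick: for every $t\le T-1$ we have $(1+a)^{-(t+1)}\ge (1+a)^{-T}$, so the left side is at least $\frac{bT}{(1+a)^T}\min_t q_t$. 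Rearranging delivers a bound of the form $\min_t q_t \le \frac{(1+a)^T}{bT}\bigl(s_0 + \tfrac{c}{a}\bigr)$.

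\textbf{Main obstacle.} Matching the precise statement \eqref{eq:non-convex-recursion-soln} is the subtle point: the clean weighted-sum argument gives a $\tfrac{c(1+a)^T}{abT}$ term on the noise, whereas the lemma claims $\tfrac{c}{b}$. Reconciling them requires observing that in the intended regime the parameter $a$ is chosen so that $aT=\mathcal{O}(1)$ (indeed, in Theorem~\ref{thm:PP-NC} one has $a=\mathcal{O}(\sstep\cstepsquared n^2 L^3)$ while $\sstep T$ is the effective horizon), so that $\tfrac{(1+a)^T}{aT}$ is an absolute constant and the two expressions agree up to constant factors that can be absorbed into $c$. An alternative route, which I would try if the first fails, is to isolate the two contributions: iterate the recursion to derive $s_t\le (1+a)^t s_0 + \tfrac{c((1+a)^t - 1)}{a}$, substitute into $bq_t\le (1+a)s_t - s_{t+1} + c$, and bound the resulting minimum directly, using $-s_{t+1}\le 0$ only where it does not inflate the noise term. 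Either way, the key mechanism is the rescaling by $(1+a)^{-t}$ to restore telescoping, with all of the technical overhead sitting in how one extracts $\min_t q_t$ from the weighted average.
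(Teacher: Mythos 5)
Your $a=0$ case is correct and matches the paper's argument exactly. For $a>0$, your rescaling by $(1+a)^{-t}$ is precisely the paper's device (they multiply the recursion by weights $w_t = w_0(1+a)^{-t}$ and telescope), but your final extraction step does not deliver the lemma. After telescoping, you bound the noise sum $\sum_{t=0}^{T-1}(1+a)^{-(t+1)}$ by the geometric limit $\tfrac{1}{a}$ and only then pass to $\min_t q_t$, which produces the term $\tfrac{c(1+a)^T}{abT}$ in place of $\tfrac{c}{b}$. This is a genuinely weaker conclusion: as $a\to 0^{+}$ the factor $\tfrac{(1+a)^T}{aT}$ blows up, whereas the lemma's bound is uniform in $a$. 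Your proposed repair---that $aT=\mathcal{O}(1)$ in the intended application so the discrepancy can be absorbed into constants---proves a different, conditional statement; the lemma carries no hypothesis relating $a$ and $T$, and it is invoked as stated.

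The missing step is exactly what the paper does: instead of bounding the total weight from above, \emph{normalize by it}. Let $W \eqdef \sum_{t=0}^{T-1}(1+a)^{-(t+1)}$ and divide your telescoped inequality
\begin{equation*}
b \sum_{t=0}^{T-1} (1+a)^{-(t+1)} q_t \;\le\; s_0 + c\,W
\end{equation*}
by $W$. The left-hand side becomes a weighted average of the $b q_t$ and is therefore at least $b \min_{t} q_t$; the noise term becomes exactly $c$, with no $1/a$; and the initial-condition term satisfies $s_0/W \le (1+a)^T s_0 / T$ because each weight obeys $(1+a)^{-(t+1)} \ge (1+a)^{-T}$, whence $W \ge T(1+a)^{-T}$. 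This yields \eqref{eq:non-convex-recursion-soln} verbatim. So the mechanism you identified (rescaling to restore telescoping, monotone weights to extract the minimum) is the right one; the single missing idea is that the minimum must be pulled out of the \emph{normalized} weighted average, so that the per-step slack $c$ enters with total weight one rather than being summed against $\tfrac{1}{a}$.
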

\begin{proof}
	The first part of the proof (for $a > 0$) is a distillation of the recursion solution in Lemma~2 of \citep{Khaled2020} and we closely follow their proof.  Let $w_{-1} = w_{0} > 0$ be arbitrary. Define 
	\[ w_{t} \eqdef \frac{w_0}{\br{1+a}^{t}}. \]
	Note that $w_{t} \br{1+a} = w_{t-1}$. Multiplying both sides of \eqref{eq:non-convex-recursion-init} by $w_{t}$,
	\begin{align*}
		w_{t} s_{t+1} &\leq \br{1 + a} w_t s_t - b w_t q_t + c w_t \\
		&= w_{t-1} s_{t} - b w_{t} q_t + c w_t.
	\end{align*}
	Rearranging,
	\begin{align*}
		b w_{t} q_{t} &\leq w_{t-1} s_{t} - w_{t} s_{t+1} + c w_{t}.
	\end{align*}
	Summing up as $t$ varies from $0$ to $T-1$ and noting that the sum telescopes,
	\begin{align*}
		\sum\limits_{t=0}^{T-1} b w_{t} q_{t} &\leq \sum\limits_{t=0}^{T-1} \br{w_{t-1} s_{t} - w_{t} s_{t+1}} + c \sum\limits_{t=0}^{T-1} w_{t} = w_{0} s_{0} - w_{T-1} s_{T} + c \sum\limits_{t=0}^{T-1} w_{t} \leq w_{0} s_{0} + c \sum\limits_{t=0}^{T-1} w_{t}.
	\end{align*}
	Let $W_{T} = \sum\limits_{t=0}^{T-1} w_{t}$. Dividing both sides by $W_{T}$ we have,
	\begin{align}
		\frac{1}{W_{T}} \sum\limits_{t=0}^{T-1} b w_{t} q_{t} \leq \frac{w_0 s_0}{W_T} + c.
		\label{eq:nc-rec-1}
	\end{align}
	We now separate the proof into two cases:
	\begin{itemize}[leftmargin=0.2in,itemsep=0.01in]
		\item \textbf{If $a > 0$}: Note that the left-hand side of \eqref{eq:nc-rec-1} satisfies
		\begin{equation}
			b \min_{t=0, \ldots, T-1} q_t \leq \frac{1}{W_T} \sum\limits_{t=0}^{T-1} b w_t q_t.
			\label{eq:nc-rec-2}
		\end{equation}
		And for the right hand-side of \eqref{eq:nc-rec-1} we have,
		\begin{equation}
			W_{T} = \sum\limits_{t=0}^{T-1} w_{t} \geq T \min_{t=0, \ldots, T-1} w_{t} = T w_{T-1} \geq T w_{T} = \frac{T w_0}{\br{1+a}^{T}}.
			\label{eq:nc-rec-3}
		\end{equation}
		Substituting with \eqref{eq:nc-rec-3} in \eqref{eq:nc-rec-2} and dividing both sides by $b$ we get,
		\begin{align*}
			\min_{t=0, \ldots, T-1} q_{t} \leq \frac{\br{1 + a}^T}{bT} s_0 + \frac{c}{b}.
		\end{align*}
		\item \textbf{If $a = 0$}: then $w_{t} = w_0$ for all $t$ and hence $w_{T} = T$, then \eqref{eq:nc-rec-2} is equivalent to
		\[ \frac{1}{T} \sum\limits_{t=0}^{T-1} b q_{t} \leq \frac{s_0}{T} + c.  \]
		Dividing both sides by $b$ yields the lemma's claim.
		\qedhere
	\end{itemize}
\end{proof}

\subsubsection{Proof of Theorem~\ref{thm:PP-NC}}

\begin{theorem_empt}
	Let Assumption of smoothness hold. Let $\delta_0 = f(x_0) - f_*$ and $\Delta_{*,m} = \frac{1}{n}\sum\limits_{i=1}^{n}(f_* - f^i_{*,m})$. Let $\cstep \leq \frac{1}{2nL}$ and $\sstep \leq \frac{1}{4L}$. Then for iterates $x_t$ of Algorithm~\ref{alg:pp-jumping}, we have
	\begin{align*}
		\min _{t = 0, \ldots, T-1}  \mathbb{E}\left[\left\|\nabla f\left(x_{t}\right)\right\|^{2} \right]&\leq  \squeeze 8L^2\sstep  \frac{M-C}{C\max\{M-1,1\}} \Delta_*\\
		&  +6\cstepsquared  n L^3 \left(\frac{1}{M}\sum\limits_{m=1}^{M}\Delta_{*,m}+n\Delta_{*}\right)+\frac{4\left(1+\frac{2L^2\sstepsquared(M-C)}{C\max\left\lbrace M-1,1 \right\rbrace }+\frac{3}{2}\sstep\cstepsquared n^2L^3\right)^T}{\sstep  T} \delta_0.
	\end{align*}

\end{theorem_empt}
\begin{proof}
	We start from $L$-smoothness~\eqref{eq:L-smoothness}:
		\begin{align*}
		f(x_{t+1}) &\overset{\eqref{eq:L-smoothness}}{\leq} f(x_t)+\left\langle \nabla f(x_t), x_{t+1} - x_t \right\rangle + \frac{L}{2}\| x_{t+1} - x_t \|^2\\
		&= f(x_t) - \left\langle \nabla f(x_t), \sstep \frac{1}{Cn} \sum_{m\in S_t} \sum\limits_{i=0}^{n-1} \nabla f_{m}^{\pi^i_m}\left(x^i_{t,m}\right) \right\rangle + \frac{L}{2}\left\| \sstep \frac{1}{Cn}\sum_{m\in S_t}\sum\limits_{n=0}^{n-1}\nabla f_{m}^{\pi^i_m}\left(x^i_{t,m}\right) \right\|^2\\
		&= f(x_t) -  \sstep \left\langle \nabla f(x_t), \frac{1}{Cn} \sum_{m\in S_t} \sum\limits_{i=0}^{n-1} \nabla f_{m}^{\pi^i_m}\left(x^i_{t,m}\right) \right\rangle + \frac{L}{2} \sstepsquared\left\| \frac{1}{Cn}\sum_{m\in S_t}\sum\limits_{n=0}^{n-1}\nabla f_{m}^{\pi^i_m}\left(x^i_{t,m}\right) \right\|^2.
	\end{align*}
Taking conditional expectation over cohort $S_t$, we get 
\begin{align*}
			\mathbb{E}_{S_t}\left[f(x_{t+1})\right] &\leq f(x_t) -  \sstep\mathbb{E}_{S_t}\left[ \left\langle \nabla f(x_t), \frac{1}{Cn} \sum_{m\in S_t} \sum\limits_{i=0}^{n-1} \nabla f_{m}^{\pi^i_m}\left(x^i_{t,m}\right) \right\rangle \right]+ \frac{L}{2} \sstepsquared\mathbb{E}_{S_t}\left[\left\| \frac{1}{Cn}\sum_{m\in S_t}\sum\limits_{n=0}^{n-1}\nabla f_{m}^{\pi^i_m}\left(x^i_{t,m}\right) \right\|^2\right]\\
			&= f(x_t) -  \sstep \left\langle \nabla f(x_t), \frac{1}{Mn} \sum^{M}_{m=1} \sum\limits_{i=0}^{n-1} \nabla f_{m}^{\pi^i_m}\left(x^i_{t,m}\right) \right\rangle + \frac{L}{2} \sstepsquared\mathbb{E}_{S_t}\left[\left\| \frac{1}{Cn}\sum_{m\in S_t}\sum\limits_{n=0}^{n-1}\nabla f_{m}^{\pi^i_m}\left(x^i_{t,m}\right) \right\|^2\right].
\end{align*}
	Using $2\left\langle a,b \right\rangle =  \|a+b\|^2 - \|a\|^2 - \|b\|^2$, we have 
\begin{align*}
\mathbb{E}_{S_t}\left[	f(x_{t+1})\right]	& = f(x_t) +  \frac{L}{2} \sstepsquared\mathbb{E}_{S_t}\left[\left\| \frac{1}{Cn}\sum_{m\in S_t}\sum\limits_{n=0}^{n-1}\nabla f_{m}^{\pi^i_m}\left(x^i_{t,m}\right) \right\|^2 \right]\\
	& -\left( \frac{\sstep}{2}\|\nabla f(x_t)\|^2 + \frac{\sstep}{2}\left\| \frac{1}{Mn} \sum\limits_{m=1}^{M}\sum\limits_{i=0}^{n-1} \nabla f_{m}^{\pi^i_m}\left( x^i_{t,m} \right)  \right\|^2\right) + \frac{\sstep}{2} \left\|\nabla f(x_t) -  \frac{1}{Mn} \sum\limits_{m=1}^{M}\sum\limits_{i=0}^{n-1} \nabla f_{m}^{\pi^i_m}\left( x^i_{t,m} \right) \right\|^2 \\
	&\leq f(x_t) +  \frac{L}{2} \sstepsquared \mathbb{E}_{S_t}\left[\left\| \frac{1}{Cn}\sum_{m\in S_t}\sum\limits_{n=0}^{n-1}\nabla f_{m}^{\pi^i_m}\left(x^i_{t,m}\right) \right\|^2\right] \\
	& -\left( \frac{\sstep}{2}\|\nabla f(x_t)\|^2 + \frac{\sstep}{2}\left\| \frac{1}{Mn} \sum\limits_{m=1}^{M}\sum\limits_{i=0}^{n-1} \nabla f_{m}^{\pi^i_m}\left( x^i_{t,m} \right)  \right\|^2\right)  + \frac{\sstep}{2} \left\|\frac{1}{Mn} \sum\limits_{m=1}^{M}\sum\limits_{i=0}^{n-1} \left( \nabla f_{m}^{\pi^i_m}\left( x^i_{t,m} \right) - \nabla f_{m}^{\pi^i_m}\left( x_{t} \right) \right)   \right\|^2.
\end{align*}
Using $L$-smoothness, we get 
\begin{align*}
	\mathbb{E}_{S_t}\left[ f(x_{t+1})\right] &\leq f(x_t) + \frac{L}{2} \sstepsquared\mathbb{E}_{S_t}\left[\left\| \frac{1}{Cn}\sum_{m\in S_t}\sum\limits_{n=0}^{n-1}\nabla f_{m}^{\pi^i_m}\left(x^i_{t,m}\right) \right\|^2\right]\\
	&- \frac{\sstep}{2}\|\nabla f(x_t)\|^2 +\frac{\sstep}{2} L^2 \frac{1}{Mn}\sum\limits_{m=1}^{M}\sum\limits_{i=0}^{n-1} \left\| x^i_{t,m} - x_t \right\|^2.
\end{align*}
Utilizing Lemma~\ref{lemma:S_t} and taking conditional expectation, we get
\begin{align*}
		\mathbb{E}\left[ f(x_{t+1}) | x_t\right]  &\leq f(x_t) +  L^3\sstepsquared	\frac{1}{Mn}\sum\limits_{m=1}^{M}\sum\limits_{i=0}^{n-1} 		\mathbb{E}\left[ \left\| x^i_{t,m} - x_t \right\|^2 |x_t\right]+L\sstepsquared \|\nabla f(x_t)\|^2\\
		&	+L\sstepsquared \frac{M-C}{C\max\left\lbrace M-1,1  \right\rbrace}\left( 2L(f(x_t) - f(x_*))+2L\Delta_*\right)\\
		& - \frac{\sstep}{2}\|\nabla f(x_t)\|^2 +\frac{\sstep}{2} L^2 \frac{1}{Mn}\sum\limits_{m=1}^{M}\sum\limits_{i=0}^{n-1} 		\mathbb{E}\left[ \left\| x^i_{t,m} - x_t \right\|^2 |x_t\right]\\
		&\leq f(x_t) + \frac{3}{4}\sstep L^2 \frac{1}{Mn}\sum\limits_{m=1}^{M}\sum\limits_{i=0}^{n-1} 		\mathbb{E}\left[ \left\| x^i_{t,m} - x_t \right\|^2 |x_t\right] - \frac{\sstep}{4}\|\nabla f(x_t)\|^2\\
				&	+L\sstepsquared \frac{M-C}{C\max\left\lbrace M-1,1  \right\rbrace}\left( 2L(f(x_t) - f(x_*))+2L\Delta_*\right).\\
\end{align*}
Applying Lemma~\ref{lemma:V_t_non} and using $\sstep\leq \frac{1}{4L}$ we get 
\begin{align*}
	\mathbb{E}\left[ f(x_{t+1}) | x_t\right]  &\leq f(x_t) +L\sstepsquared \frac{M-C}{C\max\left\lbrace M-1,1  \right\rbrace}\left( 2L(f(x_t) - f(x_*))+2L\Delta_*\right)\\
	& - \frac{\sstep}{4}\|\nabla f(x_t)\|^2 +\frac{3\sstep}{4} L^2 \left(4 L\cstepsquared n^2 (f(x_t) - f_*) + 2\cstepsquared n^2 L \Delta_{*} + 2\cstepsquared nL \frac{1}{M}\sum_{m=1}^{M}\Delta_{*,m}\right).
\end{align*}
Substracting $f_*$ from both side leads to
\begin{align*}
	\mathbb{E}\left[ f(x_{t+1}) | x_t\right] - f_* &\leq f(x_t) - f_* +L\sstepsquared \frac{M-C}{C\max\left\lbrace M-1,1  \right\rbrace}\left( 2L(f(x_t) - f(x_*))+2L\Delta_*\right)\\
	& - \frac{\sstep}{4}\|\nabla f(x_t)\|^2 +\frac{3\sstep}{4} L^2 \left(4 L\cstepsquared n^2 (f(x_t) - f_*) + 2\cstepsquared n^2 L \Delta_{*} + 2\cstepsquared nL \frac{1}{M}\sum_{m=1}^{M}\Delta_{*,m}\right).
\end{align*}

 Taking full expectation, we have 
\begin{align*}
	\mathbb{E}\left[ \delta_{t+1}\right]  &\leq \left(1+\frac{2L^2\sstepsquared}{C}+\frac{3}{2}\sstep\cstepsquared n^2L^3\right)\mathbb{E}\left[\delta_{t}\right]- \frac{\sstep}{4}\mathbb{E}\left[\|\nabla f(x_t)\|^2\right]\\
	 &+ 2L^2\sstepsquared  \frac{M-C}{C\max\left\lbrace M-1,1  \right\rbrace} \Delta_*  + \frac{3}{2}\sstep\cstepsquared n^2 L^3 \Delta_{*} + \frac{3}{2}\sstep\cstepsquared nL^3 \frac{1}{M}\sum_{m=1}^{M}\Delta_{*,m} .
\end{align*}
Applying Lemma~\ref{lemma:noncvx-recursion-solution} from \citet{MKR2020rr}, we get 
	\begin{align*}
	\min _{t = 0, \ldots, T-1} &\mathbb{E}\left[\left\|\nabla f\left(x_{t}\right)\right\|^{2} \right]\leq \frac{4\left(1+\frac{2L^2\sstepsquared}{C}+\frac{3}{2}\sstep\cstepsquared n^2L^3\right)^T}{\sstep  T} \delta_0  + 6\cstepsquared  n L^3 \left(\frac{1}{M}\sum\limits_{m=1}^{M}\Delta_{*,m}+n\Delta_{*}\right)\\
	& + 8L^2\sstep  \frac{M-C}{C\max\{M-1,1\}} \Delta_*.
\end{align*}

\end{proof}

\section{Small Server Stepsize}
In this section, we present a result when it is useful to pull back the last iterates of local passes. In particular, we show that one can reduce the variance of FedAvg with uniform partial participation.

\begin{theorem}\label{th:small_alpha}
	Assume that all losses $f_{m,i}$ are $L$-smooth and $\mu$-strongly convex. Define $\alpha  = \frac{\sstep}{\cstep n}$. Let $\cstep\leq \frac{1}{L}$ and $0\leq \alpha <1$. Then, for iterates $x_t$ generated by Algorithm~\ref{alg:pp-jumping}, we have 
	\begin{align*}
		\mathbb{E}\left[\left\|x_{T}-x_{*}\right\|^{2}\right]& \leq\left(1-\alpha+\alpha(1-\cstep \mu)^{n}\right)^{T}\left\|x_{0}-x_{*}\right\|^{2}\\
		& +\frac{\alpha}{(1-\alpha)\left(1-(1-\cstep \mu)^{n}\right)} \cstepsquared \frac{M-C}{C\max\left\lbrace M-1,1 \right\rbrace} \sigma_{*}^{2} +2 \cstepcubed \sigma_{\operatorname{rad}}^{2} \frac{1}{1-(1-\cstep \mu)^{n}} \sum\limits_{i=0}^{n-1}(1-\cstep \mu)^{i}.
	\end{align*}
\end{theorem}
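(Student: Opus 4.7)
The plan is to exploit the convex-combination structure of the outer update when $\alpha<1$. Since $\sstep=\alpha\,\cstep n$, the server step in Algorithm~\ref{alg:pp-jumping} can be rewritten as
\begin{equation*}
x_{t+1} \;=\; (1-\alpha)\, x_t \;+\; \alpha\cdot \frac{1}{C}\sum_{m\in S_t} x_{t,m}^n,
\end{equation*}
so $x_{t+1}$ is a genuine convex combination of $x_t$ and the averaged local endpoints. My first step is to apply Jensen's inequality to $\|\cdot\|^2$ to obtain
\begin{equation*}
\|x_{t+1}-x_*\|^2 \;\le\; (1-\alpha)\,\|x_t-x_*\|^2 + \alpha\,\Bigl\|\tfrac{1}{C}\!\sum_{m\in S_t}(x_{t,m}^n-x_*)\Bigr\|^2,
\end{equation*}
which already produces the $(1-\alpha)$ piece of the target contraction for free and eliminates any quadratic dependence on the server stepsize.

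Second, I would take conditional expectation over the cohort $S_t$ and invoke Lemma~\ref{lem:sampling_wo_replacement} with $X_m = x_{t,m}^n - x_*$ to split the remaining term into a mean part $\|\tfrac{1}{M}\sum_m(x_{t,m}^n - x_*)\|^2$ plus a client-sampling variance proportional to $\tfrac{M-C}{C\max\{M-1,1\}}\cdot \tfrac{1}{M}\sum_m \|x_{t,m}^n - \bar x_t^n\|^2$, where $\bar x_t^n$ denotes the full-client average. Jensen once more bounds the mean part by $\tfrac{1}{M}\sum_m \|x_{t,m}^n - x_*\|^2$, so both parts ultimately reduce to single-client RR behavior.

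Third, and this is the key technical step, I would apply a one-epoch Random Reshuffling contraction on each $L$-smooth, $\mu$-strongly convex $f_m$, following the Lyapunov-style argument of \citet{MKR2020rr} and \citet{mishchenko2021proximal}. A single local epoch started from $x_t$ obeys
\begin{equation*}
\mathbb{E}\|x_{t,m}^n - x_*\|^2 \;\le\; (1-\cstep\mu)^n\,\|x_t - x_*\|^2 \;+\; 2\cstepcubed \sigma_{\mathrm{rad}}^2 \sum_{i=0}^{n-1}(1-\cstep\mu)^i,
\end{equation*}
with an additional cross-client dispersion term that, after averaging over $m$, contributes the $\cstepsquared\sigma_*^2$ coming from the heterogeneity $\nabla f_m(x_*)\neq 0$. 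Separating this dispersion from the contracting $\|x_t-x_*\|^2$ term via Young's inequality with parameter $\alpha/(1-\alpha)$ is the source of the $\tfrac{1}{1-\alpha}$ factor in the final bound.

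Combining the three steps yields a geometric recursion of the form
\begin{equation*}
\mathbb{E}\|x_{t+1}-x_*\|^2 \;\le\; \bigl(1-\alpha+\alpha(1-\cstep\mu)^n\bigr)\,\mathbb{E}\|x_t-x_*\|^2 \;+\; N,
\end{equation*}
with $N$ collecting the RR per-epoch noise and the partial-participation noise. Unrolling and summing the geometric series — whose common ratio equals $1-\alpha(1-(1-\cstep\mu)^n)$ — yields exactly the $\tfrac{1}{1-(1-\cstep\mu)^n}$ denominator appearing in the statement. The main obstacle will be the third step: combining the single-client RR contraction with the cross-client dispersion bound so that the $\sigma_*^2$ contribution is isolated with the right $\cstepsquared$ scaling (rather than a larger $\cstepsquared n^2$), and propagating the Young's-inequality constant $\alpha/(1-\alpha)$ cleanly through the recursion so that it appears only once in the final bound.
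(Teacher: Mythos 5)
Your first step coincides with the paper's: writing $x_{t+1}=(1-\alpha)x_t+\alpha\cdot\frac{1}{C}\sum_{m\in S_t}x^n_{t,m}$ and using convexity of $\sqn{\cdot}$ to peel off the $(1-\alpha)$ contraction. After that, however, your route has a genuine gap. You propose to bound the mean part by Jensen over clients and then apply a one-epoch \algname{RR} contraction of each $\|x^n_{t,m}-x_*\|^2$ toward $x_*$. But the local epoch on client $m$ does not contract toward $x_*$: its natural fixed point is the shifted point $x_*-\cstep n\nabla f_m(x_*)$, since $\nabla f_m(x_*)\neq 0$ under heterogeneity. If you force the comparison to $x_*$ via Young's inequality, you pay $(1+\zeta^{-1})\cstepsquared n^2\|\nabla f_m(x_*)\|^2$ per client, and after averaging this leaves a residual term of order $\cstepsquared n^2\sigma_*^2$ that does \emph{not} carry the factor $\frac{M-C}{C\max\{M-1,1\}}$ and hence survives even at full participation $C=M$ --- contradicting the theorem, whose middle term vanishes there. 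The Jensen step over clients is precisely what destroys the cancellation $\frac{1}{M}\sum_m\nabla f_m(x_*)=\nabla f(x_*)=0$ that makes the full-participation case clean; moreover the $(1+\zeta)$ multiplier from Young degrades the contraction factor $(1-\cstep\mu)^n$ itself.

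The paper avoids all of this with a different decomposition: it writes $x_*$ as the $\bigl((1-\alpha),\alpha\bigr)$-convex combination of the two \emph{shifted} reference points $x_*+\frac{\alpha}{1-\alpha}\cstep n\nabla f_{S_t}(x_*)$ and $x_*-\cstep n\nabla f_{S_t}(x_*)$ (the shifts cancel exactly in the weighted sum), then applies Jensen to the pair $\bigl(x_t,\ \text{averaged endpoint}\bigr)$ against these shifted targets. The $\alpha$-weighted term is then a comparison of the \algname{RR} epoch to its natural fixed point, giving the clean $(1-\cstep\mu)^n$ factor and the $\sigma_{\mathrm{rad}}^2$ noise; the heterogeneity is booked entirely in the $(1-\alpha)$-weighted term as a zero-mean cohort-sampling fluctuation, where Lemma~\ref{lem:sampling_wo_replacement} applied to $\nabla f_{S_t}(x_*)$ yields the $\frac{M-C}{C\max\{M-1,1\}}\sigma_*^2$ variance. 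The coefficient $\frac{\alpha}{1-\alpha}$ in the final bound arises from $(1-\alpha)\cdot\frac{\alpha^2}{(1-\alpha)^2}$ divided by the geometric-series factor $\alpha\bigl(1-(1-\cstep\mu)^n\bigr)$, not from a Young parameter. To repair your argument you would need to replace the per-client comparison to $x_*$ with a comparison to the shifted points and keep the cohort average intact until the cohort expectation is taken.
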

\begin{proof}
	Let us denote $f_{S_t} = \frac{1}{C}\sum\limits_{m\in S_t} f_m$. We start by rewriting the distance to the optimum in the following way:
	\begin{align*}
		x_{t+1} - x_*
		&= (1-\alpha) x_t + \alpha x_t^n - x_* \\
		&= (1-\alpha) x_t + \alpha x_t^n - (1-\alpha) \left(x_* + \frac{\alpha}{1-\alpha}\cstep n\nabla f_{S_t}(x_*)\right) - \alpha (x_* - \cstep n \nabla f_{S_t}(x_*)).
	\end{align*}
	Therefore, by convexity of the squared norm,
	\begin{align*}
		\|x_{t+1} - x_*\|^2
		\le (1-\alpha) \|x_t - \left(x_* + \frac{\alpha}{1-\alpha}\cstep n\nabla f_{S_t}(x_*)\right) \|^2 + \alpha \|x_t^n - (x_* - \cstep n\nabla f_{S_t}(x_*))\|^2.
	\end{align*}
	We bound the two terms in the right-hand side separately. For the first term, it suffices to take expectation over the sampling of client cohort $S_t$,
	\begin{align*}
		\E_{S_t}\|x_t - \left(x_* + \frac{\alpha}{1-\alpha}\cstep n\nabla f_{S_t}(x_*)\right) \|^2
		&\overset{\eqref{eq:rv_moments}}{=} \|x_t - x_*\|^2 + \frac{\alpha^2}{(1-\alpha)^2}\cstepsquared  n^2\E_{S_t}\| \nabla f_{S_t}(x_*)	\|^2  \\
		&= \|x_t - x_*\|^2 + \frac{\alpha^2}{(1-\alpha)^2}\cstepsquared  n^2\frac{M-C}{C\max\left\lbrace M-1,1\right\rbrace}\sigma_{*}^2.
	\end{align*}
	For the second term, we use the results of prior work on convergence of RR that gives
	\begin{align*}
		\|x_t^n - (x_* - \cstep \nabla f_{S_t}(x_*))\|^2
		\le (1-\cstep \mu)^n \|x_t-x_*\|^2 + 2\cstepcubed  \sigma_{\mathrm{rad}}^2\sum\limits_{i=0}^{n-1}(1-\cstep\mu)^i,
	\end{align*}
	where, as shown by \cite{mishchenko2021proximal}, $\sigma_{\mathrm{rad}}\ge 0$ is some constant satisfying
	\begin{align*}
		\sigma_{\mathrm{rad}}^2
		\le L\sum\limits_{m=1}^M(n^2\|\nabla f_m(x_*)\|^2 + \frac{n}{4}\sigma_{*,m}^2).
	\end{align*}
Notice that the upper bound depends on $\alpha$ in a nonlinear way, so the optimal value of $\alpha$ would often lie somewhere in the interval $(0, 1)$. Recurrence $a_{t+1}\le (1-\rho)a_t + c$ implies by induction $a_t\le (1-\rho)^ta_0 + \frac{c}{\rho}$, so by propagating the bound above to $x_0$, we obtain
\begin{align*}
	\E\|x_{t}-x_*\|^2
	&\le (1-\alpha +\alpha(1-\cstep \mu)^n)^t\|x_0-x_*\|^2 + \frac{\alpha}{(1-\alpha)(1-(1-\cstep\mu)^n)}\cstepsquared \frac{M-C}{C\max\left\lbrace M-1,1 \right\rbrace}\sigma_{*}^2\\
	& + 2\cstepcubed  \sigma_{\mathrm{rad}}^2\frac{1}{1-(1-\cstep\mu)^n}\sum\limits_{i=0}^{n-1}(1-\cstep\mu)^i.
\end{align*}
Notice that the last term does not change with $\alpha$, so its optimal value is completely determined by the first two terms.
\end{proof}

\end{document}